\newtheorem{lemma}{Lemma}
\newtheorem{theorem}{Theorem}
\newtheorem{definition}{Definition}
\DeclareMathOperator{\prg}{prog}
\newcommand{\prog}[1]{\prg_{#1}}
\newcommand{\mathth}[0]{^\textrm{th}}
\def\set@curr@file#1{\def\@curr@file{#1}} 
\newcommand{\R}{\mathbb{R}}
\newcommand{\E}{\mathbb{E}}
\DeclarePairedDelimiter{\abs}{\lvert}{\rvert} %
\DeclarePairedDelimiter{\brk}{[}{]}
\DeclarePairedDelimiter{\crl}{\{}{\}}
\DeclarePairedDelimiter{\prn}{(}{)}
\DeclarePairedDelimiter{\nrm}{\|}{\|}
\DeclarePairedDelimiter{\norm}{\|}{\|}
\let\P\undefined
\DeclareMathOperator{\P}{\mathbb{P}}
\DeclareMathOperator*{\argmin}{arg\,min}
\newcommand{\mc}[1]{\mathcal{#1}}
\def\ddefloop#1{\ifx\ddefloop#1\else\ddef{#1}\expandafter\ddefloop\fi}
\def\ddef#1{\expandafter\def\csname 
bb#1\endcsname{\ensuremath{\mathbb{#1}}}}
\def\ddefloop#1{\ifx\ddefloop#1\else\ddef{#1}\expandafter\ddefloop\fi}
\def\ddef#1{\expandafter\def\csname 
b#1\endcsname{\ensuremath{\mathbf{#1}}}}
\def\ddef#1{\expandafter\def\csname 
c#1\endcsname{\ensuremath{\mathcal{#1}}}}
\def\ddef#1{\expandafter\def\csname 
h#1\endcsname{\ensuremath{\widehat{#1}}}}
\def\ddef#1{\expandafter\def\csname 
hc#1\endcsname{\ensuremath{\widehat{\mathcal{#1}}}}}
\def\ddef#1{\expandafter\def\csname 
t#1\endcsname{\ensuremath{\widetilde{#1}}}}
\def\ddef#1{\expandafter\def\csname 
tc#1\endcsname{\ensuremath{\widetilde{\mathcal{#1}}}}}
\newcommand{\indicator}[1]{\mathbbm{1}{\crl{#1}}}    
\newcommand{\inner}[2]{\left\langle #1,\, #2 \right\rangle}
\newcommand{\removed}[1]{}
\newenvironment{proofsketch}{\noindent\textbf{Proof Sketch}}{}
\title{The Min-Max Complexity of Distributed Stochastic\\ Convex Optimization with Intermittent Communication}
\title{\vspace{-2em}\rule{\linewidth}{2pt}\\ \textbf{The Min-Max Complexity of Distributed Stochastic\\ Convex Optimization with Intermittent Communication} \\\rule[8pt]{\linewidth}{1pt}\vspace{-0.7em}}
\author{\normalsize
\begin{minipage}{0.23\textwidth}
\centering
\textbf{Blake Woodworth}\\ 
Toyota Technological\\
Institute at Chicago \\
\small\url{blake@ttic.edu} \\ $ $
\end{minipage}
\begin{minipage}{0.23\textwidth}
\centering
\textbf{Brian Bullins} \\
Toyota Technological \\
Institute at Chicago \\
\small\url{bbullins@ttic.edu} \\ $ $
\end{minipage}
\begin{minipage}{0.23\textwidth}
\centering
\textbf{Ohad Shamir} \\
Weizmann Institute \\
of Science \\
\small\url{ohad.shamir@weizmann.ac.il} 
\end{minipage}
\begin{minipage}{0.23\textwidth}
\centering
\textbf{Nathan Srebro} \\
Toyota Technological \\
Institute at Chicago \\
\small\url{nati@ttic.edu} \\ $ $
\end{minipage}
\vspace{-4mm}
}
\date{}
\begin{document}

\maketitle

\begin{abstract}%
We resolve the min-max complexity of distributed stochastic convex optimization (up to a log factor) in the intermittent communication setting, where $M$ machines work in parallel over the course of $R$ rounds of communication to optimize the objective, and during each round of communication, each machine may sequentially compute $K$ stochastic gradient estimates. We present a novel lower bound with a matching upper bound that establishes an optimal algorithm. 
\end{abstract}

\section{Introduction}\label{sec:intro}


The min-max oracle complexity of stochastic convex optimization in a sequential (non-parallel) setting is very well-understood, and we have provably optimal algorithms that achieve the min-max complexity \citep{lan2012optimal,ghadimi2013optimal}.  However, we do not yet have an understanding of the min-max complexity of stochastic optimization in a {\em distributed} setting, where oracle queries and computation are performed by different workers, with limited communication between them. Perhaps the simplest, most basic, and most important distributed setting is that of {\em intermittent communication}.  


In the (homogeneous) intermittent communication setting, $M$ parallel workers are used to optimize a single objective over the course of $R$ rounds. During each round, each machine sequentially and locally computes $K$ independent unbiased stochastic gradients of the global objective, and then all the machines communicate with each other. This captures the natural setting where multiple parallel ``workers'' or ``machines''  are available, and computation on each worker is much faster than communication between workers. It includes applications ranging from optimization using multiple cores or GPUs, to using a cluster of servers, to Federated Learning\footnote{In a realistic Federated Learning setting, stochastic gradient estimates on the same machine might be correlated, or we might prefer thinking of a heterogeneous setting where each device has a different local objective. Nevertheless, much of the methodological and theoretical development in Federated Learning has been focused on the homogeneous intermittent communication setting we study here \citep[see][and citations therein]{kairouz2019advances}.} where workers are edge devices.

The intermittent communication setting has been widely studied for over a decade, with many optimization algorithms proposed and analyzed \citep{zinkevich2010parallelized,cotter2011better,dekel2012optimal,zhang2013divide,zhang2013communication,shamir2014distributed}, and obtaining new methods and improved analysis is still a very active area of research \citep{wang2017memory,stich2018local,wang2018cooperative,khaled2019better,haddadpour2019local,woodworth2020local}.
However, despite these efforts, we do not yet know which methods are optimal, what the min-max complexity is, and what methodological or analytical improvements might allow us to make further progress.

Considerable effort has been made to formalize the setting and establish lower bounds for distributed optimization \citep{zhang2013information,arjevani2015communication,braverman2016communication} and here, we follow the graph-oracle formalization of \citet{woodworth2018graph}. However, a key issue in the existing literature is that known lower bounds for the intermittent communication setting depend only on the product $KR$ (i.e.~the total number of gradients computed on each machine over the course of optimization), and not on the number of rounds, $R$, and the number of gradients per round, $K$, separately. 

Thus, existing results cannot rule out the possibility that the optimal rate for fixed $T=KR$ can be achieved using only a single round of communication ($R=1$), since they do not distinguish between methods that communicate very frequently ($R = T$, $K = 1$) and methods that communicate just once ($R = 1$, $K = T$). The possibility that the optimal rate is achievable with $R=1$ was suggested by \citet{zhang2013communication}, and indeed \citet{woodworth2020local} proved that an algorithm that communicates just once is optimal in the special case of quadratic objectives. While it seems unlikely that a single round of communication suffices in the general case, none of our existing lower bounds are able to answer this extremely basic question.

In this paper, we resolve (up to a logarithmic factor) the minimax complexity of smooth, convex stochastic optimization in the (homogeneous) intermittent communication setting. Our main result in Section \ref{sec:minimax-rates} is a lower bound on the optimal rate of convergence and a matching upper bound. Interestingly, we show that the combination of two extremely simple and na\"ive methods based on an accelerated stochastic gradient descent (SGD) variant called AC-SA \citep{lan2012optimal} is optimal up to a logarithmic factor. Specifically, we show that the better of the following methods is optimal: ``Minibatch Accelerated SGD'' which executes $R$ steps of AC-SA using minibatch gradients of size $MK$, and ``Single-Machine Accelerated SGD'' which executes $KR$ steps of AC-SA on just one of the machines, completely ignoring the other $M-1$. 

These methods might seem to be horribly inefficient: Minibatch Accelerated SGD only performs one update per round of communication, and Single-Machine Accelerated SGD only uses one of the available workers!  This perceived inefficiency has prompted many attempts at developing improved methods which take multiple steps on each machine locally in parallel including, in particular, numerous analyses of Local SGD \citep{zinkevich2010parallelized,dekel2012optimal,stich2018local,haddadpour2019local,khaled2019better,woodworth2020local}.  Nevertheless, we establish that one or the other is optimal in every regime, so more sophisticated methods cannot yield improved guarantees for arbitrary smooth objectives. Our results therefore highlight an apparent dichotomy between exploiting the available parallelism but not the local computation (Minibatch Accelerated SGD) and exploiting the local computation but not the parallelism (Single-Machine Accelerated SGD).


Our lower bound applies quite broadly, including to the settings considered by much of the
existing work on stochastic first-order optimization in the intermittent communication setting.  But, like many lower bounds, we should not interpret this to mean we cannot make progress. Rather, it indicates that we need to expand our model or modify our assumptions in order to develop better methods.
In Section \ref{sec:better-methods} we explore several additional assumptions that allow for circumventing our lower bound. These include when the third derivative of the objective is bounded (as in recent work by \citet{yuan2020federated}), when the objective has a certain statistical learning-like structure, or when the algorithm has access to a more powerful oracle.




\section{Setting and Notation}\label{sec:setting-and-notation}
We aim to understand the fundamental limits of stochastic first-order algorithms in the intermittent communication setting. Accordingly, we consider a standard smooth, convex problem
\begin{equation}\label{eq:stochastic-objective}
\min_x F(x)
\end{equation}
where $F$ is convex, $\nrm{x^*} \leq B$, and $F$ is $H$-smooth, so for all $x,y$
\begin{equation}
F(x) + \inner{\nabla F(x)}{y - x} \leq F(y) \leq F(x) + \inner{\nabla F(x)}{y - x} + \frac{H}{2}\nrm{y-x}^2
\end{equation}
We consider algorithms that gain information about the objective via a stochastic gradient oracle $g$ with bounded variance\footnote{This assumption can be strong, and does not hold for natural problems like least squares regression \citep{nguyen2019new}, nevertheless, this strengthens rather than weakens our lower bound.}, which satisfies for all $x$
\begin{equation}
\E_z g(x;z) = \nabla F(x)\quad\textrm{and}\quad \E_z\nrm*{g(x;z) - \nabla F(x)}^2 \leq \sigma^2
\end{equation}
This is a well-studied class of optimization objectives: smooth, bounded, convex objectives with a bounded-variance stochastic gradient oracle. 

To understand optimal methods for this class of problems requires specifying a class of optimization algorithms. We consider intermittent communication algorithms, which attempt to optimize $F$ using $M$ parallel workers, each of which is allowed $K$ queries to $g$ in each of $R$ rounds of communication. Such intermittent communication algorithms can be formalized using the graph oracle framework of \citet{woodworth2018graph} which focuses on the dependence structure between different stochastic gradient computations. 

Finally, we are considering a ``homogeneous'' setting, where each of the machines have access to stochastic gradients from the same distribution, in contrast to the more challenging ``heterogeneous'' setting, where they come from \emph{different} distributions, which could arise in a machine learning context when each machine uses data from a different source. The heterogeneous setting is interesting, important, and widely studied, but we focus here on the more basic question of min-max rates for homogeneous distributed optimization. We point out that our lower bounds also apply to heterogeneous objectives since homogeneous optimization is a special case of heterogeneous optimization, and there are also some lower bounds specific to the heterogeneous setting \citep[e.g.][]{arjevani2015communication} but they do not apply to our setting.


\section{The Lower Bound}\label{sec:minimax-rates}

We now present our main result, which is a lower bound on what suboptimality can be guaranteed by any (possibly randomized) intermittent communication algorithm in the worst case:
\begin{restatable}{theorem}{mainlowerbound}\label{thm:lower-bound}
For any $H,B,\sigma,K,R > 0$ and $M\geq 2$, and any intermittent communication algorithm, there exists a convex, $H$-smooth objective which has a minimizer with norm at most $B$ in any dimension
\[
d \geq 2KR + \prn*{10^9\prn*{1 + KR + \prn*{\frac{HB}{\sigma}}^{3/2}M(KR)^{5/4}} + \frac{6144H^2B^2MKR}{\sigma^2}}\log(64MK^2R^2)
\]
and a stochastic gradient oracle, $g$, with $\E_z\nrm*{g(x;z) - \nabla F(x)}^2 \leq \sigma^2$ such that the algorithm's output will have error at least
\[
\E F(\hat{x}) - F^* \geq c\cdot\prn*{\frac{HB^2}{K^2R^2} + \min\crl*{\frac{\sigma B}{\sqrt{MKR}},\, HB^2} + \min\crl*{\frac{HB^2}{R^2 \log^2 M},\, \frac{\sigma B}{\sqrt{KR}}}}
\]
for a numerical constant $c$.
\end{restatable}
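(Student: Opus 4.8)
The plan is to exhibit a single hard instance --- really three hard instances, one per term in the lower bound --- that no intermittent communication algorithm can solve quickly, and to obtain the stated bound by taking the worst of the three. The first term $HB^2/(K^2R^2)$ is the price of using a first-order method on a smooth function with bounded domain: even with exact gradients and unlimited parallelism, after $KR$ sequential oracle accesses per machine one cannot beat the accelerated rate, so this follows from a (deterministic) resisting-oracle construction on a function that behaves like the classical Nesterov worst-case quadratic in a space of dimension $\Theta(KR)$. The middle term $\min\{\sigma B/\sqrt{MKR},\,HB^2\}$ is the statistical price: with $MKR$ total stochastic gradients of variance $\sigma^2$, no algorithm (distributed or not) can do better than the minibatch SGD rate, capped by $HB^2$ when the noise is so large that a single step to the origin is already near-optimal. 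This is the standard statistical lower bound for stochastic convex optimization, embedded on a separate block of coordinates. The real content is the third term $\min\{HB^2/(R^2\log^2 M),\,\sigma B/\sqrt{KR}\}$, which is what forces the dichotomy: it says that if you want to beat the single-machine rate $\sigma B/\sqrt{KR}$, you pay a penalty that scales only with $R$ (not $KR$), degraded further by $\log^2 M$.

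**The core construction.**
For the third term I would build, on a fresh block of coordinates, a function of the ``chain'' type: a convex function whose progress toward the optimum requires discovering coordinates one at a time, where each new coordinate can only be discovered after the previous one has been ``activated,'' and where activation happens only through a stochastic gradient observation that reveals the relevant coordinate with small probability $p$. The key design choices are: (i) the chain should be long enough that $R$ rounds of communication are the binding constraint --- intuitively, within a single round the $K$ local steps on a machine cannot advance the chain because the function is scaled so that the signal for the next coordinate is buried in noise until it is ``unlocked'' by communication; and (ii) the probability $p$ of unlocking should be tuned as roughly $1/\log M$ (hence the $\log^2 M$) because with $M$ machines each taking $K$ samples per round, a union/Chernoff argument shows the chain advances by at most $O(\log M)$ coordinates per round with high probability, but no faster. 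Concretely, I expect the construction to interpolate between a Nesterov-style smooth chain (giving the $HB^2/R^2$ behavior when $R$ is the bottleneck) and a probabilistic ``needle in a haystack'' component (giving the $\sigma B/\sqrt{KR}$ behavior when the chain is short), with the $\min$ arising because the adversary picks whichever regime is worse for the algorithm given the parameters. The dimension bound in the theorem statement, with its $\log(64MK^2R^2)$ factor and the $(HB/\sigma)^{3/2}M(KR)^{5/4}$ term, is exactly what is needed to give enough ``room'' so that the coordinates the algorithm might query by chance (as opposed to through legitimate chain progress) are overwhelmingly likely to be uninformative --- this is a standard but delicate probabilistic argument controlling the event that a random high-dimensional rotation of the hard instance is ``leaked'' early.

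**Assembling the pieces and the main obstacle.**
Having constructed three functions $F_1, F_2, F_3$ on orthogonal coordinate blocks with oracles $g_1,g_2,g_3$, I would take $F = F_1 + F_2 + F_3$ (rescaling each so the minimizer norm budget $B$ and smoothness budget $H$ are each split appropriately, e.g.\ a constant fraction to each), note that the stochastic gradient oracle for $F$ decomposes blockwise so variance adds up to at most $\sigma^2$ after rescaling, and observe that optimizing $F$ requires optimizing each $F_i$, so the error is at least a constant times $\max_i (\text{error on } F_i)$, which is at least the average, giving the sum of the three terms up to constants. To make the per-machine, per-round structure rigorous I would invoke the graph-oracle formalism of \citet{woodworth2018graph}: the intermittent communication graph is the ``delay graph'' with depth $R$ and width $MK$, and lower bounds in that framework are proven by a worst-case oracle that answers each query consistently with the ``least revealed so far'' instance in the support of the prior. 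The main obstacle, and where I would spend most of the effort, is the probabilistic analysis of the chain for the third term: one must show that \emph{across all $M$ machines simultaneously within a round}, the maximum chain-progress is $O(\log M)$ with high probability (a max-of-$M$-binomials concentration statement), and then that this persists across all $R$ rounds via a union bound, all while ensuring that the ``accidental'' queries into un-unlocked coordinates (of which there are up to $2KR$ per machine) cannot shortcut the chain --- this is where the high-dimensional embedding and the careful choice of $p \asymp 1/\log M$ interact, and getting the constants and the dimension dependence to line up is the crux.
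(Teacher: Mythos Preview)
Your high-level scaffolding is right --- a chain function whose coordinates are revealed one at a time with probability $p$, a random rotation to handle non-zero-respecting algorithms, and a max-of-$M$-binomials argument to control per-round progress --- and this is indeed how the paper proceeds. But two of your concrete design choices would not work. First, the tuning $p \asymp 1/\log M$ is wrong: with that choice each machine's expected per-round progress is $Kp = K/\log M$, and the max over $M$ machines only adds $O(\log M)$ on top of that, so total progress over $R$ rounds is $\Theta(KR/\log M)$, not $\Theta(R\log M)$. The paper instead sets $p \approx \min\{1,\, (HB/\sigma)^{1/2}(KR)^{-3/4}\}$, a quantity depending on $H,B,\sigma,K,R$ but \emph{not} on $M$; the $\log M$ factor arises purely as the deviation term in the max of $M$ i.i.d.\ binomials. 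With this $p$ the total progress is $N = O(KRp + R\log M)$, and the suboptimality bound $HB^2/N^2$ then gives exactly $\min\{HB^2/(R\log M)^2,\,\sigma B/\sqrt{KR}\}$, since $HB^2/(KRp)^2 = \sigma B/\sqrt{KR}$ by the choice of $p$. Relatedly, your intuition that ``the signal is buried in noise until unlocked by communication'' is not the mechanism: each machine \emph{does} advance by $\mathrm{Binomial}(K,p)$ within a round, and communication merely propagates the best progress.

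Second, and more fundamentally, you describe the chain as ``Nesterov-style,'' but the paper emphasizes that a quadratic chain \emph{cannot} yield the third term at all: for quadratics the min-max rate depends only on the product $KR$ and is matched by Accelerated Local SGD. The essential innovation is to replace the quadratic link $\psi(t)=t^2$ by a convex, smooth, but \emph{Lipschitz} scalar function (the paper uses $\psi(t) \propto t\arctan(ct) - c^{-1}\log(1+c^2t^2)$, whose derivative is bounded by a constant times $\sqrt{H}/\beta$). The oracle's variance is roughly $p^{-1}\sup_t|\psi'(t)|^2$, so boundedness of $\psi'$ is precisely what permits taking $p$ small without violating the variance budget $\sigma^2$; this is how the parameter $\beta$ (controlling the Lipschitz constant of $\psi$) gets traded off against $p$. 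Without this non-quadratic ingredient the constraint $p \gtrsim H/(\sigma^2\beta^2)$ never decouples from the chain geometry, and the third term collapses back into the first. This is the step you are missing, and it is where most of the technical work lies.
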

\begin{proofsketch}
The first two terms of this lower bound follow directly from previous work \citep{woodworth2018graph}; the $\frac{HB^2}{K^2R^2}$ term corresponds to optimizing a function with a deterministic gradient oracle, and the $\frac{\sigma B}{\sqrt{MKR}}$ term is a very well-known statistical limit \citep[see, e.g.,][]{nemirovskyyudin1983}. The distinguishing feature of our lower bound is the second $\min$ term, which depends differently on $K$ than on $R$. For quadratics, the min-max complexity actually does depend only on the product $KR$, and is given by just the first two terms \citep{woodworth2020local}. Consequently, proving our lower bound necessitates going beyond quadratics (in contrast, all the lower bounds for sequential smooth convex optimization that we are aware of can be obtained using quadratics).
We therefore prove the Theorem using the following non-quadratic hard instance
\begin{equation}\label{eq:def-F-main}
F(x) = \psi'(-\zeta)x_1 + \psi(x_N) + \sum_{i=1}^{N-1}\psi(x_{i+1} - x_i)
\end{equation}
where $\psi:\R\to\R$ is defined as
\begin{equation}
\psi(x) := \frac{\sqrt{H}x}{2\beta}\arctan\prn*{\frac{\sqrt{H}\beta x}{2}} - \frac{1}{2\beta^2}\log\prn*{1+\frac{H\beta^2x^2}{4}}
\end{equation}
\begin{wrapfigure}{r}{0.23\textwidth}
\begin{center}
\vspace{-9mm}
{\includegraphics[width=\linewidth]{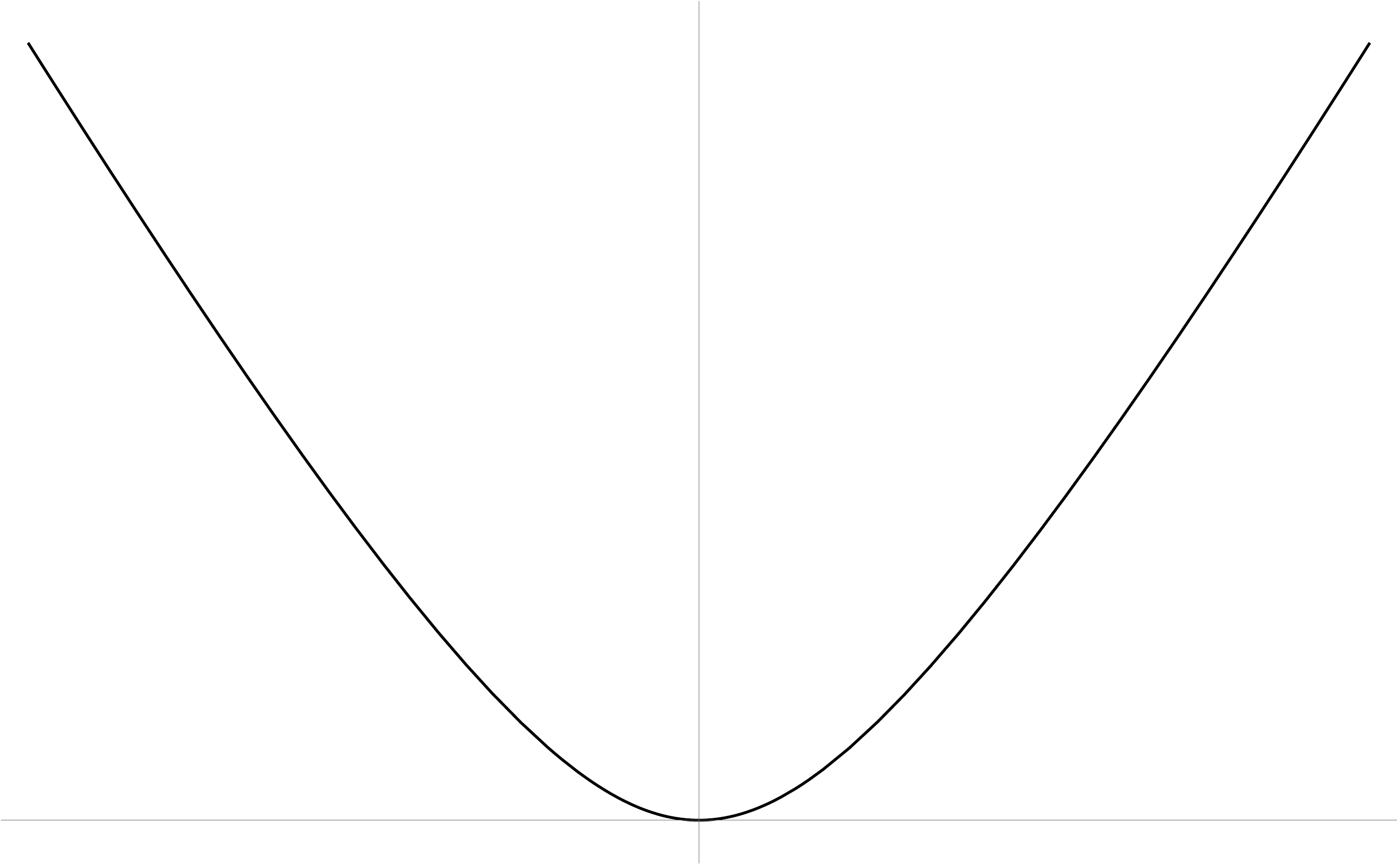}}
\small The function $\psi(x)$
\end{center}
\vspace{-7mm}
\end{wrapfigure}
and where $\beta$, $\zeta$, and $N$ are hyperparameters that are chosen depending on $H,B,\sigma,M,K,R$ so that $F$ satisfies the necessary conditions. This construction closely resembles the classic lower bound for deterministic first-order optimization of \citet{nesterov2004introductory}, which corresponds to $\psi(x) = x^2$. 
To describe our stochastic gradient oracle, we will use $\prog{\alpha}(x) := \max\crl*{j\,:\, \abs{x_j} > \alpha}$, which denotes the highest index of a coordinate of $x$ that is significantly non-zero. We also define $F^-$ to be equal to the objective with the $\prog{\alpha}(x)^{\textrm{th}}$ term removed:
\begin{equation}
F^-(x) = \psi'(-\zeta) x_1 + \psi(x_N) + \sum_{i=1}^{\prog{\alpha}(x)-1} \psi(x_{i+1} - x_i) + \sum_{i=\prog{\alpha}(x)+1}^{N-1} \psi(x_{i+1} - x_i)
\end{equation}
The stochastic gradient oracle for $F$ that we use then resembles
\begin{equation}\label{eq:def-g-main}
g(x) = \begin{cases}
\nabla F^-(x) & \textrm{with probability } 1-p \\
\nabla F(x) + \frac{1-p}{p}\prn*{\nabla F(x) - \nabla F^-(x)} & \textrm{with probability } p 
\end{cases}
\end{equation}
This stochastic gradient oracle is similar to the one used by \citet{arjevani2019lower} to prove lower bounds for non-convex optimization, and its key property is that $\P\brk*{\prog{\alpha}(g(x)) \leq \prog{\alpha}(x)} = 1 - p$. Therefore, each oracle access only reveals information about the next coordinate of the gradient the algorithm with probability $p$, and therefore the algorithm is essentially only able to make progress with probability $p$. 
The rest of the proof revolves around bounding the total progress of the algorithm and showing that if $\prog{\alpha}(x) \leq \frac{N}{2}$, then $x$ has high suboptimality.

Since each machine makes $KR$ sequential queries and only makes progress with probability $p$, the total progress scales like $KR \cdot p$. By taking $p$ smaller, we decrease the amount of progress made by the algorithm, and therefore increase the lower bound. Indeed, when $p \approx 1/K$, the algorithm only increases its progress by about $\log M$ per round, which gives rise to the key $(HB^2)/(R^2\log^2 M)$ term in the lower bound. However, we are constrained in how small we can take $p$ since our stochastic gradient oracle has variance 
\begin{equation}
\sup_x \E\nrm*{g(x) - \nabla F(x)}^2 \approx \frac{1}{p}\sup_x \psi'(x)^2
\end{equation}
This is where our choice of $\psi$ comes in. Specifically, we chose the function $\psi$ to be convex and smooth so that $F$ is, but we also made it Lipschitz:
\begin{equation}
\psi'(x) = \frac{\sqrt{H}}{2\beta}\arctan\prn*{\frac{\sqrt{H}\beta x}{2}} 
\in \brk*{-\frac{\pi\sqrt{H}}{4\beta},\,\frac{\pi\sqrt{H}}{4\beta}} 
\end{equation}
Notably, this Lipschitz bound on $\psi$, which implies a bound on $\nrm{\nabla F(x)}_{\infty}$, is the key non-quadratic property that allows for our lower bound. Since $\psi'$ is bounded, we are able to able to choose $p \approx H\sigma^{-2}\beta^{-2}$ without violating the variance constraint on the stochastic gradient oracle. Carefully balancing $\beta$ completes the argument. 

Another important aspect of our lower bound is that it applies to arbitrary randomized algorithms, rather than more restricted families of algorithms like ``zero-respecting'' methods (see Appendix \ref{app:thm5-proofs}). We therefore prove our theorem using techniques similar to \citet{woodworth16tight}, \citet{carmon2017lower1}, \citet{arjevani2019lower}, and others, who introduce a random rotation matrix, $U$; construct a hard instance like $F(U^\top x)$; and argue that any algorithm behaves almost as if it were zero-respecting. For further discussion of this proof technique, we refer readers to \citep{woodworth16tight,carmon2017lower1}. All of the details of the proof can be found in Appendices \ref{app:framework}-\ref{app:lower-bound-proof}.
\end{proofsketch}

Theorem \ref{thm:lower-bound} also implies a lower bound for strongly convex objectives:
\begin{restatable}{corollary}{sclowerbound}\label{cor:strongly-convex-lower-bound}
There is a numerical constant, $c$, such that no intermittent communication algorithm can guarantee for any $H$-smooth, $\lambda$-strongly convex objective $F$ and stochastic gradient oracle with variance less than $\sigma^2$ that its output will have suboptimality 
\begin{multline*}
\E F(\hat{x}) - F^* \leq c\cdot\bigg(\frac{F(0) - F^*}{K^2R^2}\exp\prn*{-\sqrt{\frac{\lambda}{H}}KR} + \frac{\sigma^2}{\lambda MKR} \\
+ \min\crl*{\frac{F(0) - F^*}{R^2\log^2 M}\exp\prn*{-\sqrt{\frac{\lambda}{H}}R\log M},\, \frac{\sigma^2}{\lambda KR}}\bigg)
\end{multline*}
\end{restatable}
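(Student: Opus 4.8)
The plan is to derive the corollary from Theorem~\ref{thm:lower-bound} by the standard strong-convexity reduction, handling the three summands separately and then combining them on mutually orthogonal blocks of coordinates: if block $j$ carries an $H_j$-smooth, $\lambda_j$-strongly convex hard instance forcing suboptimality $\Delta_j$, then the direct sum is $(\sum_j H_j)$-smooth and $(\min_j\lambda_j)$-strongly convex and forces suboptimality $\max_j\Delta_j \geq \tfrac13\sum_j\Delta_j$, so it suffices to establish each of the three lower bounds in isolation (with $H_j \asymp H$, $\lambda_j \asymp \lambda$) and then rescale constants.

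For the two exponential terms the idea is to ``strongly convexify'' the hard instance of Theorem~\ref{thm:lower-bound}: take a reparameterized copy $\hat F$ of the construction \eqref{eq:def-F-main} and set $F(x) = \hat F(x) + \tfrac{\lambda}{2}\nrm{x}^2$, which is $\lambda$-strongly convex and $(\hat H+\lambda)$-smooth, so choosing $\hat H \asymp H$ and using $\lambda\le H$ keeps $F$ $H$-smooth. The added quadratic is harmless for the progress argument: its gradient $\lambda x$ does not increase $\prog{\alpha}$, so the probability-$p$ oracle \eqref{eq:def-g-main} and the accompanying bound on the number of ``active'' coordinates carry over verbatim — after $R$ rounds the (rotated) iterates are supported on about $KR$ coordinates when $p\approx 1$, and on only about $R\log M$ coordinates in the regime $p\approx 1/K$ that drives the novel term. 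What must be replaced is the final implication ``$\prog{\alpha}(x)\le N/2 \Rightarrow$ high suboptimality'': instead one computes the support-restricted minimum $\min\{\hat F(x)+\tfrac{\lambda}{2}\nrm{x}^2 : \mathrm{supp}(x)\subseteq[k]\} - F^*$ for the chain structure of \eqref{eq:def-F-main} and shows it decays like $\mathrm{poly}(1/k)\cdot(F(0)-F^*)\exp(-\Theta(\sqrt{\lambda/H})\,k)$, the polynomial prefactor being exactly the convex ``$HB^2/k^2$'' hardness that persists among the active coordinates. Taking $k\asymp KR$ (with $p\approx 1$, i.e.\ essentially deterministic — here even $\psi(x)=x^2$ would do, recovering the classical smooth strongly convex lower bound) gives the first term, and taking $k\asymp R\log M$, while optimizing the scale parameter $\beta$ against the variance budget as in the proof sketch, gives the first branch of the $\min$-term.

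For the statistical term $\sigma^2/(\lambda MKR)$ no chaining is needed: use the one-dimensional instance $F(x)=\tfrac{\lambda}{2}(x-\mu)^2$ with $\mu$ uniform on $\{-\delta,+\delta\}$ and Gaussian oracle $g(x;z)=\lambda(x-\mu)+z$, $z\sim\mathcal{N}(0,\sigma^2)$, which has variance $\sigma^2$; the $M$ machines jointly collect $MKR$ independent samples, and a standard Le~Cam / mutual-information argument shows the sign of $\mu$ cannot be identified unless $\delta^2 \lesssim \sigma^2/(\lambda^2 MKR)$, whence $\E F(\hat x)-F^* \gtrsim \lambda\delta^2 \asymp \sigma^2/(\lambda MKR)$ — the analogue of the convex term $\sigma B/\sqrt{MKR}$ under $B\asymp\sigma/(\lambda\sqrt{MKR})$. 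The second branch of the $\min$-term, $\sigma^2/(\lambda KR)$, is obtained identically but using only the $K$ samples gathered per round along the single ``active'' direction (equivalently, it is the $M=1$ statistical term), reflecting that the novel phenomenon gets no benefit from averaging across machines.

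The main obstacle is the middle step: pushing the randomized-algorithm machinery of Theorem~\ref{thm:lower-bound} (the random rotation $U$, the ``behaves like a zero-respecting method'' argument, and the union bound controlling $\prog{\alpha}$ over all $MKR$ oracle calls) through the strongly convex construction while simultaneously (i) pinning down the exact $H$-smoothness and $\lambda$-strong-convexity constants after adding the quadratic and reparameterizing $\psi$, and (ii) establishing the sharp $\mathrm{poly}(1/k)\exp(-\Theta(\sqrt{\lambda/H})\,k)$ decay of the support-restricted minimum of the non-quadratic chain objective, which — unlike the classical tridiagonal-quadratic computation — does not reduce to an eigenvalue estimate and requires carefully balancing the Lipschitz bound on $\psi'$ against the curvature $\lambda$.
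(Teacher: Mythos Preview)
Your approach is a direct construction: strongly-convexify the hard instance of Theorem~\ref{thm:lower-bound}, recompute the support-restricted suboptimality, and assemble the three terms on orthogonal blocks. The paper instead proves the corollary by a black-box \emph{reduction}: assume an algorithm achieves the claimed strongly convex rate, apply it to the regularized objective $G(x)=F(x)+\tfrac{\lambda}{2}\nrm{x}^2$ for an arbitrary convex $F$ with $\nrm{x^*}\le B$, use $F(\hat x)-F^*\le G(\hat x)-G^*+\tfrac{\lambda}{2}B^2$ together with $G(0)-G^*\le F(0)-F^*\le \tfrac{H}{2}B^2$, and then choose $\lambda$ (as the maximum of four candidate values) so that the resulting convex guarantee would beat Theorem~\ref{thm:lower-bound}. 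The whole argument is a dozen lines.

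Your route is not wrong in principle, but the obstacle you flag is exactly the hard part, and you have not resolved it: establishing the decay $\min\{F(x):\mathrm{supp}(x)\subseteq[k]\}-F^*\asymp \mathrm{poly}(1/k)\,(F(0)-F^*)\exp(-\Theta(\sqrt{\lambda/H})\,k)$ for the non-quadratic $\psi$-chain plus quadratic penalty is a genuine computation that does not reduce to the classical tridiagonal eigenvalue estimate, and you would need it precisely in the regime that drives the novel $R\log M$ term (where $\psi(x)=x^2$ will not do). The reduction sidesteps this entirely: the exponential factors in the corollary are not proved tight---the paper explicitly says ``we also do not expect the exponential terms to be tight''---they arise only because those are the terms in the hypothetical strongly convex rate that, after the reduction and the choice of $\lambda$, collapse to the corresponding convex terms in Theorem~\ref{thm:lower-bound}. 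So the paper never needs to exhibit a strongly convex instance with that exponential behavior; it only needs to show that \emph{if} such a rate were achievable it would contradict the convex lower bound.
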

This lower bound is more limited than Theorem \ref{thm:lower-bound}, since we prove it using a reduction from convex to strongly convex optimization, rather than directly. We also do not expect the exponential terms to be tight. Nevertheless, the Corollary gives some indication of the optimal rate in the strongly convex setting and, as with Theorem \ref{thm:lower-bound}, it distinguishes between $R$ and $K$ unlike previous results. A simple proof can be found in Appendix \ref{app:lower-bound-proof}.


\section{A Matching Upper Bound and an Optimal Algorithm}\label{sec:optimal-alg}

The lower bound in Theorem \ref{thm:lower-bound} is matched (up to $\log$ factors) by the combination of two simple distributed zero-respecting algorithms, which are distributed variants of an accelerated SGD algorithm called AC-SA due to \citet{lan2012optimal}. In the sequential setting, AC-SA algorithm maintains two iterates $y_t$ and $x_t$ which it updates according to
\begin{equation}\label{eq:AC-SA-updates}
\begin{aligned}
y_{t+1} &= y_t - \gamma_t g_t\prn*{\beta_t^{-1} y_t + (1 - \beta_t^{-1})x_t} \\
x_{t+1} &= \beta_t^{-1}y_{t+1} + (1 - \beta_t^{-1})x_t
\end{aligned}
\end{equation}
where $\gamma_t$ and $\beta_t$ are carefully chosen stepsize parameters. In the smooth, convex setting, this algorithm converges at a rate \citep[see Corollary 1,][]{lan2012optimal}
\begin{equation}\label{eq:AC-SA-rate}
\E\brk*{F(x_T) - F^*} \leq c\cdot\prn*{\frac{HB^2}{T^2} + \frac{\sigma B}{\sqrt{T}}}
\end{equation}
To describe the optimal algorithm for the intermittent communication setting, we will first define two distributed variants of AC-SA. 

The first algorithm, which we will refer to as \textbf{Minibatch Accelerated SGD}, implements $R$ iterations of AC-SA using minibatch gradients of size $MK$ \citep[c.f.][]{cotter2011better}. Specifically, the method maintains two iterates $y_r$ and $x_r$ which are shared across all the machines. During each round of communication, each machine computes $K$ independent stochastic estimates of $\nabla F\prn*{\beta_r^{-1} y_r + (1 - \beta_r^{-1})x_r}$; the machines then communicate their minibatches, averaging them together into a larger minibatch of size $MK$, and then they update $y_r$ and $x_r$ according to \eqref{eq:AC-SA-updates}. Because the minibatching reduces the variance of the stochastic gradients by a factor of $MK$, \eqref{eq:AC-SA-rate} implies this method converges at a rate
\begin{equation}\label{eq:MB-rate}
\E\brk*{F(x_R) - F^*} \leq c\cdot\prn*{\frac{HB^2}{R^2} + \frac{\sigma B}{\sqrt{MKR}}}
\end{equation}

The second algorithm, which we will call \textbf{Single-Machine Accelerated SGD}, ``parallelizes'' AC-SA in a different way. In contrast to Minibatch Accelerated SGD, Single-Machine Accelerated SGD simply ignores $M-1$ of the available machines and runs $T=KR$ steps of AC-SA on the remaining one, therefore converging like
\begin{equation}\label{eq:SM-rate}
\E\brk*{F(x_{KR}) - F^*} \leq c\cdot\prn*{\frac{HB^2}{K^2R^2} + \frac{\sigma B}{\sqrt{KR}}}
\end{equation}

From here, we point out that lower bound in Theorem \ref{thm:lower-bound} is equal (up to $\log$ factors) to the minimum of \eqref{eq:MB-rate} and \eqref{eq:SM-rate}. Furthermore, one can determine which of these algorithms achieves the minimum based on the problem parameters:
\begin{restatable}{theorem}{optalgorithms}\label{thm:opt-algorithms}
For any $H,B,\sigma,K,R,M > 0$, the algorithm which returns the output of Minibatch Accelerated SGD when $K \leq \frac{\sigma^2 R^3}{H^2B^2}$ and returns the output of Single-Machine Accelerated SGD when $K > \frac{\sigma^2 R^3}{H^2B^2}$ is optimal up to a factor of $O(\log^2 M)$. 
\end{restatable}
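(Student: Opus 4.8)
The plan is to show that the worst-case error guaranteed by the selected algorithm never exceeds $O(\log^2 M)$ times the lower bound of Theorem~\ref{thm:lower-bound}. Write $L$ for the parenthesized quantity multiplying $c$ in that theorem, so that $cL$ lower-bounds the worst-case error of any intermittent communication algorithm. For the upper bound I use \eqref{eq:MB-rate} and \eqref{eq:SM-rate} together with the observation that running either variant on the ball $\crl*{x:\nrm*{x}\leq B}$ — which contains the minimizer — preserves its rate (by the standard analysis of projected AC-SA) and additionally guarantees $F(x_t)-F^*=O(HB^2)$ for every iterate, since iterate and minimizer both lie in the ball. Rearranging the resulting minima (using $R\geq 1$), Minibatch Accelerated SGD then guarantees error $O\prn*{\frac{HB^2}{R^2}+\min\crl*{\frac{\sigma B}{\sqrt{MKR}},\,HB^2}}$ and Single-Machine Accelerated SGD guarantees $O\prn*{\frac{HB^2}{K^2R^2}+\min\crl*{\frac{\sigma B}{\sqrt{KR}},\,HB^2}}$. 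So it suffices to check, in each of the two regimes defining the algorithm, that the relevant one of these is $O(\log^2 M)\cdot L$.

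The key algebraic fact is that the threshold is exactly a comparison of the two ``leading'' terms: squaring and rearranging, $\frac{HB^2}{R^2}\leq\frac{\sigma B}{\sqrt{KR}}$ holds if and only if $K\leq\frac{\sigma^2 R^3}{H^2 B^2}$, so Minibatch Accelerated SGD is used precisely when $\frac{HB^2}{R^2}\leq\frac{\sigma B}{\sqrt{KR}}$. Suppose this holds. The term $\min\crl*{\frac{\sigma B}{\sqrt{MKR}},\,HB^2}$ of the Minibatch guarantee is literally the second term of $L$, so it remains only to bound $\frac{HB^2}{R^2}$; and using $\log^2 M\geq 1$ together with the case hypothesis,
\[
\log^2 M\cdot\min\crl*{\frac{HB^2}{R^2\log^2 M},\,\frac{\sigma B}{\sqrt{KR}}} = \min\crl*{\frac{HB^2}{R^2},\;\log^2 M\cdot\frac{\sigma B}{\sqrt{KR}}} \geq \frac{HB^2}{R^2},
\]
since the first argument equals $\frac{HB^2}{R^2}$ and the second is at least $\frac{\sigma B}{\sqrt{KR}}\geq\frac{HB^2}{R^2}$. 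Hence the Minibatch guarantee is $O(\log^2 M)\cdot L$. The complementary case $\frac{HB^2}{R^2}>\frac{\sigma B}{\sqrt{KR}}$ is symmetric: now $\frac{HB^2}{K^2R^2}$ is literally the first term of $L$, and the remaining term $\min\crl*{\frac{\sigma B}{\sqrt{KR}},\,HB^2}\leq\frac{\sigma B}{\sqrt{KR}}$ is at most the same quantity, which here is $\geq\frac{\sigma B}{\sqrt{KR}}$ because its first argument $\frac{HB^2}{R^2}$ exceeds $\frac{\sigma B}{\sqrt{KR}}$ by the case hypothesis and its second is $\geq\frac{\sigma B}{\sqrt{KR}}$ since $\log^2 M\geq 1$. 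So the Single-Machine guarantee is $O(\log^2 M)\cdot L$ as well, and the two cases together prove the claim.

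This is essentially bookkeeping, and the main thing to state with care is the $O(HB^2)$ cap on the error of the two AC-SA variants: it is what lets Minibatch Accelerated SGD match the \emph{capped} statistical term $\min\crl*{\sigma B/\sqrt{MKR},\,HB^2}$ of the lower bound in the high-noise regime, where the raw rate \eqref{eq:MB-rate} is vacuous, and it follows from the standard, rate-preserving device of constraining AC-SA to a ball of radius $O(B)$ containing the minimizer. The only remaining wrinkle is cosmetic — for small $M$ one should read $\log M$ as $1+\log M$ so that $\log^2 M\geq 1$ — and it affects nothing but the numerical constant hidden in $O(\cdot)$.
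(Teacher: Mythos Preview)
Your proposal is correct and follows essentially the same approach the paper sketches in the text preceding the theorem: verify that the minimum of the Minibatch rate \eqref{eq:MB-rate} and the Single-Machine rate \eqref{eq:SM-rate} matches the lower bound of Theorem~\ref{thm:lower-bound} up to $\log^2 M$, and observe that the threshold $K=\sigma^2R^3/(H^2B^2)$ is exactly where the two rates cross. You have filled in the casework the paper leaves implicit, and your remark about capping the error at $O(HB^2)$ via projection is a useful detail the paper does not spell out.
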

This optimal algorithm is computationally efficient and requires no significant overhead. Each machine needs to store only a constant number of vectors, it performs only a constant number of vector additions for each stochastic gradient oracle access, and it communicates just one vector per round. Therefore, the total storage complexity is $O(d)$ per machine, the sequential runtime complexity is $O(KR\cdot d)$, and the total communication complexity is $O(MR\cdot d)$. In fact, the communication complexity is $0$ when Single-Machine Accelerated SGD is used. Therefore, we do not expect a substantially better algorithm from the standpoint of computational efficiency either.

In light of Theorem \ref{thm:opt-algorithms} and the second $\min$ term in Theorem \ref{thm:lower-bound}, we see that algorithms in this setting are offered the following dilemma: they may either attain the optimal statistical rate $\sigma B / \sqrt{MKR}$ but suffer an optimization rate $HB^2 / (R^2 \log^2 M)$ that does not benefit from $K$ at all, or they may attain the optimal optimization rate of $HB^2 / (K^2R^2)$ but suffer a statistical rate $\sigma B / \sqrt{KR}$ as if only single machine were available. In this sense, there is a very real dichotomy between exploiting parallelism and leveraging local computation.

The main shortcoming of the optimal algorithm is the need to know the problem parameters $H$, $B$, and $\sigma$ to implement it. However, knowledge of these parameters is anyway needed in order to choose the stepsizes for AC-SA, and we are not aware of accelerated variants of SGD that can be implemented without knowing them, even in the sequential setting. This algorithm is also somewhat unnatural because of the hard switch between Minibatch and Single-Machine Accelerated SGD. It would be nice, if only aesthetically, to have an algorithm that more naturally transitions from the Minibatch to the Single-Machine rate. Accelerated Local SGD \citep{yuan2020federated} or something similar is a contender for such an algorithm, although it is unclear whether or not this method can match the optimal rate in all regimes. Local SGD methods can also be augmented by using two stepsizes---a smaller, conservative stepsize for the local updates between communications, and a larger, aggressive stepsize when the local updates are aggregated---this two-stepsize approach allows for interpolation between Minibatch-like and Single-Machine-like behavior, and could be used to design a more ``natural'' optimal algorithm \citep[see Section 6,][]{woodworth2020minibatch}.

Finally, the upper and lower bounds match up to a factor of $\log^2 M$. While this is generally a minor gap, it does raise the question of what the optimal error would be in a massively parallel regime where exponentially many machines are available. In this case, it is conceivable that a brute-force approach might be available that could converge at the rate $1/(R\log M)^2$ in a certain regime, as is suggested by the lower bound, improving over the $1/R^2$ rate achieved by Minibatch Accelerated SGD. Nevertheless, it is not obvious how this could be achieved without any dependence on the dimension and related work by \citet{duchi18a} suggests that such a rate would not be possible without depending on the dimension. We therefore conjecture that the $\log^2 M$ factor can be removed from the lower bound.

\section{Better than Optimal: Breaking the Lower Bound} \label{sec:better-methods}

Perhaps the most important use of a lower bound is in understanding how to break it.  Instead of viewing the lower bound as telling us to give up any hope of improving over the na\"ive optimal method in Section \ref{sec:optimal-alg}, we should view it as informing us about possible means of making progress.  

One way to break our lower bound is by introducing additional assumptions that are not satisfied by the hard instance. These assumptions could then be used to establish when and how some alternate method improves over the ``optimal'' method in Section \ref{sec:optimal-alg}.  Several methods, which operate within the intermittent communication framework of Section \ref{sec:setting-and-notation}, have been shown to be better than the ``optimal algorithm'' in practice {\em for specific instances}. However, attempts to demonstrate the benefit of these methods theoretically have so far failed, and we now understand why.  In order to understand such benefits, we {\em must} introduce additional assumptions, and ask not ``is this alternate method better'' but rather ``under what assumption is this alternate method better?''   Below we suggest possible additional assumptions, including ones that have appeared in recent analysis and also other plausible assumptions one could rely on.

Another way to break the lower bound is by considering algorithms that go beyond the stochastic oracle framework of Section \ref{sec:setting-and-notation}, utilizing more powerful oracles that nevertheless could be equally easy to implement.  Understanding the lower bound can inform us of what type of such extensions might be useful, thus guiding development of novel types of optimization algorithms.

\subsection{Relying on a Bounded Third Derivative}

As we have mentioned, Theorem \ref{thm:lower-bound} does not hold in the special case of quadratic objectives of the form $Q(x) = \frac{1}{2}x^\top A x + b^\top x$ for p.s.d.~$A$, e.g.~least squares problems, in which case the min-max rate is much better, and Accelerated Local SGD achieves:
\begin{equation}
\E Q(\hat{x}) - Q^* \leq c\cdot\prn*{\frac{HB^2}{K^2R^2} + \frac{\sigma B}{\sqrt{MKR}}}
\end{equation}
Since improvement over the lower bound is possible when the objective is \emph{exactly} quadratic, it stands to reason that similar improvement should be possible when the objective is sufficiently \emph{close} to quadratic. Indeed, \citet{yuan2020federated} analyze another accelerated variant of Local SGD in the smooth, convex setting with the additional assumption that the Hessian $\nabla^2 F(x)$ is $\alpha$-Lipschitz. Their algorithm converges at a rate
\begin{equation}\label{eq:yuan-bound}
\E F(\hat{x}) - F^* \leq \tilde{O}\prn*{\frac{HB^2}{KR^2} + \frac{\sigma B}{\sqrt{MKR}} + \prn*{\frac{H\sigma^2 B^4}{MKR^3}}^{1/3} + \prn*{\frac{\alpha \sigma^2 B^5}{R^4 K}}^{1/3}}
\end{equation}
This \emph{can} improve over the lower bound in Theorem \ref{thm:lower-bound} in certain parameter regimes, for instance, \eqref{eq:yuan-bound} is better if
\begin{equation}
\frac{H^2B^2}{\sigma^2} \leq \frac{R^3}{MK} \qquad\textrm{and}\qquad \alpha \leq \tilde{O}\prn*{\min\crl*{\frac{\sigma R^{5/2}}{B^2K^{1/2}},\, \frac{H^3BK}{\sigma^2 R^2 \log^6 M}}}
\end{equation}
However, \citeauthor{yuan2020federated}'s guarantee does not \emph{always} improve over the lower bound, and it is not completely clear to what extent further improvement over their algorithm might be possible. In an effort to understand when it may or may not be possible to improve, we extend our lower bound to the case where $\nabla^2 F$ is $\alpha$-Lipschitz:
\begin{restatable}{theorem}{bdedthird}\label{thm:bounded-third}
For any $H,B,\sigma,Q,K,R > 0$ and any $M \geq 2$, there exists a convex, $H$-smooth objective $F$ with $\nrm{x^*}\leq B$ and with $\nabla^2 F$ being $Q$-Lipschitz with respect to the L2 norm, and a stochastic gradient oracle $g$ with $\E\nrm{g(x) - \nabla F(x)}^2 \leq \sigma^2$ for all $x$, such that with probability at least $\frac{1}{2}$ all of the oracle queries $\crl{x^m_{k,r}}$ made by any distributed-zero-respecting intermittent communication algorithm (see Definition \ref{def:zr} in Appendix \ref{app:thm5-proofs}) will have suboptimality
\begin{multline*}
\min_{m,k,r}F(x^m_{k,r}) - F^*
\geq  c\cdot\bigg[\frac{HB^2}{K^2R^2} + \min\crl*{\frac{\sigma B}{\sqrt{MKR}},HB^2} \\+ \min\crl*{\frac{HB^2}{R^2 \log^2 M},\, \frac{\sqrt{Q \sigma} B^2}{K^{1/4}R^2\log^{7/4} M},\, \frac{\sigma B}{\sqrt{KR}}}\bigg]
\end{multline*}
\end{restatable}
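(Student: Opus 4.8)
The plan is to split on the size of $Q$. When $Q$ is large the hard instance of Theorem~\ref{thm:lower-bound} already has a $Q$-Lipschitz Hessian, so nothing new is needed; when $Q$ is small I would rerun essentially the same argument with a ``flattened'' version of $\psi$ whose curvature at the origin is dialed down to a value $h\le H$ chosen so that the induced third derivative is compatible with the $Q$-Lipschitz-Hessian hypothesis. The first two terms are inherited for free: the deterministic Nesterov-type quadratic giving $HB^2/(K^2R^2)$ and the standard statistical construction giving $\min\{\sigma B/\sqrt{MKR},HB^2\}$ have constant Hessians, hence are trivially $Q$-Hessian-Lipschitz. The $HB^2/(R^2\log^2 M)$ branch of the inner $\min$ is handled by the instance of Theorem~\ref{thm:lower-bound} itself: reading off the choice of $\beta$ there, its Hessian-Lipschitz constant is of order $H^{3/2}\beta \asymp H^2\sqrt{K}/(\sigma\sqrt{\log M})$, so whenever $Q\gtrsim H^2\sqrt{K}/(\sigma\sqrt{\log M})$ that instance satisfies the hypothesis and yields the whole bound; and in exactly that regime $\sqrt{Q\sigma}B^2/(K^{1/4}R^2\log^{7/4}M)\ge HB^2/(R^2\log^2 M)$, so the displayed minimum collapses to the bound of Theorem~\ref{thm:lower-bound}. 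Thus it remains to treat $Q\lesssim H^2\sqrt{K}/(\sigma\sqrt{\log M})$.

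In that regime I would keep $F$ of the form \eqref{eq:def-F-main}, but with $\psi$ replaced by the same arctan-based function rescaled so that $\psi''(0)=h$ and $\sup_x|\psi'(x)|=L$, where I set $L\asymp \sigma\sqrt{\log M/K}$ and $h\asymp\sqrt{QL}=\sqrt{Q\sigma}\,(\log M/K)^{1/4}$; concretely this is the given formula for $\psi$ with $H$ replaced by $\Theta(h)$ and $\beta\asymp\sqrt{h}/L$. These choices make all the constraints hold simultaneously and with the right slack: $\psi$ is convex and $\|\psi''\|_\infty=h\le H$ — this last inequality is precisely the condition $Q\lesssim H^2\sqrt{K}/(\sigma\sqrt{\log M})$ — so $F$ is $H$-smooth; $\sup_x|\psi'''(x)|\asymp h^2/L\lesssim Q$, and since the difference structure of \eqref{eq:def-F-main} inflates the Hessian-Lipschitz constant only by an absolute factor, $\nabla^2 F$ is $Q$-Lipschitz; and the oracle \eqref{eq:def-g-main} has variance of order $L^2/p$, which is at most $\sigma^2$ exactly when $p\gtrsim L^2/\sigma^2\asymp\log M/K$. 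I would take $p$ at this threshold, so $Kp\asymp\log M$, the ``sweet spot'' where per-round progress is $\Theta(\log M)$.

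With these parameters the progress analysis is the same as in Theorem~\ref{thm:lower-bound}: within a round each of the $M$ machines advances $\prog{\alpha}$ by an amount dominated by an independent $\mathrm{Binomial}(K,p)$ with mean $Kp\asymp\log M$, so a Chernoff bound and a union bound over the $M$ machines and $R$ rounds give that, with probability at least $\tfrac12$, the total progress is $O(R\log M)$. Choosing $N\asymp R\log M$ then forces $\prog{\alpha}(x^m_{k,r})\le N/2$ for every query point, and the chain structure of \eqref{eq:def-F-main} (as in the proof of Theorem~\ref{thm:lower-bound}) gives $F(x^m_{k,r})-F^*\gtrsim N\bigl(\zeta\psi'(\zeta)-\psi(\zeta)\bigr)$ with $\zeta\asymp B/N^{3/2}$ so that $\|x^*\|\le B$. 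Since $\zeta$ lies well inside the quadratic region of this $\psi$, one has $\zeta\psi'(\zeta)-\psi(\zeta)\asymp h\zeta^2$, whence $F(x^m_{k,r})-F^*\gtrsim hN\zeta^2\asymp hB^2/N^2\asymp hB^2/(R^2\log^2 M)=\sqrt{Q\sigma}B^2/(K^{1/4}R^2\log^{7/4}M)$, as claimed. The remaining routine cases — $K\lesssim\log M$, where $p$ cannot be pushed below $1$ but then $HB^2/(K^2R^2)$ already dominates the $Q$-term, and $Q\lesssim H^2/\sigma$, handled similarly — are dispatched by comparison with the other terms of the stated bound, and the $\min$ against $\sigma B/\sqrt{KR}$ absorbs the parameter range in which $\zeta$ would leave the quadratic region, just as in Theorem~\ref{thm:lower-bound}.

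The main obstacle is the bookkeeping of these simultaneous constraints rather than any single step: the exponents in $\sqrt{Q\sigma}B^2/(K^{1/4}R^2\log^{7/4}M)$ appear only because the single scale left in $\psi$ after fixing $h$ and $L$ is exactly enough to make $F$ at once $H$-smooth, $Q$-Hessian-Lipschitz, equipped with a variance-$\sigma^2$ oracle, and hard for $\Omega(R\log M)$ rounds — and $h=\sqrt{QL}$ with $L=\sigma\sqrt{\log M/K}$ forces $h\le H$ precisely on the regime where this term is the active one. The second delicate point is porting the progress-concentration estimate and the suboptimality lemma of Theorem~\ref{thm:lower-bound} to the retuned $\psi$, and verifying that the quadratic-region condition on $\zeta$ can fail only inside the parameter range already covered by the $\sigma B/\sqrt{KR}$ term, so that no gap opens between the construction and the claimed minimum.
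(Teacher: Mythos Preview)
Your proposal is correct and takes essentially the same approach as the paper: both keep the arctan-based $\psi$ construction and the same oracle, and the key move in both is to dial the effective smoothness down from $H$ to $h=\sqrt{Q\sigma}(\log M/K)^{1/4}$ so that the third derivative fits under $Q$ --- the paper phrases this as ``instantiate the lower bound in terms of any $H'\le H$'' and then sets $H'=\sqrt{Q\sigma}\log^{1/4}M/K^{1/4}$ after fixing $\beta=\Theta(Q/H'^{3/2})$, which is exactly your choice. The only packaging difference is that the paper first proves a $\beta$-parametrized lemma (Lemma~\ref{lem:beta-lower-bound}) and then instantiates $\beta$ and $H'$, while you set all parameters directly and split on the size of $Q$; the computations and the resulting exponent $\sqrt{Q\sigma}B^2/(K^{1/4}R^2\log^{7/4}M)$ agree.
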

We prove this lower bound in Appendix \ref{app:thm5-proofs} using the same construction \eqref{eq:def-F-main} as we used for Theorem \ref{thm:lower-bound}, but using the parameter $\beta$ to control the third derivative of $F$.
This lower bound does not match the guarantee of \citeauthor{yuan2020federated}'s algorithm, so it does not resolve the min-max complexity. However, there is reason to suspect that the lower bound is closer to the min-max rate, at least in certain regimes. For instance, when $Q$ is taken to zero, i.e.~the objective becomes quadratic, we know that Theorem \ref{thm:bounded-third} is tight while \eqref{eq:yuan-bound} can be larger by a factor of $K$. For that reason, we suspect that \eqref{eq:yuan-bound} is suboptimal, but further analysis will be needed. At any rate, our lower bound does establish that there is a limit to the utility of assuming a Lipschitz Hessian. Specifically, there can be no advantage over the optimal algorithm from Section \ref{sec:optimal-alg} once $Q \geq O\prn*{\max\crl*{\frac{H^2\sqrt{K}}{\sigma\log^{1/2}M},\, \frac{\sigma R^3 \log^{7/2}M}{B^2\sqrt{K}}}}$.

Theorem \ref{thm:bounded-third} and \citeauthor{yuan2020federated}'s algorithm also highlight a substantial qualitative difference between distributed and sequential optimization: in the sequential setting, there is never any advantage to assuming that the objective is close to quadratic. In fact, worst-case instances for sequential optimization are exactly quadratic \citep{nemirovskyyudin1983,nesterov2004introductory,simchowitz2018randomized}.




Beyond requiring that the Hessian be Lipschitz, there are other ways of measuring an objective's closeness to a quadratic. Two notable examples are self-concordance \citep{nesterov1998introductory} and quasi-self-concordance \citep{bach2010self}, which bound the third derivative of $F$ in terms of the second derivative: we say that $F$ is $Q$-self-concordant when for all $x,v$, $f(t) = F(x + tv)$ satisfies $\abs{f'''(t)} \leq 2Q f''(t)^{3/2}$ and we say it is $Q$-quasi-self-concordant if $\abs{f'''(t)} \leq Q f''(t)$. There has been recent interest in such objectives \citep{bach2010self,zhang2015disco,karimireddy2018global,carmon2020acceleration} which arise e.g.~in logistic regression problems. In Appendix \ref{app:thm5-proofs}, we extend the lower bound in Theorem \ref{thm:bounded-third} to these settings.

\subsection{Statistical Learning Setting: Assumptions on Components}
Stochastic optimization commonly arises in the context of statistical learning, where the goal is to minimize the expected loss with respect to a model's parameters. In this case, the objective can be written $F(x) = \E_{z\sim\mc{D}} f(x;z)$, where $z\sim\mc{D}$ represents data drawn i.i.d.~from an unknown distribution, and the ``components'' $f(x;z)$ represent the loss of the model parametrized by $x$ on the example $z$. 

In the setting of Theorem \ref{thm:lower-bound}, we only place restrictions on the $F$ itself, and on the first and second moments of $g$. However, in the statistical learning setting, it is often natural to assume that the loss function $f(\cdot;z)$ itself satisfies particular properties \emph{for each $z$ individually}. For instance, in our setting we might assume $f$ is convex and smooth and furthermore that the gradient oracle is given by $g(x) = \nabla f(x;z)$ for an i.i.d.~$z\sim\mc{D}$. This is a non-trivial restriction on the stochastic gradient oracle, and it is conceivable that this property could be leveraged to design and analyze a method that converges faster than the lower bound in Theorem \ref{thm:lower-bound} would allow. 

In particular, the specific stochastic gradient oracle \eqref{eq:def-g-main} used to prove Theorem \ref{thm:lower-bound} \emph{cannot} be written as the gradient of a random smooth function. In this sense, the lower bound construction is somewhat ``unnatural,'' however, we are not aware of any analysis that meaningfully\footnote{Numerous papers assume that $F(x) = \E_{z\sim\mc{D}}f(x;z)$ and $g = \nabla f(\cdot;z)$ for some smooth, convex $f$ \citep[e.g.][]{bottou2018optimization,nguyen2019new,koloskova2020unified,woodworth2020minibatch}. Nevertheless, the \emph{purpose} of this assumption is to bound $\E\nrm{g(x)}^2$ or $\E\nrm{g(x) - \nabla F(x)}^2$ in terms of $\sigma_*^2 = \E\nrm{g(x^*)}^2$. In other words, one could prove the same guarantees in the setting of Theorem \ref{thm:lower-bound} with the additional constraint of the form $\E\nrm{g(x)}^2 \leq \sigma_*^2 + \Gamma \nrm{x-x^*}^2$ for some parameter $\Gamma$. Since the variance of the gradient oracle in our lower bound construction is bounded everywhere by a constant $\sigma^2$, it therefore applies to these analyses.} exploits the fact that $g = \nabla f(\cdot;z)$. An interesting question is whether such an assumption can be used to prove a better convergence guarantee, or whether Theorem \ref{thm:lower-bound} can be proven using a stochastic gradient oracle that obeys this constraint.


\subsection{Statistical Learning Setting: Repeated Access to Components}\label{sec:repeated-accesses}

In the statistical learning setting, it is also natural to consider algorithms that can evaluate the gradient at multiple points for the same datum $z$. Specifically, allowing the algorithm access to a pool of samples $z_1,\dots,z_N$ drawn i.i.d.~from $\mc{D}$ and to compute $\nabla f(x;z)$ for any chosen $x$ and $z_n$ opens up additional possibilities. Indeed, \citet{arjevani2019lower} showed that multiple---even just two---accesses to each component enables substantially faster convergence ($T^{-1/3}$ vs.~$T^{-1/4}$) in sequential stochastic non-convex optimization. Similar results have been shown for zeroth-order and bandit convex optimization \citep{agarwal2010optimal,duchi2015optimal,shamir2017optimal,nesterov2017random}, where accessing each component twice allows for a quadratic improvement in the dimension-dependence. 

In sequential smooth convex optimization, if $F$ has ``finite-sum'' structure (i.e.~$\mc{D}$ is the uniform distribution on $\crl{1,\dots,N}$), then allowing the algorithm to pick a component and access it multiple times opens the door to variance-reduction techniques like SVRG \citep{johnson2013accelerating}. These methods have updates of the form:
\begin{equation}
x_{t+1} = x_t - \eta_t \prn*{\nabla f(x_t;z_t) - \nabla f(\tilde{x};z_t) + \nabla F(\tilde{x})}
\end{equation}
Computing this update therefore requires evaluating the gradient of $f(x;z_t)$ at two different points, which necessitates multiple accesses to a chosen component. This stronger oracle access allows faster rates compared with a single-access oracle \citep[see discussion in, e.g.,][]{arjevani2020complexity}. 

Most relevantly, in the intermittent communication setting, distributed variants of SVRG are able to improve over the lower bound in Theorem \ref{thm:lower-bound} \citep{wang2017memory,lee2017distributed,shamir2016without,woodworth2018graph}. For example, in the intermittent communication setting when $f$ is $H$-smooth and $L$-Lipschitz, and where the algorithm can access each component multiple times, \citeauthor{woodworth2018graph} show that using distributed SVRG to optimize an empirical objective composed of suitably many samples is able to achieve convergence at the rate
\begin{equation}
\E F(\hat{x}) - F^* \leq c\cdot\prn*{\prn*{\frac{HB^2}{RK} + \frac{LB}{\sqrt{MKR}}}\log\frac{MKR}{LB}}
\end{equation}
While this guarantee (necessarily!) holds in a different setting than Theorem \ref{thm:lower-bound}, the Lipschitz bound $L$ is generally analogous to the standard deviation of the stochastic gradient variance, $\sigma$ (indeed, $L$ is an upper bound on $\sigma$). With this in mind, this distributed SVRG algorithm can beat the lower bound in Theorem \ref{thm:lower-bound} when $\sigma$, $L$, and $K$ are sufficiently large.



\subsection{Higher Order and Other Stronger Oracles}


Yet another avenue for improved algorithms in the intermittent communication setting is to use stronger stochastic oracles. For instance, a stochastic second-order oracle that estimates $\nabla^2 F(x)$ \citep{hendrikx2020statistically} or a stochastic Hessian-vector product oracle that estimates $\nabla^2 F(x) v$ given a vector $v$, which can typically be computed as efficiently as stochastic gradients. In the statistical learning setting, some recent work also considers a stochastic prox oracle which returns $\argmin_y f(y;z) + \frac{1}{2}\nrm{x-y}^2$ \citep{wang2017memory,chadha2021accelerated}.

As an example, a stochastic Hessian-vector product oracle, in conjunction with a stochastic gradient oracle can be used to efficiently implement a distributed Newton algorithm. Specifically, the Newton update $x_{t+1} = x_t - \eta_t \nabla^2 F(x_t)^{-1} \nabla F(x_t)$ can be rewritten as
\begin{equation}
x_{t+1} = x_t + \eta_t \argmin_{y} \crl*{\frac{1}{2} y^\top \nabla^2 F(x_t) y + \nabla F(x_t)^\top y}
\end{equation}
That is, each update can be viewed as the solution to a quadratic optimization problem, and its stochastic gradients can be computed using stochastic Hessian-vector and gradient access to $F$. The DiSCO algorithm \citep{zhang2015disco} uses distributed preconditioned conjugate gradient descent to find an approximate Newton step. Alternatively, as previously discussed, this quadratic can be minimized to high accuracy using a single round of communication using Accelerated Local SGD. Under suitable assumptions (e.g., that $F$ is convex, smooth and self-concordant), this algorithm may converge substantially faster than the lower bounds in Theorems \ref{thm:lower-bound} and \ref{thm:bounded-third} would allow for first-order methods.

\paragraph{Differences from Sequential Setting:} Interestingly, in the sequential setting there is no benefit to using stochastic Hessian-vector products over and above what can be achieved using just a stochastic gradient oracle. This is because the worst-case instances are simply quadratic, in which case Hessian-vector products and gradients are essentially equivalent. This adds to a list of structures that facilitate distributed optimization while being essentially useless in the sequential setting. Likewise, objectives being quadratic or near-quadratic facilitates distributed optimization but does not help sequential algorithms since, again, the hard instances for sequential optimization are already quadratic. Furthermore, accessing a statistical learning gradient oracle $\nabla f(\cdot;z)$ multiple times can allow for faster distributed algorithms---e.g.~distributed SVRG or using the stochastic gradients to implement stochastic Hessian-vector products via finite-differencing---but it does not generally help in the sequential case without further assumptions (like the problem having finite-sum structure).


\subsection{Beyond Single-Sample Oracles}

Another class of distributed optimization algorithms, which includes ADMM \citep{boyd2011distributed} and DANE \citep{shamir2014communication}\removed{, DiSCO \citep{zhang2015disco} and AIDE \citep{reddi2016aide}}, involve solving an optimization problem on each machine $m=1..M$ at each round $r$ of the form
\begin{equation}\label{eq:mbprox}
    \min_x \frac{1}{K}\sum_{k=1}^K f(x;z_{k,r}^m) + \lambda_{r,m} \norm{x-y_{r,m}}^2, 
\end{equation}
where $f(\cdot;z)$ are components of the objective $F(x)=\E f(x,z)$, and the vectors $y_{r,m}$ and scalars $\lambda_{r,m}$ are chosen by the algorithm.  Although these methods also involve processing $K$ samples,  or components, at each round on each machine, and then communicating between the machines, they are quite distinct from the stochastic optimization algorithms we consider, and fall well outside the ``stochastic optimization with intermittent communication'' model we study.  The main distinction is that in this paper we are focused on stochastic optimization methods, where each oracle access or ``atomic operation'' involves a single ``data point'' $z^m_{k,r}$ (a single component of a stochastic objective), or in our first-order model, a single stochastic gradient estimate, and can generally be performed in time $O(d)$, where $d$ is the dimensionality of $x$.  In particular, each round consists of $K$ separate accesses, and in all the methods we consider, can be implemented in time $O(dK)$.  In contrast, \eqref{eq:mbprox} is a complex optimization problem involving many data points, and cannot be solved with $O(K)$ atomic operations\footnote{It could perhaps be approximately solved using a small number of passes over the $K$ data points, which would put us back within the scope what we study in this paper, but that is not how these method are generally analyzed.}.  This distinction results in the first term of the lower bound in Theorem \ref{thm:lower-bound}, namely the ``optimization term'' $HB^2/(K^2R^2)$, not applying for methods using \eqref{eq:mbprox}.  In particular, even ignoring $M-1$ machines and running the Mini-Batch Prox method \citep{wang2017memory} on a single machine results ensures a suboptimality of 
\begin{equation}\label{eq:mbproxBound}
    \E F(\hat{x}) - F^* \leq O\left(\frac{\sigma B}{\sqrt{KR}}\right), 
\end{equation}
entirely avoiding the first term of Theorem \ref{thm:lower-bound}, and beating the lower bound when $\sigma^2$ is small.  

Another difference is that DANE, as well as other methods which target Empirical Risk Minimization such as DiSCO \citep{zhang2015disco} and AIDE \citep{reddi2016aide}, work on the same batch of $K$ examples per machine in all rounds, i.e.~they use $z^m_{k,r}=z^m_k$ with only $KM$ (rather than $KRM$) random samples $\left\{ z^m_k\right\}_{k\in[K],m\in[M]}$.  In our setup and terminology, they thus require repeated access to components, as discussed above in Section \ref{sec:repeated-accesses}.  Furthermore, since they only use $KM$ samples overall, they cannot guarantee suboptimality better than $\sigma B/\sqrt{KM}$, a factor of $\sqrt{R}$ worse than the second term in Theorem \ref{thm:lower-bound}.

The Mini-Batch Prox guarantee \eqref{eq:mbproxBound} is disappointing, and suboptimal, once $\sigma^2$  and $M$ are large, and DANE is not optimal, at least when $R$ is large.  Understanding the min-max complexity of the class of methods which solve \eqref{eq:mbprox} at each round on each machine thus remains an important and interesting open problem. We note that lower bounds and the optimality of some of these methods were studied in \citet{arjevani2015communication}, but in a somewhat different, non-statistical distributed setting.

\paragraph{Acknowledgements}
BW is supported by a Google PhD Research Fellowship. This work is also partially supported by NSF-CCF/BSF award 1718970/2016741, and NS is also supported by a Google Faculty Research Award.

\bibliographystyle{plainnat}
\bibliography{bibliography}

\appendix

\section{A Framework for Proving Lower Bounds for Randomized Algorithms}\label{app:framework}

Our lower bounds are based on the idea of showing that in sufficiently high dimension, any randomized algorithms behaves almost as if it is zero-respecting, meaning that its oracle queries are close to the subspace spanned by the previously-seen oracle responses. To formalize this, for a vector $x$, we define its $\alpha$-progress as
\begin{equation}
\prog{\alpha}(x) := \max\crl*{i\,:\, \abs{x_i} > \alpha}
\end{equation}
Our lower bound proceeds by showing that any intermittent communication algorithm will fail to achieve a high amount of progress, even for randomized algorithms that leave the span of previous stochastic gradient queries. We now define the properties that we will require from our oracle construction:
\begin{definition}\label{def:robust-zero-chain-oracle}
A stochastic zeroth- and first-order oracle that returns $(f(x;z),g(x;z))$ for $z \sim\mc{D}$ is an $(\alpha,p,\delta)$-robust-zero-chain if there exist sets $\mc{Z}_0, \mc{Z}_1$ such that 
\begin{enumerate}
\item $\P(z \in \mc{Z}_0 \cup \mc{Z}_1) \geq 1-\delta$
\item $\P(z\in\mc{Z}_0 | z \in \mc{Z}_0 \cup \mc{Z}_1) \geq 1-p$
\item For all $z \in \mc{Z}_0$ and all $x$, $\prog{0}(g(x;z)) \leq \prog{\alpha}(x)$ and there exist functions $f_1,f_2,\dots$ and $g_1,g_2,\dots$ such that
\[
\prog{\alpha}(x) \leq i \implies \begin{cases} 
f(x;z) = f_i(x_1,x_2,\dots,x_i;z) & \\
g(x;z) = g_i(x_1,x_2,\dots,x_i;z) &
\end{cases}
\]
\item For all $z \in \mc{Z}_1$ and all $x$, $\prog{0}(g(x;z)) \leq \prog{\alpha}(x) + 1$ and there exist functions $f_1,f_2,\dots$ and $g_1,g_2,\dots$ such that
\[
\prog{\alpha}(x) \leq i \implies \begin{cases} 
f(x;z) = f_i(x_1,x_2,\dots,x_{i+1};z) & \\
g(x;z) = g_i(x_1,x_2,\dots,x_{i+1};z) &
\end{cases}
\]
\end{enumerate}
\end{definition}
In this section, our main result is to show that any algorithm that interacts with a robust zero chain will have low progress:
\begin{restatable}{lemma}{intermittentcommunicationprogress}\label{lem:intermittent-communication-progress}
For any $\gamma > 0$, let $(f,g)$ be an $(\alpha,p,\delta)$-robust-zero-chain, let $U\in\R^{D\times d}$ be a uniformly random orthogonal matrix with $U^\top U = I_{d\times{}d}$ for $D \geq d + \frac{2\gamma^2}{\alpha^2}\log(32MKRd)$, and let $x^m_{k,r}$ be the $k\mathth$ oracle query on the $m\mathth$ machine during the $r\mathth$ round of communication for an intermittent communication algorithm that interacts with the oracle $(f_U,g_U)$ for $f_U(x;z) = f(U^\top x;z)$ and $g_U(x;z) := U g(U^\top x;z)$. Then if $\max_{m,k,r} \nrm{x^m_{k,r}} \leq \gamma$, then the algorithm's output $\hat{x}$ will have progress at most
\[
\P\prn*{\prog{\alpha}(U^\top \hat{x}) \leq \min\crl*{KR,\ 8KRp + 12R\log M + 12R}} \geq \frac{5}{8} - 2MKR\delta
\]
\end{restatable}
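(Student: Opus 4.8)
The plan is to reduce the behavior of an arbitrary randomized intermittent communication algorithm to that of a ``zero-respecting'' algorithm via the random rotation $U$, and then to track the progress coordinate-by-coordinate through the $R$ rounds. First I would set up the key rotation lemma: because $U$ is a uniformly random orthogonal matrix and all queries satisfy $\nrm{x^m_{k,r}}\leq\gamma$, for any fixed direction $v$ orthogonal to the span of the previously revealed coordinates, the projection $\tri{U^\top x, e_j}$ for an ``unrevealed'' index $j$ is (conditionally) a nearly-Gaussian scalar with standard deviation $\approx \gamma/\sqrt{D}$; choosing $D \geq d + \tfrac{2\gamma^2}{\alpha^2}\log(32MKRd)$ makes this smaller than $\alpha$ except with probability $\leq \delta$ per query after a union bound over all $MKR$ queries. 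This is the standard ``high-dimensional rotation forces zero-respecting behavior'' argument (as in \citet{woodworth16tight,carmon2017lower1,arjevani2019lower}), and it is what lets us henceforth pretend $\prog{\alpha}(U^\top x^m_{k,r})$ can only be one more than the maximum progress among oracle responses this machine has already seen in the current and previous rounds.

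Next I would condition on the event (probability $\geq 1 - 2MKR\delta$, combining the rotation failure event with the $\P(z\in\mc Z_0\cup\mc Z_1)\geq 1-\delta$ clause of the robust-zero-chain definition, over all $MKR$ oracle calls) that (i) all queries are effectively zero-respecting and (ii) every sampled $z$ lies in $\mc Z_0\cup\mc Z_1$. On this event, within a single round $r$, a single machine $m$ makes $K$ sequential queries; by the chain structure in Definition \ref{def:robust-zero-chain-oracle}, the progress of the oracle responses on that machine can increase by $1$ on a given query only if that query's datum lands in $\mc Z_1$ (probability $\leq p$ conditionally), and otherwise it cannot increase at all. So the increase in a machine's progress over one round is stochastically dominated by $1 + \mathrm{Binomial}(K,p)$ — the leading ``$1$'' because even a single $\mc Z_1$ sample followed by $K-1$ more queries at the new frontier can only advance by $1$ more per additional $\mc Z_1$ hit, and one must be a little careful that a machine can also ``inherit'' a frontier one step ahead at the start of the round; I would phrase this as: machine $m$'s end-of-round-$r$ progress $\leq$ (best progress communicated at end of round $r-1$) $+ 1 + N^m_r$ where $N^m_r\sim\mathrm{Binomial}(K,p)$. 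Then the best progress after round $r$ across all machines is at most the best after round $r-1$ plus $1 + \max_m N^m_r$. Unrolling over $R$ rounds gives total progress $\leq R + \sum_{r=1}^R \max_{m\in[M]} N^m_r$.

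The final step is the concentration estimate for $\sum_{r=1}^R \max_{m} N^m_r$. Each $N^m_r$ is $\mathrm{Binomial}(K,p)$ with mean $Kp$; a Chernoff bound gives $N^m_r \leq 2Kp + O(\log M)$ with probability $\geq 1 - 1/(8MR)$ for each $(m,r)$, so by a union bound $\max_m N^m_r \leq 2Kp + c\log M$ for all $r$ simultaneously with probability $\geq 7/8$, and summing over $r$ yields $\sum_r \max_m N^m_r \leq 2KRp + cR\log M$; folding in the additive $R$ and adjusting constants to match $8KRp + 12R\log M + 12R$ (the slack in the constants absorbs the ``inheritance'' subtleties and the case $\log M$ small), and intersecting with the trivial bound $\prog{\alpha}(U^\top\hat x)\leq KR$ (at most $KR$ oracle responses total), gives the claimed bound with failure probability $\leq 3/8 + 2MKR\delta$, i.e.\ success probability $\geq 5/8 - 2MKR\delta$. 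The main obstacle is the second step — carefully justifying that the per-round progress increment of the \emph{collection} of machines is really governed by $1 + \mathrm{Binomial}(K,p)$ per machine, in particular handling the sequential dependence within a round (later queries on a machine depend on earlier responses) and the fact that an adversarial algorithm chooses which machine's frontier to build on; the cleanest route is to define a filtration over the $(m,k,r)$ queries in execution order and show that the indicator of ``this query advances the global frontier'' is, conditionally, Bernoulli$(\leq p)$ unless it is the ``free'' first advance per machine per round, then bound the number of free advances by $R$ (one per round, since all machines start a round from the same communicated state).
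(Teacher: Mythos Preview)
Your overall architecture matches the paper's: random rotation forces (conditional) zero-respecting behavior, then bound the progress by a sum over rounds of per-round maxima of $M$ independent $\mathrm{Binomial}(K,p)$ variables. Two points, however, need repair.

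First, your concentration step does not give the stated constants. You want $\max_m N^m_r \leq 2Kp + c\log M$ to hold for \emph{all} $r\in[R]$ simultaneously with constant probability, which via a union bound requires the per-$(m,r)$ Chernoff tail to be $\leq 1/(8MR)$. But the Chernoff bound for $\mathrm{Binomial}(K,p)$ at level $2Kp + c\log M$ only gives a tail of order $\exp(-c'\log M)$, not $\exp(-c'\log(MR))$; to push it down to $1/(8MR)$ you would have to inflate the threshold to $2Kp + c\log(MR)$, and summing over rounds then yields $2KRp + cR\log M + cR\log R$, which does not match the target $8KRp + 12R\log M + 12R$. The paper sidesteps this by computing $\E[\max_m N^m_r\mid Z]\leq 2Kp + 3\log M + 3$ directly (integrating the Chernoff tail, essentially your bound with $\epsilon = 1 + \tfrac{3\log M}{Kp}$) and then applying Markov's inequality to the \emph{sum} $\sum_r \max_m N^m_r$ with a factor of $4$, which costs only the $1/4$ in the failure probability and no $\log R$.

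Second, the ``$+1$ free advance per round'' you introduce is spurious: by the robust-zero-chain definition, a query with $z\in\mc Z_0$ cannot increase $\prog{0}$ of the response at all, and the communicated frontier at the start of a round is exactly the best progress from the previous round, not one more. The paper's counter $S^m_{k,r}$ is precisely the cumulative number of $\mc Z_1$ hits along the best path, with no additive $R$; the $12R$ in the final bound arises from the ``$+3$'' constant in the expectation calculation, not from any free step. Relatedly, your probability bookkeeping conflates two distinct sources: the rotation failure (controlled by the dimension condition, giving a fixed $1/8$ via concentration on the sphere and a union bound over $\leq(MKR{+}1)d$ query--coordinate pairs) and the event $\lnot Z$ (giving $\leq MKR\delta$); the paper keeps these separate and the final $2MKR\delta$ comes from bounding two conditional probabilities on $\lnot Z$ by $1$ each, not from counting the rotation failure as $MKR\delta$.
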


The main ideas leading to Lemma \ref{lem:intermittent-communication-progress} stem from \cite{woodworth16tight} and \cite{carmon2017lower1}, who show that when a random rotation is applied to the objective and the dimension is sufficiently large, every algorithm behaves essentially as if its queries remained in the span of previously seen gradients. In the original arguments, the proof of this claim was extremely complicated and required a great deal of care due to subtleties with conditioning on the stochastic gradient oracle queries. Since then, the argument has gradually be refined and simplified, culminating in \citet{yairthesis} who presents the simplest argument to date. The proof of Lemma \ref{lem:intermittent-communication-progress} therefore resembles the proof of \citep[Proposition 2.4][]{yairthesis}, however, the arguments must be extended to accomodate the intermittent communication setting.

To facilitate our proofs, we introduce some notation. Recalling $\mc{Z}_0$ and $\mc{Z}_1$ from Definition \ref{def:robust-zero-chain-oracle}, we define
\begin{equation}
S^m_{k,r} = \min\crl*{d,\ \sum_{k'=1}^{k-1} \indicator{z^m_{k',r} \in \mc{Z}_1} + \sum_{r'=1}^{r-1}\max_{1\leq m'\leq M}\sum_{k'=1}^K \indicator{z^{m'}_{k',r'} \in \mc{Z}_1}}
\end{equation}
We will show that this quantity $S^m_{k,r}$ essentially upper bounds the $\alpha$-progress of the $k\mathth$ query during the $r\mathth$ round of communication on the $m\mathth$ machine. We also define the following ``good events'' where the progress of the algorithm's oracle queries remains small
\begin{align}
\mc{G}^m_{k,r} &= \crl*{\prog{\alpha}(U^\top x^m_{k,r}) \leq S^m_{k,r}} \\
\bar{\mc{G}}^m_{k,r} &= \bigcap_{k'<k}\mc{G}^m_{k',r} \cap \bigcap_{r'<r}\bigcap_{m',k'} \mc{G}^{m'}_{k',r'}
\end{align}
We also define the event
\begin{equation}
Z = \crl*{\forall_{m,k,r}\ z^m_{k,r} \in \mc{Z}_0 \cup \mc{Z}_1}
\end{equation}
Finally, we use
\begin{equation}
U_{\leq i} = \brk*{U_1,U_2,\dots,U_i,0,\dots,0}
\end{equation}
to denote the matrix $U$ with the $(i+1)\mathth$ through $d\mathth$ columns replaced by zeros.

We begin by showing that when the good events $\bar{\mc{G}}^m_{k,r}$ happen, the algorithm's queries are determined by only a subset of the columns of $U$.
\begin{lemma}\label{lem:robust-zero-chain-u-dependence}
Let $(f,g)$ be an $(\alpha,p,\delta)$-robust-zero-chain, let $U \in \R^{D\times d}$ be a uniformly random orthogonal matrix with $U^\top U = I_{d\times d}$, and let $x^m_{k,r}$ be the $k\mathth$ oracle query on the $m\mathth$ machine during the $r\mathth$ round of communication for an intermittent communication algorithm that interacts with the oracle $(f_U,g_U)$ for $f_U(x;z) = f(U^\top x;z)$ and $g_U(x;z) := U g(U^\top x;z)$. Then conditioned on $\mc{S}$---the $\sigma$-algebra generated by $\crl{S^m_{k,r}}_{m,k,r}$---the events $\bar{G}^m_{k,r}$ and $Z$, and the query $x^m_{k,r}$ is a measurable function of $\xi$---the algorithm's random coins---and $U_{\leq S^m_{k,r}}$. Similarly, conditioned on $\mc{S}$, $Z$, and $\bigcap_{m=1}^M\bar{G}^m_{K,R}$, the output of the algorithm, $\hat{x}$ is a measurable function of $\xi$ and $U_{\leq \max_m S^m_{K,R}}$.
\end{lemma}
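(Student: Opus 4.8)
The plan is to prove the claim by induction on the sequence of oracle queries in the natural order: queries within a round are ordered by $k$, and rounds are ordered by $r$. The key structural fact I will lean on is that an intermittent communication algorithm, by definition in the graph-oracle framework, produces each query $x^m_{k,r}$ as a measurable function of (i) the algorithm's random coins $\xi$, and (ii) all oracle responses $(f_U(x^{m'}_{k',r'};z^{m'}_{k',r'}), g_U(x^{m'}_{k',r'};z^{m'}_{k',r'}))$ that are available to machine $m$ at that point — i.e.\ its own earlier queries in round $r$ together with all machines' queries from rounds $r' < r$. So it suffices to show that, on the conditioning event, each such available response is itself a measurable function of $\xi$ and $U_{\leq S^m_{k,r}}$; then the composition is too, and measurability of $\hat x$ follows identically since $\hat x$ is a function of $\xi$ and all responses from all machines through round $R$, whose relevant index is bounded by $\max_m S^m_{K,R}$.

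The core step is the following: conditioned on $\mathcal{S}$, $Z$, and $\bar{\mathcal{G}}^m_{k,r}$, the response at query $x^{m'}_{k',r'}$ (with $(k',r')$ preceding $(k,r)$ in the order, on the appropriate machine) depends on $U$ only through $U_{\leq S^{m'}_{k',r'}+1}$, and in fact — because of how the robust-zero-chain interacts with the indicator increments defining $S$ — through $U_{\leq S^m_{k,r}}$. Here is where the two cases of Definition \ref{def:robust-zero-chain-oracle} enter. On the event $\bar{\mathcal{G}}^{m'}_{k',r'} \supseteq \bar{\mathcal{G}}^m_{k,r}$ we have $\prog{\alpha}(U^\top x^{m'}_{k',r'}) \leq S^{m'}_{k',r'}$, so setting $i = S^{m'}_{k',r'}$, Definition \ref{def:robust-zero-chain-oracle} gives that when $z^{m'}_{k',r'}\in\mathcal{Z}_0$ the response is a function of $(U^\top x^{m'}_{k',r'})_{1:i}$, and when $z^{m'}_{k',r'}\in\mathcal{Z}_1$ it is a function of $(U^\top x^{m'}_{k',r'})_{1:i+1}$; in both cases the output gradient $g$ has $\prog{0}(g(\cdot;z)) \leq i+1$, so $g_U = Ug(U^\top\cdot;z)$ lives in the span of the first $i+1$ columns of $U$. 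But on the event $Z\cap\mathcal{S}$, the index $i+1$ is exactly absorbed by the increment $\indicator{z^{m'}_{k',r'}\in\mathcal{Z}_1}$ that appears in the definition of every later $S$; so the $+1$ only occurs when the corresponding indicator in $S^m_{k,r}$ fires, meaning $S^{m'}_{k',r'} + \indicator{z^{m'}_{k',r'}\in\mathcal{Z}_1} \leq S^m_{k,r}$ (using also that $S^m_{k,r}$ takes the per-round maximum over machines). Iterating this bookkeeping down the query order — and using that $(U^\top x^{m'}_{k',r'})_{1:j}$ depends on $U$ only through $U_{\leq j}$ — closes the induction.

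The step I expect to be the main obstacle is the bookkeeping around conditioning on $\mathcal{S}$ and $Z$ and keeping the dependence on $U$ honest while the $S^m_{k,r}$ are themselves random variables depending on the $z$'s. One has to be careful that, although $S^m_{k,r}$ is $\mathcal{S}$-measurable and so "known" after conditioning, the event $\bar{\mathcal{G}}^m_{k,r}$ couples the queries, the rotation $U$, and the $z$'s, so the inductive hypothesis must be phrased as a statement that holds \emph{on} the conditioning event rather than unconditionally — i.e.\ there exist measurable functions agreeing with the queries on $\bar{\mathcal{G}}^m_{k,r}\cap Z$ — and one must check that composing such "conditional" functions is legitimate. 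The graph-oracle formalism of \citet{woodworth2018graph} guarantees the algorithm's query map is measurable in its inputs, and conditioning on $Z$ (all $z^m_{k,r}\in\mathcal{Z}_0\cup\mathcal{Z}_1$) is what makes the robust-zero-chain's case analysis applicable at every query simultaneously; the per-round maximum in the definition of $S^m_{k,r}$ is precisely what lets the bound survive the synchronization at each communication round, since after communicating, machine $m$ sees queries from \emph{all} machines and so must absorb the largest index reached in the previous round.
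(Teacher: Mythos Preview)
Your proposal is correct and follows essentially the same approach as the paper's proof: both argue that every oracle response preceding $(m,k,r)$, under $Z\cap\bar{\mathcal{G}}^m_{k,r}$, factors through $U_{\leq S^m_{k,r}}$ via the robust-zero-chain case analysis and the key bookkeeping inequality $S^{m'}_{k',r'}+\indicator{z^{m'}_{k',r'}\in\mathcal{Z}_1}\leq S^m_{k,r}$, and then compose with the query map. If anything, you are more explicit than the paper about the inductive/recursive nature of the argument (the paper writes the final expression for $x^m_{k,r}$ in terms of prior $x^{m'}_{k',r'}$'s without spelling out that those in turn have already been shown to depend only on $U_{\leq S^{m'}_{k',r'}}$), and about the need to phrase measurability as ``equal on the conditioning event'' --- both are points the paper glosses over but which your plan handles correctly.
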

\begin{proof}
The dependence structure of an intermittent communication algorithm's queries is determined by the communication between the machines. Specifically, the $m\mathth$ machine's $k\mathth$ query during the $r\mathth$ round of communication can only depend on oracle queries that have been communicated to that machine at that time. In other words, $x^m_{k,r}$ may depend on oracle responses
\begin{equation}
\prn*{f_U(x^m_{k',r};z^m_{k',r}), g_U(x^m_{k',r};z^m_{k',r})}
\end{equation}
for $k' < k$---i.e., oracle queries made on that machine earlier in the current round of communication, or on oracle responses
\begin{equation}
\prn*{f_U(x^{m'}_{k',r'},z^{m'}_{k',r'}), g_U(x^{m'}_{k',r'},z^{m'}_{k',r'})}
\end{equation}
for $r' < r$---i.e., oracle queries made on any machine in earlier rounds of communication. Letting $\xi$ denote the random coins of the algorithm, there are thus query functions $\mc{Q}^m_{k,r}$ such that
\begin{multline}
x^m_{k,r} = \mc{Q}^m_{k,r}\bigg(\crl*{x^m_{k',r}, f_U(x^m_{k',r},z^m_{k',r}), g_U(x^m_{k',r},z^m_{k',r}):k'<k} \\
\cup \crl*{x^{m'}_{k',r'}, f_U(x^{m'}_{k',r'},z^{m'}_{k',r'}), g_U(x^{m'}_{k',r'},z^{m'}_{k',r'}):r'<r}, \xi \bigg)
\end{multline}
The key question is: upon which columns of $U$ does the righthand side of this equation depend when we condition on $\mc{S}$ and $\bar{\mc{G}}^m_{k,r}$? To answer this, we note that by Definition \ref{def:robust-zero-chain-oracle}, if $z \in \mc{Z}_0$ then for any $x$ with $\prog{\alpha}(U^\top x) \leq i$, we have 
\begin{align}
f_U(x;z) &= f(U^\top x;z) = f_i(U_{\leq i}^\top x;z) \\
g_U(x;z) &= g(U^\top x;z) = U g_i(U_{\leq i}^\top x;z)
\end{align}
and furthermore, $\prog{0}(g(U^\top x;z))$ so $U g_i(U_{\leq i}^\top x;z) = U_{\leq i} g_i(U_{\leq i}^\top x;z)$. Therefore, for $z \in \mc{Z}_0$ and $\prog{\alpha}(U^\top x) \leq i$, the oracle response $(f_U(x;z),g_U(x;z))$ depends only on $U_{\leq i}$. By essentially the same argument, for $z \in \mc{Z}_1$ and $\prog{\alpha}(U^\top x) \leq i$, the oracle response $(f_U(x;z),g_U(x;z))$ depends only on $U_{\leq i + 1}$.

Therefore, conditioned on the events $Z$ and $\bar{\mc{G}}^m_{k,r}$, for each $m',k',r'$ such that $m'=m$, $r'=r$, and $k'<k$ or $r'<r$, let $i^{m'}_{k',r'} = \min\crl*{d,\ S^{m'}_{k',r'} + \indicator{z^{m'}_{k',r'} \in \mc{Z}_1}} \leq S^m_{k,r}$ then
\begin{align}
f_U(x^{m'}_{k',r'}; z^{m'}_{k',r'}) &= f_{U_{\leq i^{m'}_{k',r'}}}(x^{m'}_{k',r'};z^{m'}_{k',r'}) = f_{U_{\leq S^m_{k,r}}}(x^{m'}_{k',r'};z^{m'}_{k',r'}) \\
g_U(x^{m'}_{k',r'}; z^{m'}_{k',r'}) &= g_{U_{\leq i^{m'}_{k',r'}}}(x^{m'}_{k',r'};z^{m'}_{k',r'}) = g_{U_{\leq S^m_{k,r}}}(x^{m'}_{k',r'};z^{m'}_{k',r'})
\end{align}
We conclude that conditioned on $Z$ and $\bar{\mc{G}}^m_{k,r}$
\begin{multline}
x^m_{k,r} = \mc{Q}^m_{k,r}\bigg(\crl*{x^m_{k',r}, f_{U_{\leq S^m_{k,r}}}(x^m_{k',r},z^m_{k',r}), g_{U_{\leq S^m_{k,r}}}(x^m_{k',r},z^m_{k',r}):k'<k} \\
\cup \crl*{x^{m'}_{k',r'}, f_{U_{\leq S^m_{k,r}}}(x^{m'}_{k',r'},z^{m'}_{k',r'}), g_{U_{\leq S^m_{k,r}}}(x^{m'}_{k',r'},z^{m'}_{k',r'}):r'<r}, \xi \bigg)
\end{multline}
so conditioned on $\mc{S}$, $Z$, and $\bar{\mc{G}}^m_{k,r}$, $x^m_{k,r}$ is a measurable function of $U_{\leq S^m_{k,r}}$ and $\xi$.

We can apply the same argument to the algorithm's output
\begin{equation}
\hat{x} = \hat{X}\prn*{\crl*{x^m_{k,r}, g_{U_{\leq \max_m S^m_{K,R}}}(x^m_{k,r},z^m_{k,r})}_{m,k,r}, \xi}
\end{equation}
which is measurable with respect to $U_{\leq \max_m S^m_{K,R}}$ and $\xi$ conditioned on $\mc{S}$, $Z$, and $\cap_m \bar{\mc{G}}^m_{K,R}$.
\end{proof}

Next, we show a constant-probability upper bound on the random variables $S^m_{k,r}$: 
\begin{lemma}\label{lem:sum-max-binomials}
For any $(\alpha,p,\delta)$-robust-zero-chain,
\[
\P\prn*{\max_{m,k,r} S^m_{k,r} \geq \min\crl*{KR,\ 8KRp + 12R\log M + 12R}\,\middle|\,Z} \leq \frac{1}{4}
\]
\end{lemma}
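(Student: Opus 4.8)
The plan is to bound $\max_{m,k,r} S^m_{k,r}$ by controlling the two sums that define it. Conditioned on $Z$, each indicator $\indicator{z^m_{k,r}\in\mc{Z}_1}$ is (by property 2 of Definition \ref{def:robust-zero-chain-oracle}) a Bernoulli random variable with parameter at most $p$, and these are independent across all $(m,k,r)$ since the $z^m_{k,r}$ are independent draws from $\mc{D}$. So within a single round $r'$, the inner quantity $\sum_{k'=1}^K \indicator{z^{m'}_{k',r'}\in\mc{Z}_1}$ is stochastically dominated by $\mathrm{Binomial}(K,p)$, and $\max_{m'} \sum_{k'=1}^K \indicator{z^{m'}_{k',r'}\in\mc{Z}_1}$ is a maximum of $M$ such (independent) binomials. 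Since $S^m_{k,r}$ sums at most one such inner term per round $r'<r$ plus a partial sum of at most $K$ more indicators from the current round, it suffices to show that $\sum_{r'=1}^{R} \max_{m'} \mathrm{Bin}^{(r',m')}(K,p) \leq 8KRp + 12R\log M + 12R$ with probability at least $3/4$ conditioned on $Z$ — the extra at-most-$K$ indicators from the current round only add another $\mathrm{Bin}(K,p)$ term, which I can absorb by adjusting constants (or simply note $S^m_{k,r}\le$ the full $R$-round sum anyway, since the current-round partial sum is dominated by what a full round contributes).

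**Bounding a single maximum of binomials.**
First I would handle one term $X_{r'} := \max_{1\le m'\le M}\mathrm{Bin}^{(r',m')}(K,p)$. A multiplicative Chernoff bound gives, for each fixed $m'$, $\P(\mathrm{Bin}(K,p) \geq Kp + t) \leq \exp(-\min\{t^2/(3Kp), t/3\})$ roughly; a union bound over $m'$ shows that $\P(X_{r'} \geq Kp + s) \leq M\exp(-c\min\{s^2/(Kp),s\})$. Choosing $s$ of order $Kp + \log M$ makes this small, so $\E[X_{r'}] \lesssim Kp + \log M + 1$ and, more importantly, $X_{r'}$ has sub-exponential-type upper tails around $Kp + \log M + 1$. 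I would make this precise by writing $X_{r'} \leq (Kp + a\log M + a) + Y_{r'}$ where $Y_{r'}\ge 0$ is a nonnegative random variable with $\E[Y_{r'}]\le$ some small constant and good tail decay, for an appropriate absolute constant $a$.

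**Summing over rounds.**
Then I sum: $\sum_{r'=1}^R X_{r'} \leq R(Kp + a\log M + a) + \sum_{r'=1}^R Y_{r'}$. The deterministic part is already $\le KRp + aR\log M + aR$, comfortably within the target $8KRp + 12R\log M + 12R$ provided $a$ is small enough (say $a\le 11$). For the random part $\sum_{r'} Y_{r'}$: the $Y_{r'}$ are independent across $r'$ (different rounds use disjoint, independent samples, even after conditioning on $Z$, which is a product event), each nonnegative with small mean, so by Markov's inequality applied to the sum — or a Bernstein bound if sharper concentration is needed — $\sum_{r'} Y_{r'}$ is, with probability at least $3/4$, at most $4R\,\E[Y_{1}]$, which I can arrange to be $\le 6KRp + 6R\log M + 6R$ or smaller by the construction of $Y$. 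Adding the deterministic and random parts and checking the constants close (with room to spare, to swallow the current-round partial sum and the conditioning on $Z$) gives the claimed bound with failure probability $\le 1/4$. Finally, the $\min\{KR,\cdot\}$ in the statement is free: $S^m_{k,r}$ is deterministically at most $KR$ by definition (it counts at most $K$ indicators in each of at most $R$ rounds, with the outer $\min\{d,\cdot\}$ only helping), so the event in the lemma contains the event I bounded.

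**Main obstacle.**
The main delicate point is getting the constants to actually close — the target $8KRp + 12R\log M + 12R$ is explicit, so I need the Chernoff/union-bound constants, the Markov step on $\sum Y_{r'}$, and the absorption of the current-round partial sum all to fit inside these specific numbers simultaneously. The cleanest route is probably to avoid trying to be tight: use the crudest workable Chernoff bound, note $Kp + s$ with $s = Kp + 3\log M + 3$ already kills the union bound over $M$ machines when, say, $s\ge 3\log M$ and $s \ge 3$, giving $M e^{-s}\le e^{-2\log M - \text{stuff}}$-type decay; this makes $Y_{r'}$ genuinely tiny in expectation, so Markov over the $R$-fold sum loses almost nothing. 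The only real case analysis is whether $Kp$ dominates $\log M$ or vice versa (which governs whether the $s^2/(Kp)$ or the $s$ branch of the Chernoff exponent is active), and in both cases the choice $s\asymp Kp+\log M+1$ works; I would just verify the two branches separately and take the worse constant.
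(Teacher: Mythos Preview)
Your proposal is correct and follows essentially the same approach as the paper: reduce to the full $R$-round sum of per-round maxima, use a Chernoff-plus-union bound to control a single $\max_{m'}\mathrm{Bin}(K,p)$, and then apply Markov's inequality across rounds. The paper's execution is slightly cleaner than yours---rather than decomposing $X_{r'}$ into a deterministic piece plus a residual $Y_{r'}$ and applying Markov to $\sum_{r'} Y_{r'}$, the paper simply computes $\E[X_{r'}]$ directly by integrating the tail bound (yielding $\E[X_{r'}] \le 2Kp + 3\log M + 3$), sums to get $\E[\sum_r X_r] \le 2KRp + 3R\log M + 3R$, and applies Markov once with a factor of $4$, which immediately produces the constants $8,12,12$ without any case analysis on whether $Kp$ or $\log M$ dominates.
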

\begin{proof}
The claim is implied by 
\begin{equation}
\P\prn*{\sum_{r=1}^R\max_{1\leq m \leq M} \sum_{k=1}^K \indicator{z^m_{k,r} \in \mc{Z}_1} \geq \min\crl*{KR,\ 8KRp + 12R\log M + 12R}\,\middle|\,Z} \leq \frac{1}{4}
\end{equation}
The seed, $z$, for the oracle queries are independent, so, conditioned on $Z$ the indicators $\indicator{z^m_{k,r} \in \mc{Z}_1}$ are independent Bernoulli random variables with success probability at most $p$. It follows that for each $m$ and $r$, $\sum_{k=1}^K \indicator{z^m_{k,r} \in \mc{Z}_1}$ are independent $\textrm{Binomial}(K,p)$ random variables. 

Therefore, for each $r$, by the union bound and then the Chernoff bound, for any $c \geq 0$
\begin{equation}
\P\prn*{\max_{1\leq m \leq M} \sum_{k=1}^K \indicator{z^m_{k,r} \in \mc{Z}_1} \geq (1+c)Kp\,\middle|\,Z}
\leq M\exp\prn*{-\frac{c^2 Kp}{2 + c}}
\end{equation}
Furthermore, for any random variable $X \in [0,K]$, $\E X = \int_0^K \P\prn*{X \geq x} dx$. Therefore, for any $\epsilon > 0$
\begin{align}
\E\brk*{\max_{1\leq m \leq M} \sum_{k=1}^K \indicator{z^m_{k,r} \in \mc{Z}_1}\,\middle|\,Z}
&= \int_0^K \P\prn*{\max_{1\leq m \leq M} \sum_{k=1}^K \indicator{z^m_{k,r} \in \mc{Z}_1} \geq x\,\middle|\,Z} dx \\
&= Kp\int_{-1}^{\frac{1-p}{p}} \P\prn*{\max_{1\leq m \leq M} \sum_{k=1}^K \indicator{z^m_{k,r} \in \mc{Z}_1} \geq (1+c)Kp\,\middle|\,Z} dc \\
&\leq (1+\epsilon)Kp + MKp\int_{\epsilon}^{\frac{1-p}{p}} \exp\prn*{-\frac{c^2Kp}{2 + c}} dc \\
&\leq (1+\epsilon)Kp + MKp\int_{\epsilon}^{\infty} \exp\prn*{-\frac{c\epsilon Kp}{2 + \epsilon}} dc \\
&= (1+\epsilon)Kp + \frac{M(2+\epsilon)}{\epsilon}\exp\prn*{-\frac{\epsilon^2Kp}{2+\epsilon}}
\end{align}
For the second line we used the change of variables $x \to (1+c)Kp$. We take $\epsilon = 1 + \frac{3}{Kp}\log M$ to conclude
\begin{equation}
\E\brk*{\max_{1\leq m \leq M} \sum_{k=1}^K \indicator{z^m_{k,r} \in \mc{Z}_1}\,\middle|\,Z}
\leq (1+\epsilon)Kp + \frac{M(2+\epsilon)}{\epsilon}\exp\prn*{-\frac{\epsilon^2Kp}{2+\epsilon}} 
\leq 2Kp + 3\log M + 3
\end{equation}
It follows that 
\begin{equation}
\E\brk*{\sum_{r=1}^R \max_{1\leq m \leq M} \sum_{k=1}^K \indicator{z^m_{k,r} \in \mc{Z}_1}\,\middle|\,Z}
\leq 2KRp + 3R\log M + 3R
\end{equation}
We conclude using Markov's inequality plus the observation that $S^m_{k,r} \leq KR$ for all $m,k,r$.
\end{proof}

Using the previous lemmas, we prove the main result:
\intermittentcommunicationprogress*
\begin{proof}
We begin by conditioning on $Z$ and $\mc{S}$, the $\sigma$-algebra generated by $\crl{S^m_{k,r}}_{m,k,r}$ and bounding
\begin{align}
\P&\prn*{\prog{\alpha}(U^\top \hat{x}) > \max_m S^m_{K,R} \lor \exists_{m,k,r}\ \prog{\alpha}(U^\top x^m_{k,r}) > S^m_{k,r}\,\middle|\,Z,\mc{S}}\nonumber\\
&= \P\prn*{\crl*{\prog{\alpha}(U^\top \hat{x}) > \max_m S^m_{K,R}} \cup \bigcup_{m,k,r} \crl*{\prog{\alpha}(U^\top x^m_{k,r}) > S^m_{k,r}}\,\middle|\,Z,\mc{S}} \\
&= \P\prn*{\crl*{\crl*{\prog{\alpha}(U^\top \hat{x}) > \max_m S^m_{K,R}}\cap \bigcap_{m=1}^M\bar{\mc{G}}^m_{K,R}}  \cup \bigcup_{m,k,r}\crl*{\prog{\alpha}(U^\top x^m_{k,r}) > S^m_{k,r}} \cap \bar{\mc{G}}^m_{k,r}\,\middle|\,Z,\mc{S}} \\
&\leq \P\brk*{\prog{\alpha}(U^\top \hat{x}) > \max_m S^m_{K,R},\ \bigcap_{m=1}^M\bar{\mc{G}}^m_{K,R}\,\middle|\,Z,\mc{S}} + \sum_{m,k,r} \P\prn*{\crl*{\prog{\alpha}(U^\top x^m_{k,r}) > S^m_{k,r}} \cap \bar{\mc{G}}^m_{k,r}\,\middle|\,Z,\mc{S}} \\
&\leq \sum_{i > \max_m S^m_{K,R}} \P\prn*{\abs*{\inner{U_i}{\hat{x}}} > \alpha, \bigcap_{m=1}^M \bar{\mc{G}}^m_{K,R}\,\middle|\,Z,\mc{S}} + \sum_{m,k,r} \sum_{i > S^m_{k,r}} \P\prn*{\abs*{\inner{U_i}{x^m_{k,r}}} > \alpha, \bar{\mc{G}}^m_{k,r}\,\middle|\,Z,\mc{S}} 
\end{align}
By Lemma \ref{lem:robust-zero-chain-u-dependence}, there exist measurable functions $\mathsf{A}^m_{k,r}$ and $\mathsf{B}^m_{k,r}$ such that
\begin{equation}
x^m_{k,r} = \mathsf{A}^m_{k,r}(U_{\leq S^m_{k,r}},\xi) \indicator{Z,\bar{\mc{G}}^m_{k,r}} + \mathsf{B}^m_{k,r}(U,\xi) \indicator{\lnot Z \lor \lnot \bar{\mc{G}}^m_{k,r}}
\end{equation}
Therefore, 
\begin{align}
\P&\prn*{\prog{\alpha}(U^\top \hat{x}) > \max_m S^m_{K,R} \lor \exists_{m,k,r}\ \prog{\alpha}(U^\top x^m_{k,r}) > S^m_{k,r}\,\middle|\,Z,\mc{S}}\nonumber\\
&\leq \sum_{i > \max_m S^m_{K,R}} \P\prn*{\abs*{\inner{U_i}{\hat{\mathsf{A}}(U_{\leq \max_m S^m_{K,R}},\xi)}} > \alpha,\ \bigcap_{m=1}^M \bar{\mc{G}}^m_{K,R}\,\middle|\,Z,\mc{S}} \nonumber\\
&\qquad+ \sum_{m,k,r} \sum_{i > S^m_{k,r}} \P\prn*{\abs*{\inner{U_i}{\mathsf{A}^m_{k,r}(U_{\leq S^m_{k,r}},\xi)}} > \alpha, \bar{\mc{G}}^m_{k,r}\,\middle|\,Z,\mc{S}} \\
&\leq \sum_{i > \max_m S^m_{K,R}} \P\prn*{\abs*{\inner{U_i}{\hat{\mathsf{A}}(U_{\leq \max_m S^m_{K,R}},\xi)}} > \alpha\,\middle|\,Z,\mc{S}} \nonumber\\
&\qquad+ \sum_{m,k,r} \sum_{i > S^m_{k,r}} \P\prn*{\abs*{\inner{U_i}{\mathsf{A}^m_{k,r}(U_{\leq S^m_{k,r}},\xi)}} > \alpha\,\middle|\,Z,\mc{S}}
\end{align}
The algorithm's random coins, $\xi$, and the stochastic gradient oracles' random coins, $\crl{z^m_{k,r}}_{m,k,r}$ which determine $Z$ and $\mc{S}$, are independent of the random rotation $U$. Furthermore, for $i > S^m_{k,r}$, $U_i$ conditioned on $U_{\leq S^m_{k,r}}$ is a uniformly random vector on the $(D-S^m_{k,r})$-dimensional unit sphere orthogonal to the range of $U_{\leq S^m_{k,r}}$. Furthermore, by assumption $\nrm{\mathsf{A}^m_{k,r}(U_{\leq S^m_{k,r}},\xi)} \leq \gamma$. Therefore, following \citet{yairthesis} concentration of measure on the sphere implies \citep{ball1997elementary}
\begin{equation}
\P\prn*{\abs*{\inner{U_i}{\mathsf{A}^m_{k,r}(U_{\leq S^m_{k,r}},\xi)}} > \alpha\,\middle|\,Z,\mc{S}}
\leq 2\exp\prn*{-\frac{(D - S^m_{k,r} + 1)\alpha^2}{2\gamma^2}}
\end{equation}
Using the fact that $S^m_{k,r} \leq d$ and $D \geq d + \frac{2\gamma^2}{\alpha^2}\log(32MKRd)$, we conclude that
\begin{align}
\P&\prn*{\prog{\alpha}(U^\top \hat{x}) > \max_m S^m_{K,R} \lor \exists_{m,k,r}\ \prog{\alpha}(U^\top x^m_{k,r}) > S^m_{k,r}\,\middle|\,Z,\mc{S}}\nonumber\\
&\leq 2(MKR + 1) d\exp\prn*{-\frac{(D - d + 1)\alpha^2}{2\gamma^2}} \\
&\leq 2(MKR+1)d\exp\prn*{-\frac{(d + \frac{2\gamma^2}{\alpha^2}\log(32MKRd) - d + 1)\alpha^2}{2\gamma^2}} 
\leq \frac{1}{8} \label{eq:intermittent-communication-progress-lemma-term1}
\end{align}

To complete the proof of the lemma, we note that for $T = \min\crl*{KR,\ 8KRp + 12R\log M + 12R}$ we can upper bound
\begin{align}
&\P\prn*{\prog{\alpha}(U^\top \hat{x}) > T} \nonumber\\
&\leq \P\prn*{\prog{\alpha}(U^\top \hat{x}) > \max_{m,k,r} S^m_{k,r} \lor \exists_{m,k,r} \prog{\alpha}(U^\top x^m_{k,r}) > S^m_{k,r},\ \max_{m,k,r} S^m_{k,r} \leq T} \nonumber\\
&\qquad+ \P\prn*{\max_{m,k,r} S^m_{k,r} > T} \\
&\leq \P\prn*{\prog{\alpha}(U^\top \hat{x}) > \max_{m,k,r} S^m_{k,r} \lor \exists_{m,k,r} \prog{\alpha}(U^\top x^m_{k,r}) > S^m_{k,r}} + \P\prn*{\max_{m,k,r} S^m_{k,r} > T} \\
&= \P\prn*{\prog{\alpha}(U^\top \hat{x}) > \max_{m,k,r} S^m_{k,r} \lor \exists_{m,k,r} \prog{\alpha}(U^\top x^m_{k,r}) > S^m_{k,r}\,\middle|\,Z}\P(Z) \nonumber\\
&\qquad+ \P\prn*{\max_{m,k,r} S^m_{k,r} > T\,\middle|\,Z}\P(Z) \nonumber\\
&+ \P\prn*{\prog{\alpha}(U^\top \hat{x}) > \max_{m,k,r} S^m_{k,r} \lor \exists_{m,k,r} \prog{\alpha}(U^\top x^m_{k,r}) > S^m_{k,r}\,\middle|\,\lnot Z}\P(\lnot Z) \nonumber\\
&\qquad+ \P\prn*{\max_{m,k,r} S^m_{k,r} > T\,\middle|\,\lnot Z}\P(\lnot Z) \\
&\leq \P\prn*{\prog{\alpha}(U^\top \hat{x}) > \max_{m,k,r} S^m_{k,r} \lor \exists_{m,k,r} \prog{\alpha}(U^\top x^m_{k,r}) > S^m_{k,r}\,\middle|\,Z} \nonumber\\
&\qquad+ \P\prn*{\max_{m,k,r} S^m_{k,r} > T\,\middle|\,Z} + 2(1 - \P(Z))
\end{align}
By \eqref{eq:intermittent-communication-progress-lemma-term1}, the first term is bounded by $\frac{1}{8}$, by Lemma \ref{lem:sum-max-binomials} the second term is at most $\frac{1}{4}$, and by the union bound, 
\begin{equation}
\P(Z) \geq (1-\delta)^{MKR} \geq 1 - MKR\delta
\end{equation}
This completes the proof.
\end{proof}

\section{An Extension to Large-Norm Queries}\label{app:large-norm}
In the previous section, we introduce a tool for proving lower bounds for algorithm's whose oracle queries have norm bounded by $\gamma$. Here, we show how to modify the hard instances so that the lower bound applies for any algorithm, even with unboundedly large oracle queries.

\begin{lemma}\label{lem:tildegamma}
The scalar function
\[
\tilde\Gamma(t) = \begin{cases}
0 & t \leq 0 \\
\frac{\int_0^t \exp\prn*{-\frac{1}{s(1-s)}}ds}{\int_0^1 \exp\prn*{-\frac{1}{s(1-s)}}ds} & t \in (0,1) \\
1 & t \geq 1
\end{cases}
\]
is twice differentiable, and for all $t$: $\abs{\tilde\Gamma'(t)} \leq 4$ and $\abs{\tilde\Gamma''(t)} \leq 60$.
\end{lemma}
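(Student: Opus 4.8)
The plan is to verify the three claims about $\tilde\Gamma$ in turn: twice-differentiability, the first-derivative bound $|\tilde\Gamma'(t)| \leq 4$, and the second-derivative bound $|\tilde\Gamma''(t)| \leq 60$. Write $h(s) = \exp(-1/(s(1-s)))$ for $s \in (0,1)$ and $h(s) = 0$ for $s \notin (0,1)$, and let $c = \int_0^1 h(s)\,ds$, so that $\tilde\Gamma(t) = \frac{1}{c}\int_0^t h(s)\,ds$ on $(0,1)$. The key preliminary fact is that $h$ extends to a twice-differentiable (indeed $C^\infty$) function on all of $\R$ that vanishes together with all its derivatives at $s=0$ and $s=1$: near those endpoints the exponent $-1/(s(1-s)) \to -\infty$, and any polynomial in $1/s$ and $1/(1-s)$ times $h(s)$ tends to $0$, which handles the one-sided derivatives and shows the pieces glue together smoothly. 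Granting this, $\tilde\Gamma$ is an antiderivative of the continuous function $h/c$, hence $C^1$ everywhere with $\tilde\Gamma' = h/c$, and since $h$ is itself $C^1$ with $h'$ vanishing at the endpoints, $\tilde\Gamma$ is $C^2$ everywhere (the constant pieces for $t\leq 0$ and $t\geq 1$ match derivative values $0$).

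For the quantitative bounds I would first get a usable lower bound on the normalizing constant $c$. Since $h(s) \geq \exp(-1/(s(1-s)))$ and $s(1-s) \geq \tfrac{3}{16}$ on $[\tfrac14,\tfrac34]$, we have $c \geq \int_{1/4}^{3/4} h(s)\,ds \geq \tfrac12 e^{-16/3} \geq \tfrac12 e^{-6} \geq \tfrac{1}{500}$ (any explicit constant of this order works). Then $|\tilde\Gamma'(t)| = h(t)/c \leq 1/c$, and since $\sup_s h(s) = h(1/2) = e^{-4}$, in fact $|\tilde\Gamma'(t)| \leq e^{-4}/c$; plugging in the crude bound $c \geq \tfrac12 e^{-16/3}$ gives $|\tilde\Gamma'(t)| \leq 2 e^{16/3 - 4} = 2 e^{4/3} \leq 2 \cdot 4 = 8$ — a bit weak, so I would instead use a slightly better lower bound on $c$, e.g. integrating over $[\tfrac14,\tfrac34]$ with the pointwise bound $s(1-s)\ge 3/16$ more carefully, or simply noting $c \geq \int_{0.3}^{0.7} h(s)\,ds$ with $s(1-s)\geq 0.21$ there, yielding $c \geq 0.4\, e^{-1/0.21} \geq 0.4 e^{-4.77}$, hence $|\tilde\Gamma'| \leq e^{-4}/(0.4 e^{-4.77}) = 2.5\, e^{0.77} \leq 2.5\cdot 2.16 < 4$ (or tighten the interval/constants further as needed to land safely under $4$).

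For $\tilde\Gamma'' = h'/c$ I would compute $h'(s) = h(s)\cdot \frac{d}{ds}\!\left(-\frac{1}{s(1-s)}\right) = h(s)\cdot \frac{1-2s}{s^2(1-s)^2}$ on $(0,1)$, so $|\tilde\Gamma''(t)| = \frac{1}{c}\, h(t)\,\frac{|1-2t|}{t^2(1-t)^2}$, and it remains to bound $M := \sup_{t\in(0,1)} h(t)\,\frac{|1-2t|}{t^2(1-t)^2}$. Substituting $u = t(1-t) \in (0,\tfrac14]$, this is at most $\sup_{u\in(0,1/4]} \frac{e^{-1/u}}{u^2}\cdot |1-2t|$, and $|1-2t| \leq 1$, so $M \leq \sup_{0<u\leq 1/4} u^{-2} e^{-1/u}$; the function $u^{-2}e^{-1/u}$ is increasing on $(0,\tfrac12)$ (its log-derivative is $-2/u + 1/u^2 = (1-2u)/u^2 > 0$ there), so its supremum on $(0,\tfrac14]$ is attained at $u = \tfrac14$, giving $M \leq 16\, e^{-4}$. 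Hence $|\tilde\Gamma''(t)| \leq 16 e^{-4}/c$, and with $c \geq 0.4 e^{-4.77}$ this is $\leq 40\, e^{0.77} \leq 40 \cdot 2.16 < 87$ — again a little over $60$, so I would sharpen the lower bound on $c$ (e.g. integrate $h$ over $[0.35,0.65]$ where $s(1-s)\geq 0.2275$, or use a two-point/Simpson estimate $c \geq 0.3\, h(1/2) = 0.3\,e^{-4}$, which already gives $|\tilde\Gamma''| \leq 16/0.3 \approx 53.3 \leq 60$ and $|\tilde\Gamma'| \leq 1/0.3 \approx 3.33 \leq 4$). The main obstacle is purely bookkeeping: choosing the lower bound for $c$ tightly enough — the cleanest route is to observe $h$ is concave on a neighborhood of $1/2$ or just bound $c \geq \int_{0.4}^{0.6} h(s)\,ds \geq 0.2\, \min_{[0.4,0.6]} h = 0.2\, e^{-1/0.24} = 0.2\, e^{-25/6} \geq 0.2 \cdot e^{-4.17}$, and then pick whichever explicit interval makes both final constants come out below $4$ and $60$ respectively; no delicate analysis is needed beyond the flatness-of-$h$-at-the-endpoints argument for the differentiability claim.
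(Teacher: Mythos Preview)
Your approach is essentially identical to the paper's: compute $\tilde\Gamma'(t)=h(t)/c$ and $\tilde\Gamma''(t)=h(t)\frac{1-2t}{t^2(1-t)^2}/c$, observe that both vanish at the endpoints so the pieces glue to a $C^2$ function, and then bound the numerators by $\sup_t h(t)=e^{-4}$ and $\sup_{u\in(0,1/4]} u^{-2}e^{-1/u}=16e^{-4}$ respectively. The paper does exactly this (with the substitution $s=1/u$ in the second bound), so the structure of your argument is correct.

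The only real gap is your handling of the normalizing constant $c$. The paper simply asserts the numerical fact $c\geq 1/200$, which immediately gives $|\tilde\Gamma'|\leq 200e^{-4}\approx 3.66\leq 4$ and $|\tilde\Gamma''|\leq 3200e^{-4}\approx 58.6\leq 60$. Your attempts to derive a comparable lower bound analytically from constant-on-a-subinterval estimates do not reach the threshold you need: for instance, your claim that $2.5\,e^{0.77}<4$ is an arithmetic slip (it is about $5.4$), and similarly the $[0.4,0.6]$ estimate gives roughly $5\,e^{0.17}\approx 5.9$ for the first-derivative bound, not $\leq 4$. The suggestion $c\geq 0.3\,h(1/2)=0.3\,e^{-4}$ \emph{would} suffice for both constants, but ``two-point/Simpson estimate'' is not a valid justification for a lower bound on an integral without a concavity or error-sign argument you have not supplied. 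Either accept the numerical value of $c$ (as the paper does) or do a finer Riemann lower sum --- e.g.~summing the minimum of $h$ over the five subintervals $[0.25,0.35],[0.35,0.45],[0.45,0.55],[0.55,0.65],[0.65,0.75]$ already yields $c>0.005$ --- but as written, none of your explicit bounds actually land below $4$ and $60$.
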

\begin{proof}
Let $C = \int_0^1 \exp\prn*{-\frac{1}{s(1-s)}}ds$. It is straightforward to confirm numerically that $C \geq \frac{1}{200}$. 
First, we compute the derivatives of $\tilde\Gamma(t)$ for $t \in (0,1)$:
\begin{align}
\tilde\Gamma'(t) &= \frac{1}{C}\exp\prn*{-\frac{1}{t(1-t)}} \\
\tilde\Gamma''(t) &= \frac{1}{C}\exp\prn*{-\frac{1}{t(1-t)}}\frac{1-2t}{t^2(1-t)^2}
\end{align}
Because
\begin{equation}
\lim_{t \nearrow 1} \tilde\Gamma'(t) = \lim_{t \searrow 0} \tilde\Gamma'(t) = \lim_{t \nearrow 1} \tilde\Gamma''(t) = \lim_{t \searrow 0} \tilde\Gamma''(t) = 0
\end{equation}
we conclude that $\tilde\Gamma$ is twice differentiable on $\R$. Furthermore, 
\begin{equation}
\sup_{t\in(0,1)} \abs{\tilde\Gamma'(t)} = \frac{1}{C}\sup_{0 < s \leq \frac{1}{4}}\exp\prn*{-\frac{1}{s}} = \frac{1}{C}e^{-4} \leq 200e^{-4} \leq 4
\end{equation}
and
\begin{align}
\sup_{t\in(0,1)} \abs{\tilde\Gamma''(t)} 
&= \frac{1}{C}\sup_{t\in(0,1)}\exp\prn*{-\frac{1}{t(1-t)}}\abs*{\frac{1-2t}{t^2(1-t)^2}} \\
&\leq \frac{1}{C}\sup_{t\in(0,1)}\exp\prn*{-\frac{1}{t(1-t)}}\frac{1}{t^2(1-t)^2} \\
&= \frac{1}{C}\sup_{s \geq 4} s^2\exp\prn*{-s} 
\end{align}
Finally, $\frac{d}{ds} s^2\exp(-s) = (2-s)s\exp(-s)$, which is negative for all $s \geq 4$, so we conclude
\begin{equation}
\sup_t \abs{\tilde\Gamma''(t)} \leq \frac{1}{C} 4^2e^{-4} \leq 200\cdot 16e^{-4} \leq 60
\end{equation}
This completes the proof.
\end{proof}

\begin{lemma}\label{lem:gamma}
For $\tilde\Gamma$ as defined in Lemma \ref{lem:tildegamma} and any $a,b > 0$, we define
\[
\Gamma(x) = \tilde\Gamma(a(\nrm{x} - b))
\]
This function is twice differentiable, $4a$-Lipschitz, and $60a^2$-smooth.
\end{lemma}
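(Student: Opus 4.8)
The plan is to reduce everything to the chain rule for the composition $\Gamma = \tilde\Gamma\circ h$ with $h(x) = a(\norm{x}-b)$, and to handle the non-smoothness of $x\mapsto\norm{x}$ at the origin by exploiting that $\tilde\Gamma$ vanishes on $(-\infty,0]$.

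First I would settle $C^2$-differentiability via an open cover. On the ball $\crl{\norm{x}<b}$ (nonempty and containing $0$ since $b>0$) we have $h(x)<0$, so $\Gamma\equiv 0$ there, hence $C^2$ with vanishing first and second derivatives. On $\R^d\setminus\crl{0}$ the map $x\mapsto\norm{x}$ is $C^\infty$ and $\tilde\Gamma$ is $C^2$ by Lemma~\ref{lem:tildegamma}, so $\Gamma$ is $C^2$ there as a composition; in particular there is no issue across the sphere $\norm{x}=b$. These two open sets cover $\R^d$, so $\Gamma\in C^2(\R^d)$.

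Next, for the Lipschitz bound: for $x\neq 0$, writing $u=x/\norm{x}$ and using $\nabla\norm{x}=u$, the chain rule gives $\nabla\Gamma(x)=a\,\tilde\Gamma'(h(x))\,u$, so $\norm{\nabla\Gamma(x)}=a\,\abs{\tilde\Gamma'(h(x))}\leq 4a$ by Lemma~\ref{lem:tildegamma}, while $\nabla\Gamma(0)=0$; since $\norm{\nabla\Gamma}\le 4a$ everywhere, $\Gamma$ is $4a$-Lipschitz. For the smoothness bound, using $\nabla^2\norm{x}=\frac{1}{\norm{x}}(I-uu^\trn)$ the chain rule gives, for $x\neq0$,
\[
\nabla^2\Gamma(x) = a^2\,\tilde\Gamma''(h(x))\,uu^\trn + \frac{a\,\tilde\Gamma'(h(x))}{\norm{x}}\,(I-uu^\trn),
\]
which is diagonalized by $\R^d=\spn\crl{u}\oplus u^\perp$ with eigenvalues $a^2\tilde\Gamma''(h(x))$ (once) and $\frac{a\tilde\Gamma'(h(x))}{\norm{x}}$ (multiplicity $d-1$), so its operator norm is the larger of the two absolute values. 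The first is at most $60a^2$ by Lemma~\ref{lem:tildegamma}. For the second, $\tilde\Gamma'(h(x))=0$ unless $s:=h(x)\in(0,1)$, in which case $\norm{x}=b+s/a>s/a$, hence $\frac{a\abs{\tilde\Gamma'(s)}}{\norm{x}}<\frac{a^2}{C}\cdot\frac1s\exp(-\tfrac{1}{s(1-s)})$ with $C=\int_0^1\exp(-\tfrac1{s(1-s)})\,ds\ge\tfrac1{200}$; a short estimate — splitting at $s=\tfrac12$ and using $\frac1{s(1-s)}=\frac1s+\frac1{1-s}\ge\frac1s+1$ for $s\le\tfrac12$ and $\frac1{s(1-s)}\ge4$ for $s\ge\tfrac12$ — bounds $\sup_{s\in(0,1)}\frac1s\exp(-\tfrac1{s(1-s)})$ by $2e^{-3}<\tfrac1{10}$, so the second eigenvalue is at most $20a^2$. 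Thus $\norm{\nabla^2\Gamma(x)}_{\mathrm{op}}\le 60a^2$ for $x\ne0$, and $\nabla^2\Gamma(0)=0$, so $\Gamma$ is $60a^2$-smooth.

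I expect two points of care: (i) simultaneously justifying $C^2$-ness at the origin and across $\norm{x}=b$, which the open-cover argument handles cleanly; and (ii) — the real obstacle — obtaining the constant $60$ rather than the crude $\approx 74$ that the naive bound $\frac1s\exp(-\tfrac1s)\le e^{-1}$ would give for the radial Hessian term, which is exactly why one needs the slightly sharper near-$0$ estimate on $\tilde\Gamma'$. Everything else is a direct application of Lemma~\ref{lem:tildegamma}.
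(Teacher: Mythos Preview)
Your proof is correct and follows the same overall approach as the paper: compute $\nabla\Gamma$ and $\nabla^2\Gamma$ via the chain rule, identify the Hessian eigenvalues $a^2\tilde\Gamma''(h(x))$ and $\frac{a\tilde\Gamma'(h(x))}{\nrm{x}}$, and bound them using Lemma~\ref{lem:tildegamma}. The one point of departure is how you handle the tangential eigenvalue $\frac{a\tilde\Gamma'(h(x))}{\nrm{x}}$, which you flagged as ``the real obstacle.'' Where you do a direct calculus estimate on $\sup_{s\in(0,1)}\frac{1}{s}\exp\bigl(-\tfrac{1}{s(1-s)}\bigr)$, the paper instead uses the one-line mean-value argument: since $\tilde\Gamma'(0)=0$ and $\abs{\tilde\Gamma''}\leq 60$, one has $\abs{\tilde\Gamma'(a(\nrm{x}-b))}\leq 60\max\{a(\nrm{x}-b),0\}$, and therefore
\[
\frac{a\abs{\tilde\Gamma'(a(\nrm{x}-b))}}{\nrm{x}} \leq \frac{60a^2(\nrm{x}-b)}{\nrm{x}} \leq 60a^2.
\]
This sidesteps the exponential estimate entirely. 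Your route yields a tighter constant ($20a^2$) for that eigenvalue, but the paper's argument is shorter and avoids re-examining the explicit form of $\tilde\Gamma'$.
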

\begin{proof}
We recall from Lemma \ref{lem:tildegamma} that $\abs{\tilde\Gamma'} \leq 4$ and $\abs{\tilde\Gamma''} \leq 60$. We now compute the gradient and Hessian of $\Gamma$:
\begin{align}
\nabla \Gamma(x) &= a\tilde\Gamma'(a(\nrm{x} - b))\frac{x}{\nrm{x}} \\
\nabla^2 \Gamma(x) &= a^2\tilde\Gamma''(a(\nrm{x} - b))\frac{xx^\top}{\nrm{x}^2} + \frac{a\tilde\Gamma'(a(\nrm{x} - b))}{\nrm{x}}\prn*{I - \frac{xx^\top}{\nrm{x}^2}}
\end{align}
We note that since $b > 0$, $x = 0 \implies \tilde\Gamma'(a(\nrm{x} - b)) = \tilde\Gamma''(a(\nrm{x} - b)) = 0$, so the gradient and Hessian are well-defined (and equal to zero) at the point $x=0$.

It is easy to see that for any $x$
\begin{equation}
\nrm*{\nabla \Gamma(x)} = \nrm*{a\tilde\Gamma'(a(\nrm{x} - b))\frac{x}{\nrm{x}}} = a\abs*{\tilde\Gamma'(a(\nrm{x} - b))} \leq 4a
\end{equation}
Similarly, the eigenvalues of $\nabla^2 \Gamma(x)$ are $a^2\tilde\Gamma''(a(\nrm{x} - b))$ (with multiplicity 1) and $\frac{a\tilde\Gamma'(a(\nrm{x} - b))}{\nrm{x}}$ (with multiplicity $\textrm{dimension} - 1$). Therefore,
\begin{equation}
\nrm*{\nabla^2 \Gamma(x)} \leq \max\crl*{a^2\abs{\tilde\Gamma''(a(\nrm{x} - b))},\, \frac{a\abs{\tilde\Gamma'(a(\nrm{x} - b))}}{\nrm{x}}}
\end{equation}
Furthermore, since $\tilde\Gamma'(0) = 0$ and $\abs{\tilde\Gamma''} \leq 60$, we have $\abs{\tilde\Gamma'(a(\nrm{x} - b))} \leq 60\max\crl*{a(\nrm{x} - b),0}$ so
\begin{equation}
\nrm*{\nabla^2 \Gamma(x)} \leq \max\crl*{60a^2,\, \frac{60a^2(\nrm{x} - b)}{\nrm{x}}} = 60a^2
\end{equation}
This completes the proof.
\end{proof}

\begin{lemma}\label{lem:tilde-f-properties}
Let $F$ be convex and $H$-smooth with $F^* = 0$ and $\min_{x:F(x)=F^*}\nrm{x} \leq B$, and let $\Gamma$ be defined as in Lemma \ref{lem:gamma} for 
\[
a = \min\crl*{\frac{1}{408B}, \sqrt{\frac{\sigma^2}{32\rho^2}}} \qquad\textrm{and}\qquad
b = 2B
\]
Then
\[
\tilde{F}(x) := (1-\Gamma(x))F(x) + 42H\max\crl*{0,\ \nrm{x} - B}^2
\]
satisfies the following:
\begin{enumerate}
\item $\tilde{F}$ is convex and $124H$-smooth
\item $\tilde{F}(x) = F(x)$ for all $x$ with $\nrm{x} \leq B$ 
\item $\tilde{F}(x) = 42H\max\crl*{0,\ \nrm{x} - B}^2$ for all $x$ with $\nrm{x} \geq 2B + \max\crl*{408B,\,\sqrt{\frac{32\rho^2}{\sigma^2}}}$.
\item For any $x$, $\tilde{F}(x) - \min_x \tilde{F}(x) \geq F(x) - F^*$.
\end{enumerate}
Furthermore, if $f(x)$ and $g(x)$ are stochastic zeroth-order and first-order oracles for $F$ with variance 
\begin{align*}
\E(f(x) - F(x))^2 &\leq \rho^2 \\
\E\nrm*{g(x) - \nabla F(x)}^2 &\leq \sigma^2
\end{align*}
Then,
\[
\tilde{g}(x) = (1 - \Gamma(x))g(x) - f(x)\nabla \Gamma(x) + 42H\max\crl*{0,\ \nrm{x} - B}\frac{x}{\nrm{x}}
\]
is an unbiased stochastic gradient oracle for $\tilde{F}$ with variance at most
\[
\E \nrm{\tilde{g}(x) - \nabla \tilde{F}(x)}^2 \leq 3\sigma^2
\]
\end{lemma}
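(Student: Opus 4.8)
The plan is to clear the three ``structural'' parts first and then concentrate on Part~1, whose annulus convexity bound is the only genuinely delicate point. Throughout, write $\phi(x) := 1-\Gamma(x) \in [0,1]$ and $P(x) := 42H\max\{0,\nrm{x}-B\}^2$, so that $\tilde{F} = \phi F + P$. By construction $1/a = \max\{408B,\sqrt{32\rho^2/\sigma^2}\}$, and from Lemma~\ref{lem:gamma} (with $b = 2B$) the function $\Gamma$ is twice differentiable with $\nrm{\nabla\Gamma(x)} \le 4a$ and $\nrm{\nabla^2\Gamma(x)} \le 60a^2$ everywhere; moreover $\Gamma(x)=0$ (together with $\nabla\Gamma(x)=0$, $\nabla^2\Gamma(x)=0$) whenever $\nrm{x} \le 2B$, and $\Gamma(x)=1$ (again with vanishing first two derivatives) whenever $\nrm{x} \ge 2B + 1/a$. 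I also repeatedly use that, since $F^*=0$, $\nabla F(x^*)=0$ and $\nrm{x^*}\le B$, smoothness of $F$ gives $0 \le F(x) \le \frac{H}{2}(\nrm{x}+B)^2$ and $\nrm{\nabla F(x)} \le H(\nrm{x}+B)$. Parts~2 and~3 are then immediate: $\nrm{x}\le B$ forces $\Gamma(x)=0$ and $\max\{0,\nrm{x}-B\}=0$, hence $\tilde{F}(x)=F(x)$; and $\nrm{x}\ge 2B+1/a$ forces $\Gamma(x)=1$, hence $\tilde{F}(x)=P(x)$.

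For Part~4, first note $\tilde{F}\ge 0$ (both summands are nonnegative, as $\Gamma\le1$, $F\ge0$) while $\tilde{F}(x^*) = (1-\Gamma(x^*))\cdot0 + 0 = 0$, so $\min_x\tilde{F}(x) = 0 = F^*$; it remains to prove $\tilde{F}(x)\ge F(x)$, i.e. $P(x)\ge\Gamma(x)F(x)$. This is trivial for $\nrm{x}\le 2B$ since $\Gamma(x)=0$, and for $\nrm{x}>2B$ one has $\nrm{x}+B < 3(\nrm{x}-B)$, whence $\Gamma(x)F(x) \le F(x) \le \frac{H}{2}(\nrm{x}+B)^2 < \frac{9H}{2}(\nrm{x}-B)^2 \le P(x)$, using $\Gamma(x)\le1$.

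The crux is Part~1, and specifically convexity on the annulus $\mc{A} := \{x : 2B < \nrm{x} < 2B+1/a\}$, which is the only region where $\phi$ is nonconstant. Off $\mc{A}$ there is nothing to do: for $\nrm{x}<2B$ we have $\tilde{F}=F+P$, a sum of convex functions that are, respectively, $H$- and $84H$-smooth ($\frac{1}{84H}P = \frac12\max\{0,\nrm{x}-B\}^2$ is a composition of a convex nondecreasing scalar map with $\nrm{x}-B$, and its gradient $x\mapsto x-\Pi(x)$, $\Pi$ the projection onto $\{\nrm{x}\le B\}$, is $1$-Lipschitz); for $\nrm{x}>2B+1/a$ we have $\tilde{F}=P$. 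On $\mc{A}$, I would expand (wherever $F$ is twice differentiable, which is a.e.) $\nabla^2(\phi F) = \phi\nabla^2 F - F\nabla^2\Gamma - \nabla\Gamma\nabla F^\top - \nabla F\nabla\Gamma^\top$ and use $\phi\nabla^2 F \succeq 0$; $-F\nabla^2\Gamma \succeq -60a^2|F|\,I$; $-(\nabla\Gamma\nabla F^\top+\nabla F\nabla\Gamma^\top) \succeq -2\nrm{\nabla\Gamma}\nrm{\nabla F}\,I \succeq -8aH(\nrm{x}+B)\,I$; and, decisively, $\nabla^2 P \succeq 84H\frac{\nrm{x}-B}{\nrm{x}}I \succeq 42H\cdot I$ on $\mc{A}$ because $\nrm{x}>2B$ makes $\frac{\nrm{x}-B}{\nrm{x}}>\frac12$. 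The estimate closes because on $\mc{A}$, $a(\nrm{x}-2B)<1$ and $a\le\frac{1}{408B}$ together force $\nrm{x}+B \le \frac{411}{408a}$, so $60a^2|F| \le 30H(\tfrac{411}{408})^2 < 30.5H$ and $8aH(\nrm{x}+B) < 8.1H$; hence $\nabla^2\tilde{F} = \nabla^2(\phi F)+\nabla^2 P \succeq (42-30.5-8.1)H\cdot I \succ 0$. The same three bounds with absolute values give $\nrm{\nabla^2\tilde{F}} \le (1+30.5+8.1+84)H < 124H$ on $\mc{A}$, and $\le 85H$ off $\mc{A}$; since $\tilde{F}$ is $C^1$ with an a.e.-defined Hessian that is positive semidefinite and bounded by $124H$, it is convex and $124H$-smooth (in the application $F$ is $C^\infty$, so no extra regularity argument is needed).

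Part~5 is routine. Unbiasedness of $\tilde{g}$ is linearity of expectation together with $\E g(x)=\nabla F(x)$, $\E f(x)=F(x)$, and the formula $\nabla\tilde{F}(x) = (1-\Gamma(x))\nabla F(x) - F(x)\nabla\Gamma(x) + \nabla P(x)$. For the variance, the $\nabla P$ terms cancel and the $\nabla\Gamma$ terms combine in $\tilde{g}(x)-\nabla\tilde{F}(x) = (1-\Gamma(x))(g(x)-\nabla F(x)) - (f(x)-F(x))\nabla\Gamma(x)$; then $(u+v)^2\le 2u^2+2v^2$, $|1-\Gamma|\le1$, and $\nrm{\nabla\Gamma}\le 4a$ give $\E\nrm{\tilde{g}(x)-\nabla\tilde{F}(x)}^2 \le 2\sigma^2 + 32a^2\rho^2 \le 3\sigma^2$, the last inequality by $a^2\le\sigma^2/(32\rho^2)$. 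The main obstacle is the annulus estimate in Part~1: one must see that the quadratic penalty is needed precisely on $\mc{A}$, that there it supplies two-sided, non-degenerate curvature $\succeq 42H\cdot I$ (this uses $\nrm{x}>2B$, not merely $\nrm{x}>B$), and that the indefinite contributions from differentiating the product $(1-\Gamma)F$ are tamed only because $a$ is small enough relative to $1/B$ that $\nrm{x}=O(1/a)$ throughout $\mc{A}$.
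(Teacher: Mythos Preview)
Your proof is correct and in two places takes a different (and arguably cleaner) route than the paper's own argument.

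For Part~1 on the annulus $\mc{A}$, the paper does not expand $\nabla^2(\phi F)$ pointwise. Instead it bounds $\nrm{\nabla\varphi(x)-\nabla\varphi(y)}$ for $\varphi=(1-\Gamma)F$ via the four-term triangle inequality, obtaining that $\varphi$ is $40H$-smooth on $\{\nrm{x}\ge 2B\}$; combined with the $42H$-strong convexity of $P$ there, this yields convexity (indeed $2H$-strong convexity) of $\tilde{F}$ outside the ball. Your Hessian expansion arrives at effectively the same numerical balance ($\approx 39.6H$ of indefinite contribution versus $42H$ of curvature from $P$). The paper's gradient-Lipschitz route has the mild advantage of not requiring $F$ to be twice differentiable; you correctly note that in the intended application $F$ is $C^\infty$, so the Hessian argument is legitimate there, and the $C^1$-plus-a.e.-PSD-Hessian step you invoke at the end is enough to stitch the regions together.

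For Part~4, your argument is strictly simpler than the paper's. The paper proves $\tilde{F}(x)\ge F(x)$ for $\nrm{x}>2B$ by projecting $x$ onto the sphere $\nrm{x}=2B$, applying the $H$-smoothness upper bound on $F$ and the $2H$-strong-convexity lower bound on $\tilde{F}$ (established in Part~1) at the projected point, and comparing. Your one-line observation that $\nrm{x}>2B \Rightarrow \nrm{x}+B < 3(\nrm{x}-B)$ so that $F(x)\le\tfrac{H}{2}(\nrm{x}+B)^2<\tfrac{9H}{2}(\nrm{x}-B)^2\le P(x)$ gives $P(x)\ge\Gamma(x)F(x)$ directly, and avoids any appeal to the convexity analysis.
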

\begin{proof}
Let $c = 84H$. We first note that the second derivative of the second term in the definition of $\tilde{F}$ is
\begin{align}
\frac{d^2}{dx^2}&\frac{c}{2}\max\crl*{0,\ \nrm{x} - \frac{b}{2}}^2 \nonumber\\
&= \frac{d}{dx} c\max\crl*{0,\ \nrm{x} - \frac{b}{2}}\frac{x}{\nrm{x}} \\
&= \begin{cases} 
0 & \nrm{x} < \frac{b}{2} \\
c \frac{xx^\top}{\nrm{x}^2} + c\max\crl*{0,\ 1 - \frac{b}{2\nrm{x}}}\prn*{I - \frac{xx^\top}{\nrm{x}^2}} & \nrm{x} \geq \frac{b}{2}
\end{cases}
\end{align}
The eigenvalues of this matrix are $c$ (with multiplicity 1) and $c\max\crl*{0,\ 1 - \frac{b}{2\nrm{x}}} \in [0,c)$ (with multiplicity $\textrm{dimension} - 1$), and therefore this term is convex and $c=84H$-smooth. Furthermore, for $\nrm{x} \geq b$, 
\begin{equation}
c\max\crl*{0,\ 1 - \frac{b}{2\nrm{x}}} \geq \frac{c}{2}
\end{equation}
and therefore this term is actually $\frac{c}{2}=42H$-strongly convex on $\crl{x:\nrm{x}\geq b}$.

From here, we define $\varphi(x) = (1-\Gamma(x))F(x)$ to be the first term of $\tilde{F}$. We will now show that $\varphi$ is convex and $H$-smooth on $\crl{x:\nrm{x}\leq b}$ and is $40H$-smooth on $\crl{x:\nrm{x} \geq b}$ which, together with the previous results, implies that $\tilde{F}$ is convex and $124H$-smooth everywhere.


The first piece is simple: for $x \in \crl{x:\nrm{x}\leq b}$, $\Gamma(x) = 0$ so $\varphi(x) = F(x)$, which is convex and $H$-smooth. For the second part, we fix arbitrary $x,y$ with $\nrm{x} \leq \nrm{y}$ and upper bound $\nrm{\nabla \varphi(x) - \nabla \varphi(y)}$. If $1 \leq a(\nrm{x}-b) \leq a(\nrm{y}-b)$, then $(1-\Gamma(x)) = (1-\Gamma(y)) = \Gamma'(x) = \Gamma'(y) = 0$, so $\nabla \varphi(x) = \nabla \varphi(y) = 0$, so $\varphi$ is $0$-smooth on the set $\crl{x:a(\nrm{x}-b) \geq 1}$. Otherwise, when $a(\nrm{x}-b) < 1$ we have
\begin{align}
&\nrm*{\nabla \varphi(x) - \nabla \varphi(y)} \nonumber\\
&= \nrm*{(1-\Gamma(x))\nabla F(x) - F(x) \nabla\Gamma(x) - (1-\Gamma(y))\nabla F(y) + F(y) \nabla\Gamma(y)} \\
&\leq \abs{\Gamma(y)-\Gamma(x)}\nrm*{\nabla F(x)} + \abs{1-\Gamma(y)}\nrm*{\nabla F(x) - \nabla F(y)} \nonumber\\
&\qquad\qquad+ \abs{F(x)}\nrm*{\nabla \Gamma(y) - \nabla\Gamma(x)} + \abs{F(y) - F(x)}\nrm*{\nabla\Gamma(y)} \label{eq:varphi-smooth}
\end{align}
From here, we note that the fact that $F$ is $H$-smooth and has a minimizer with norm at most $B$ implies that $F$ is $H(B + b + \frac{1}{a})$-Lipschitz on the set $\crl{x: a(\nrm{x}-b) < 1} \subseteq \crl{x:\nrm{x-x^*} \leq B + b + \frac{1}{a}}$. Also, from Lemma \ref{lem:gamma}, $\Gamma$ is $4a$-Lipschitz and $60a^2$-smooth. Therefore, from \eqref{eq:varphi-smooth}, we can upper bound:
\begin{align}
&\nrm*{\nabla \varphi(x) - \nabla \varphi(y)} \nonumber\\
&\leq H\prn*{4a\prn*{B + b + \frac{1}{a}} + 1 + 30a^2\prn*{B + b + \frac{1}{a}}^2 + 4a\prn*{B + b + \frac{1}{a}}}\nrm{x-y} \\
&= H\prn*{270a^2B^2 + 204aB + 39}\nrm{x-y}
\end{align}
With our choice $a \leq \frac{1}{408B}$, this means $\nrm{\nabla \varphi(x) - \nabla \varphi(y)} \leq 40H\nrm{x-y}$, so $\varphi$ is $40H$-smooth on $\crl{x:\nrm{x} \geq b}$. This concludes the proof of points 1 and 2, and the third point follows immediately from the fact that $a(\nrm{x}-b) \geq 1 \implies \Gamma(x) = 1$.

For the fourth point, we begin by observing that since $F$ has a minimizer $x^*$ with $\nrm{x^*} \leq B = \frac{b}{2}$, 
\begin{equation}
\min_x \tilde{F}(x) \leq \tilde{F}(x^*) = (1-\Gamma(x^*))F(x^*) + \frac{c}{2}\max\crl*{0,\ \nrm{x^*} - \frac{b}{2}}^2 = F^*
\end{equation}
Furthermore, since $F \geq F^* = 0$ and $\frac{c}{2}\max\crl*{0,\ \nrm{x^*} - \frac{b}{2}} \geq 0$, we have $\min_x \tilde{F}(x) \geq 0 = F^*$, so $\min_x \tilde{F}(x) = F^* = 0$. Now, all that remains is to show that $\tilde{F}(x) \geq F(x)$. 

Let $x$ be a point with $\nrm{x} \leq b$, then $\Gamma(x) = 0$ so
\begin{equation}
\tilde{F}(x) = F(x) + \frac{c}{2}\max\crl*{0,\ \nrm{x} - \frac{b}{2}}^2 \geq F(x)
\end{equation}
Otherwise, if $x$ has norm $\nrm{x} > b$, we already showed that on the set $\crl{y:\nrm{y} > b}$, $\varphi(y) = (1-\Gamma(y))F(y)$ is $40H=\frac{10c}{21}$-smooth and the second term $\frac{c}{2}\max\crl*{0,\ \nrm{y} - \frac{b}{2}}^2$ is $\frac{c}{2}$-strongly convex. Therefore, $\tilde{F}$ is $\frac{c}{42} = 2H$-strongly convex on $\crl{y:\nrm{y} > b}$. Let $x_b = b\frac{x}{\nrm{x}}$ be the projection of $x$ onto the set $\crl{y:\nrm{y} \leq b}$. By the $H$-smoothness of $F$, we have
\begin{equation}
F(x) \leq F(x_b) + \inner{\nabla F(x_b)}{x - x_b} + \frac{H}{2}\nrm{x - x_b}^2
\end{equation}
On the other hand, by the $2H$-strong convexity of $\tilde{F}$ on $\crl{y:\nrm{y} > b}$, we have
\begin{align}
\tilde{F}(x) 
&\geq \tilde{F}(x_b) + \inner{\nabla \tilde{F}(x_b)}{x-x_b} + H\nrm{x - x_b}^2 \\
&= \tilde{F}(x_b) + \inner{\nabla F(x_b) + c\max\crl*{0,\ \nrm{x} - \frac{b}{2}}\frac{x}{\nrm{x}}}{x-x_b} + H\nrm{x - x_b}^2 \\
&> \tilde{F}(x_b) + \inner{\nabla F(x_b)}{x-x_b} + H\nrm{x - x_b}^2 \\
&\geq F(x) + \frac{H}{2}\nrm{x - x_b}^2 > F(x)
\end{align}

Finally, for the point about the stochastic gradient oracle, it is easy to see that $\tilde{g}$ is unbiased because
\begin{equation}
\E \tilde{g}(x) = (1 - \Gamma(x))\E g(x) - \E f(x)\nabla \Gamma(x) + cH\max\crl*{0,\ \nrm{x} - B}\frac{x}{\nrm{x}} = \nabla\tilde{F}(x)
\end{equation}
For the variance, we also have that 
\begin{align}
\E \nrm*{\tilde{g}(x) - \nabla \tilde{F}(x)}^2
&= \E \nrm*{(1 - \Gamma(x))(g(x) - \nabla F(x)) - (f(x) - F(x))\nabla \Gamma(x)}^2 \\
&\leq 2(1 - \Gamma(x))^2\E \nrm*{g(x) - \nabla F(x)} + 2\E(f(x) - F(x))^2\nrm{\nabla \Gamma(x)}^2 \\
&\leq 2\sigma^2 + 32a^2\rho^2 \\
&\leq 3\sigma^2
\end{align}
This completes the proof.
\end{proof}

\begin{lemma}\label{lem:large-norm}
Fix any $H,B,\sigma,\rho,K,R > 0$ and $M \geq 2$, and let 
\[
\gamma = 2B + \max\crl*{408B,\,\sqrt{\frac{32\rho^2}{\sigma^2}}}
\]
Suppose that there is a family of objectives on $\R^d$ that are convex, $\frac{H}{124}$-smooth, and have a minimizer with norm less than $B$, and that each objective, $F$, in the family is equipped with an oracle $(f(x), g(x))$ such that $\E (f(x),g(x)) = (F(x), \nabla F(x))$, $\E(f(x) - F(x))^2 \leq \frac{\rho^2}{3}$, and $\E \nrm{g(x) - \nabla F(x)}^2 \leq \frac{\sigma^2}{3}$, and suppose that for any intermittent communication algorithm whose oracle queries are guaranteed to have norm less than $\gamma$, their output will have error at least $\E F(\hat{x}) - F^* \geq \epsilon$ for at least one function in the family.

Then, there exists another family of objectives on $\R^d$ that are convex, $H$-smooth, and have a minimizer with norm less than $B$, and each objective, $\tilde{F}$, in the family is equipped with a stochastic gradient oracle $\tilde{g}$ such that $\E \tilde{g}(x) = \nabla \tilde{F}(x)$ and $\E \nrm{\tilde{g}(x)- \nabla\tilde{F}(x)}^2 \leq \sigma^2$, and such that the output of any intermittent communication algorithm (whose oracle queries may have arbitrarily large norm) will have error at least $\E \tilde{F}(\hat{x}) - \tilde{F}^* \geq \epsilon$.
\end{lemma}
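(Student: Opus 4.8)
The plan is to build the new family by applying the ``surgery'' of Lemma~\ref{lem:tilde-f-properties} to each objective of the given family, and then to observe that an intermittent communication algorithm gains nothing from queries of norm at least $\gamma$, so that an arbitrary algorithm for the new family can be emulated by a $\gamma$-bounded-query algorithm for the old family.

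First I would normalize the given family: subtracting the constant $F^*$ from $F$ and from its zeroth-order oracle $f$ changes neither the smoothness, nor the norm of the minimizer, nor the oracle variances, nor the attained suboptimality $F(\hat x)-F^*$, so we may assume $F^*=0$ throughout. Then, for each triple $(F,f,g)$ I would invoke Lemma~\ref{lem:tilde-f-properties} with its parameters taken to be $H/124$, $\rho^2/3$, and $\sigma^2/3$ in place of $H$, $\rho^2$, $\sigma^2$. Since the constant $a$ there depends only on $B,\sigma,\rho$, the threshold $2B+\max\{408B,\sqrt{32\rho^2/\sigma^2}\}$ of its point~3 is exactly the $\gamma$ in the statement, and the lemma yields $(\tilde F,\tilde g)$ with: $\tilde F$ convex and $124\cdot(H/124)=H$-smooth (point~1); $\min_x\tilde F(x)=F^*=0$ attained at the norm-$\le B$ minimizer of $F$ (point~4 and its proof); and $\tilde g$ an unbiased stochastic gradient oracle for $\tilde F$ with $\E\nrm{\tilde g(x)-\nabla\tilde F(x)}^2\le 3\cdot(\sigma^2/3)=\sigma^2$. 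Collecting these $(\tilde F,\tilde g)$ gives the desired family. Moreover, point~3 says that for $\nrm{x}\ge\gamma$ one has $\Gamma(x)=1$ and $\nabla\Gamma(x)=0$, so both $\tilde F(x)=42(H/124)\max\{0,\nrm{x}-B\}^2$ and $\tilde g(x)=42(H/124)\max\{0,\nrm{x}-B\}\,x/\nrm{x}$ are deterministic and identical across the whole family.

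Next, given an arbitrary intermittent communication algorithm $A$ (with unbounded queries) interacting with the new family, I would construct a $\gamma$-bounded algorithm $A'$ for the old family that emulates it perfectly. $A'$ runs $A$ with the same machine/round/query schedule; when $A$ issues a query $x$ with $\nrm{x}<\gamma$, $A'$ actually queries the old oracle to get $(f(x;z),g(x;z))$ and hands $A$ the value $\tilde g(x)$ computed from the explicit formula of Lemma~\ref{lem:tilde-f-properties} (which only needs $f(x;z),g(x;z),x$ and the known quantities $\Gamma,\nabla\Gamma,H,B$); when $A$ issues a query with $\nrm{x}\ge\gamma$, $A'$ makes no oracle call and simply hands $A$ the deterministic vector $42(H/124)\max\{0,\nrm{x}-B\}\,x/\nrm{x}$; finally $A'$ outputs $A$'s output. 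Then $A'$ is a legitimate intermittent communication algorithm whose actual oracle queries all have norm $<\gamma$, and the responses seen by $A$ have exactly the same joint law as if $A$ were interacting with the genuine stochastic gradient oracle $\tilde g$ of $\tilde F$; hence $A$'s output on $\tilde F$ equals in distribution (jointly with everything) $A'$'s output on $F$.

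To finish, I would apply the hypothesis to $A'$: there is some $F$ in the given family with $\E F(\hat x)-F^*\ge\epsilon$, and for the matching $\tilde F$ point~4 of Lemma~\ref{lem:tilde-f-properties} gives $\tilde F(x)-\tilde F^*\ge F(x)-F^*$ pointwise, whence $\E\tilde F(\hat x)-\tilde F^*\ge\E F(\hat x)-F^*\ge\epsilon$. The genuinely new ingredient, Lemma~\ref{lem:tilde-f-properties}, is already in hand, so the only things needing care are that $A'$ respects the intermittent communication query budget and that the emulation is distributionally faithful---above all that out-of-ball queries carry no information and can be answered with no oracle access---and that the reduction runs in the right direction: the hypothesis produces a hard instance for the \emph{simulating} algorithm $A'$, which must then be transported back to a hard instance for the original algorithm $A$.
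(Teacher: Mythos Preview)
Your proposal is correct and follows essentially the same approach as the paper: build the new family by applying Lemma~\ref{lem:tilde-f-properties} to each $(F,f,g)$, then simulate any unbounded-query algorithm by a $\gamma$-bounded one using the facts that $\tilde g(x)$ is deterministic for $\nrm{x}\ge\gamma$ and computable from $(f(x),g(x))$ for $\nrm{x}<\gamma$, and finally transport the hardness via point~4. The paper phrases the final step as a contradiction whereas you argue directly, and you are more explicit about normalizing to $F^*=0$ and about the parameter substitution $H\to H/124$, $\sigma^2\to\sigma^2/3$, $\rho^2\to\rho^2/3$ (correctly noting that the ratio $\rho^2/\sigma^2$ and hence $\gamma$ is unchanged), but these are cosmetic differences.
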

\begin{proof}
For each $F,f,g$ in the original family of objectives, we define $\tilde{F},\tilde{g}$ in terms of the function $\Gamma$ as in Lemma \ref{lem:tilde-f-properties}, and we consider the family of all of these $\tilde{F}$'s. As shown in Lemma \ref{lem:tilde-f-properties}, these $\tilde{F}$'s are each convex, $H$-smooth, and have a minimizer with norm at most $B$, and $\tilde{g}$ is an unbiased estimate of $\nabla \tilde{F}$ with variance at most $\sigma^2$.

Now, suppose towards contradiction that some optimization algorithm can ensure that its output satisfies $\E\tilde{F}(\hat{x}) - \tilde{F}^* < \epsilon$ for every objective $\tilde{F}$ in the modified family. By point 4 in Lemma \ref{lem:tilde-f-properties}, this means $\epsilon > \E\tilde{F}(x) - \tilde{F}^* \geq \E F(x) - F^*$, so this algorithm could optimize all of the objectives in the original family to error less than $\epsilon$. Furthermore, although this algorithm might query $\tilde{g}$ at points with norm greater than $\gamma$, these queries could actually be simulated via queries to the oracle $(f(x),g(x))$ using points of norm \emph{less} than $\gamma$. In particular, by point 3 of Lemma \ref{lem:tilde-f-properties}, for $\nrm{x} \geq \gamma$
\begin{equation}
\tilde{g}(x) = \nabla \tilde{F}(x) = 84H\max\crl*{0,\nrm{x} - B}\frac{x}{\nrm{x}}
\end{equation}
and therefore no oracle queries at all are needed to simulate queries to $\tilde{g}$ with large norm. At the same time, for $\nrm{x} < \gamma$, 
\begin{equation}
\tilde{g}(x) = (1-\Gamma(x))g(x) - f(x)\nabla \Gamma(x) + 84H\max\crl*{0,\nrm{x} - B}\frac{x}{\nrm{x}}
\end{equation}
can be calculated using a single query to $(f(x),g(x))$ at $x$ with norm $\nrm{x} < \gamma$. Therefore, the existence of such an algorithm that can optimize $\tilde{F}$ to accuracy less than $\epsilon$ would imply the existence of an algorithm whose queries to $(f(x),g(x))$ all have norm less than $\gamma$ that can optimize $F$ to accuracy less than $\epsilon$, which is a contradiction. We therefore conclude that no such algorithm can exists.
\end{proof}

\section{Proof of Theorem \ref{thm:lower-bound}}\label{app:lower-bound-proof}

Now, we are ready to prove our lower bounds. We construct a hard instance for the lower bound using the scalar functions $\psi:\R\to\R$:
\begin{equation}
\psi(x) = \frac{\sqrt{H}x}{2\beta}\arctan\prn*{\frac{\sqrt{H}\beta x}{2}} - \frac{1}{2\beta^2}\log\prn*{1+\frac{H\beta^2x^2}{4}}
\end{equation}
where $H$ is the parameter of smoothness, and $\beta > 0$ is another parameter that controls the third derivative of $\psi$ which we will set later. The hard instance is then
\begin{equation}\label{eq:def-F}
F(x) = -\psi'(\zeta) x_1 + \psi(x_N) + \sum_{i=1}^{N-1} \psi(x_{i+1} - x_i)
\end{equation}
where $\zeta$ and $N$ are additional parameters that will be chosen later. Lemma \ref{lem:F-properties} below summarizes the relevant properties of $F$

\begin{lemma}\label{lem:F-properties}
For any $H > 0$, $\beta > 0$, $B > 0$, and $N \geq 2$, we set $\zeta = \frac{B}{N^{3/2}}$. Then, $F$ is convex, $H$-smooth, and $\nabla^2 F(x)$ is $\frac{H^{3/2}\beta}{3}$-Lipschitz; there exists $x^* \in \argmin_x F(x)$ with $\nrm{x^*} \leq B$; and for any $x$ with $\prog{\alpha}(x) \leq \frac{N}{2}$ for $\alpha \leq \min\crl*{\frac{N}{12\beta\sqrt{H}},\,\frac{\sqrt{H}\beta B^2}{64N^2}}$,
\[
F(x) - F^* \geq \begin{cases}
\frac{N}{12\beta^2} & \beta^2 > \frac{4N^3}{HB^2} \\
\frac{HB^2}{64N^2} & \beta^2 \leq \frac{4N^3}{HB^2} 
\end{cases}
\]
\end{lemma}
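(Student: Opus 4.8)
\textbf{Proof plan for Lemma \ref{lem:F-properties}.}

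The plan is to verify each claimed property of $F$ by reducing it to a statement about the scalar function $\psi$ and then to the structure of the chain $x \mapsto (x_1, x_2-x_1,\dots,x_N-x_{N-1})$, which is the same quadratic-form skeleton used in Nesterov's classical lower bound. First I would record the relevant calculus facts about $\psi$: compute $\psi'(x) = \frac{\sqrt H}{2\beta}\arctan\!\prn*{\frac{\sqrt H\beta x}{2}}$, $\psi''(x) = \frac{H}{4 + H\beta^2 x^2} \in (0, H/4]$, and $\psi'''(x)$, bounding $|\psi'''| \le \frac{H^{3/2}\beta}{c}$ for an appropriate absolute constant (the factor $1/3$ in the statement should drop out of maximizing $|\psi'''(x)|$ over $x$). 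These immediately give that $\psi$ is convex with $\psi'' \le H/4$. For the smoothness and Lipschitz-Hessian claims for $F$, I would write $\nabla^2 F(x) = A^\top D(x) A$ where $A$ is the (bidiagonal $\pm1$) difference operator of the chain and $D(x)$ is the diagonal matrix of the $\psi''$ values at the appropriate arguments; then $\nrm{\nabla^2 F(x)} \le \nrm{A}^2 \max \psi'' \le 4 \cdot \frac{H}{4} = H$ since $\nrm{A}^2 \le 4$, and the Lipschitz-Hessian bound follows by differentiating once more and using $\nrm{A}^3 \le 8$ together with the bound on $\psi'''$ — this is where I'd need to be a little careful to land exactly on $\frac{H^{3/2}\beta}{3}$, possibly by a sharper bound on $\nrm{A}$ acting between the relevant norms rather than the crude $\nrm{A} \le 2$.

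Next I would locate the minimizer. Since $F$ is convex and coercive, the minimizer $x^*$ is characterized by $\nabla F(x^*) = 0$; writing out the coordinatewise stationarity conditions for the chain gives a telescoping system forcing all the ``link'' arguments $x^*_{i+1}-x^*_i$ and $x^*_N$ to be equal, so that $\psi'$ takes the same value on each, and that common value is pinned down by the linear term $-\psi'(\zeta)x_1$ to be exactly $\zeta$. Hence $x^*_i = (N+1-i)\zeta$, giving $\nrm{x^*}^2 = \zeta^2 \sum_{i=1}^N (N+1-i)^2 \le \zeta^2 N^3$, so the choice $\zeta = B/N^{3/2}$ yields $\nrm{x^*} \le B$. (If $\psi'$ is only invertible because it is strictly increasing — which it is — this argument is clean; I'd double-check the solvability at the endpoints.)

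The main obstacle is the suboptimality lower bound for $x$ with $\prog{\alpha}(x) \le N/2$. The idea, following Nesterov, is that if the ``tail'' coordinates $x_{N/2+1},\dots,x_N$ are all small (bounded by $\alpha$), then the terms $\psi(x_N)$ and $\psi(x_{i+1}-x_i)$ for $i$ in the tail are all small, while $F^*$ is fixed; so $F(x) - F^* \ge (\text{contribution the optimum extracts from the tish}) - (\text{error from }x\text{'s small tail})$. Concretely I would lower-bound $F(x)$ by $F^-(x)$ restricted to the first half plus a telescoping argument, or more simply note $F(x) \ge -\psi'(\zeta)x_1 + \sum_{i=1}^{N/2}\psi(x_{i+1}-x_i)$ minus small-order terms, then apply Jensen/convexity of $\psi$ to the average increment $\frac{1}{N/2}\sum_{i\le N/2}(x_{i+1}-x_i) = \frac{2(x_{N/2+1}-x_1)}{N}$; using $|x_{N/2+1}| \le \alpha$ this forces the first half to ``drop'' by roughly $x_1 \approx$ (something of order $N\zeta$), and then $F(x) - F^* \gtrsim (N/2)\psi(2x_1/N) - N\alpha\cdot\psi'(\zeta)$. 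The two cases in the statement then come from the two regimes of $\psi$: when $\beta$ is large ($\beta^2 > 4N^3/(HB^2)$, equivalently $\sqrt H\beta\zeta/2 \gtrsim 1$) the arctan saturates and $\psi(t) \approx \frac{\sqrt H|t|}{2\beta}\cdot\frac{\pi}{2} - O(\beta^{-2}\log)$ behaves linearly, giving the $\frac{N}{12\beta^2}$ bound; when $\beta$ is small the arctan is in its linear regime, $\psi(t)\approx \frac{H t^2}{4}$, recovering the quadratic Nesterov bound $\frac{HB^2}{64N^2}$. The delicate points are (i) choosing the threshold on $\alpha$ exactly as $\min\{\frac{N}{12\beta\sqrt H}, \frac{\sqrt H\beta B^2}{64N^2}\}$ so that the $N\alpha\psi'(\zeta)$ error term is dominated by half the main term in each regime, and (ii) getting the numerical constants $\frac{1}{12}$ and $\frac{1}{64}$ to come out, which will require tracking the $\log$ term in $\psi$ and the exact value $\zeta = B/N^{3/2}$ carefully rather than with order-of-magnitude estimates.
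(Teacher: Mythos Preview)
Your plan matches the paper's proof in structure: compute $\psi',\psi'',\psi'''$; bound $\nabla^2 F$ and $\nabla^3 F$ through the chain/difference operator; solve the stationarity system to get $x^*_i=(N+1-i)\zeta$; and use Jensen on the chain to lower-bound $F(x)$ when $\prog{\alpha}(x)\le N/2$. Two points where your sketch should be sharpened to match the paper's constants and avoid a small error:

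\emph{Lipschitz Hessian.} Your crude $\nrm{A}^3\le 8$ with $\sup|\psi'''|\le H^{3/2}\beta/12$ gives only $2H^{3/2}\beta/3$. The paper gets the factor $4$ (not $8$) by bounding $|\nabla^3 F(x)[u,u,v]|\le \sup|\psi'''|\cdot\nrm{v}_\infty\cdot\bigl(|u_N|^2+\sum(u_{i+1}-u_i)^2\bigr)\le 4\sup|\psi'''|\,\nrm{u}^2\nrm{v}$, i.e.\ H\"older with the $\ell^\infty$ norm on the $v$-factor --- exactly the ``sharper bound between the relevant norms'' you anticipated.

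\emph{Suboptimality bound.} After Jensen gives $F(x)\ge -\psi'(\zeta)x_1 + I\,\psi\!\prn*{\frac{x_{I+1}-x_1}{I}}$ with $|x_{I+1}|\le\alpha$, the paper \emph{minimizes this over $x_1$ explicitly}: the minimizer is $x_1=\alpha+I\zeta$, yielding
\[
F(x)-F^* \;\ge\; (N-I)\bigl(\zeta\psi'(\zeta)-\psi(\zeta)\bigr)\;-\;\alpha\,\psi'(\zeta).
\]
Note the error term is $\alpha\psi'(\zeta)$, not $N\alpha\psi'(\zeta)$ as you wrote; with $\psi'(\zeta)\le\sqrt{H}/\beta$ this is why the threshold $\alpha\le N/(12\beta\sqrt H)$ (resp.\ $\alpha\le \sqrt H\beta B^2/(64N^2)$) suffices to absorb it. The two regimes then come cleanly from the exact identity $\zeta\psi'(\zeta)-\psi(\zeta)=\frac{1}{2\beta^2}\log\!\prn*{1+\tfrac{H\beta^2\zeta^2}{4}}$ and whether $H\beta^2\zeta^2/4=\tfrac{H\beta^2 B^2}{4N^3}$ exceeds $1$, rather than from your ``arctan saturates vs.\ linear'' heuristic on $\psi$ itself.
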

\begin{proof}
First, we note that $0 \leq \psi''(x) = \frac{H}{4 + H\beta^2 x^2} \leq \frac{H}{4}$. Therefore, $F$ is the sum of convex functions and is thus convex itself. We now compute the Hessian of $F$:
\begin{equation}
\nabla^2 F(x) = \psi''(x_N)e_Ne_N^\top + \sum_{i=1}^{N-1}\psi''(x_{i+1} - x_i)(e_{i+1}-e_i)(e_{i+1}-e_i)^\top
\end{equation}
Therefore, for any $u \in \R$,
\begin{align}
u^\top \nabla^2 F(x) u 
&\leq \psi''(x_N) u_N^2 + \sum_{i=1}^{N-1}\psi''(x_{i+1} - x_i)(u_{i+1}-u_i)^2 \\
&\leq \frac{H}{4}\brk*{u_N^2 + \sum_{i=1}^{N-1}2u_{i+1}^2 + 2u_i^2} \\
&\leq H\nrm{u}^2
\end{align}
We conclude that $\nabla^2 F(x) \preceq H\cdot I$ and thus $F$ is $H$-smooth.

Next, we compute the tensor of 3rd derivatives of $F$:
\begin{equation}
\nabla^3 F(x) = \psi'''(x_N)e_N^{\otimes 3} + \sum_{i=1}^{N-1}\psi'''(x_{i+1} - x_i)(e_{i+1}-e_i)^{\otimes 3}
\end{equation}
where
\begin{equation}
\psi'''(x) = \frac{-2H^2\beta^2 x}{(4 + H\beta^2 x^2)^2}
\end{equation}
Therefore, for any $u,v\in\R$,
\begin{equation}
\abs*{\nabla^3 F(x)[u,u,v]} \leq \abs*{\psi'''(x_N)u_N^2v_N} + \sum_{i=1}^{N-1}\abs*{\psi'''(x_{i+1} - x_i)(u_{i+1}-u_i)^2(v_{i+1}-v_i)}
\end{equation}
We can bound this in several different ways using Lemma \ref{lem:technical-1}:
\begin{align}
\abs{\psi'''(x)} &\leq \frac{H^{3/2}\beta}{12} \\
\abs{\psi'''(x)} &\leq 2\beta\psi''(x)^{3/2} \\
\abs{\psi'''(x)} &\leq \frac{\sqrt{H}\beta}{2}\psi''(x)
\end{align}
Therefore,
\begin{align}
\abs*{\nabla^3 F(x)[u,u,v]} 
&\leq \abs*{\psi'''(x_N)u_N^2v_N} + \sum_{i=1}^{N-1}\abs*{\psi'''(x_{i+1} - x_i)(u_{i+1}-u_i)^2(v_{i+1}-v_i)} \\
&\leq \sup_x\abs{\psi'''(x)}\nrm{v}_\infty\prn*{\abs*{u_N}^2 + \sum_{i=1}^{N-1}(u_{i+1}-u_i)^2} \\
&\leq \sup_x\abs{\psi'''(x)}4\nrm*{u}^2\nrm{v}\label{eq:hessian-lipschitz-constant}
\end{align}
Above, we used the H\"older inequality $\sum_i \abs{a_ib_i} \leq \nrm{a}_1 \nrm{b}_\infty$. We conclude that $F$ is $\beta$-self-concordant. We conclude that $\nabla^2 F(x)$ is $4\sup_x\abs{\psi'''(x)}$-Lipschitz. To conclude, we upper bound
\begin{equation}
\sup_x \abs{\psi'''(x)} = \sup_x \abs*{\frac{-2H^2\beta^2 x}{(4 + H\beta^2 x^2)^2}}
\end{equation}
To do so, we maximize the simpler function $x\mapsto \frac{x}{(1 + x^2)^2}$. We note that
\begin{gather}
\frac{d}{dx} \frac{x}{(1 + x^2)^2} = \frac{1 - 3x^2}{(1+x^2)^3} \\
\frac{d^2}{dx^2} \frac{x}{(1 + x^2)^2} = \frac{12x(x^2-1)}{(1+x^2)^4}
\end{gather}
Therefore, the derivative is zero at $\pm 1/\sqrt{3}$ and as $x \to \pm \infty$ and the second derivative is negative only for $+ 1/\sqrt{3}$, and $\lim_{x\to\pm\infty} \frac{x}{(1 + x^2)^2} = 0$. Therefore, we conclude that
\begin{equation}
\sup_x \frac{x}{(1 + x^2)^2} = \sup_x \frac{\abs{x}}{(1 + x^2)^2}
= \frac{\sqrt{\frac{1}{3}}}{(1 + \frac{1}{3})^2} 
= \frac{3\sqrt{3}}{16}
\end{equation}
By rescaling, we conclude that
\begin{equation}
\sup_x \abs*{\psi'''(x)}
= \sup_x \frac{2H^2\beta^2\abs{x}}{(4 + H\beta^2x^2)^2}
= \frac{H^{3/2}\beta}{4}\sup_x \frac{\abs*{\frac{\sqrt{H}\beta x}{2}}}{\prn*{1 + \prn*{\frac{\sqrt{H}\beta x}{2}}^2}^2}
= \frac{3\sqrt{3}H^{3/2}\beta}{64}
< \frac{H^{3/2}\beta}{12}
\end{equation}
Combining this with \eqref{eq:hessian-lipschitz-constant} completes the upper bound on the Hessian Lipschitz parameter.

We now bound the norm of the minimizer of $F$. The first-order optimality condition $\nabla F(x^*) = 0$ indicates
\begin{equation}
\begin{aligned}
[\nabla F(x^*)]_1 = 0 &= \psi'(-\zeta) - \psi'(x^*_2 - x^*_1) \\
[\nabla F(x^*)]_i = 0 &= \psi'(x^*_i - x^*_{i-1}) - \psi'(x^*_{i+1} - x^*_i) \qquad 1 < i < N \\
[\nabla F(x^*)]_N = 0 &= \psi'(x^*_N - x^*_{N-1}) + \psi'(x^*_N)
\end{aligned}
\end{equation}
Because $\psi'(x) = \arctan(x)$ is invertible on its range, we conclude that $x^*_i - x^*_{i+1} = \zeta$ for $i < N$, and $x^*_N = \zeta$. So, the following point minimizes $F$:
\begin{equation}
x^* = \zeta\sum_{i=1}^N (N-i+1)e_i
\end{equation}
This point has norm
\begin{equation}
\nrm*{x^*}^2
= \zeta^2\sum_{i=1}^N (N-i+1)^2 
= \frac{\zeta^2}{6}\prn*{2N^3 + 3N^2 + N} \leq \zeta^2N^3
\end{equation}
Therefore, setting $\zeta^2 = \frac{B^2}{N^3}$ ensures the existence of a minimizer with norm less than $B$. 

At this point,
\begin{equation}
F(x^*) = -N\zeta \psi'(\zeta) + \psi(\zeta) + \sum_{i=1}^N \psi(-\zeta) = N(\psi(\zeta) - \zeta \psi'(\zeta))
\end{equation}
Finally, by Jensen's inequality and the convexity of $\psi$, for any $I$
\begin{equation}
\sum_{i=1}^{I} \psi(x_{i+1} - x_i) = I \cdot \frac{1}{I}\sum_{i=1}^{I} \psi(x_{i+1} - x_i) \geq I \psi\prn*{\frac{x_{I+1} - x_1}{I}}
\end{equation}
Therefore, for any $x$ with $\prog{\alpha}(x) = I \leq \frac{N}{2}$,
\begin{align}
F(x) 
&= -\psi'(\zeta)x_1 + \psi(x_N) + \sum_{n=1}^{N-1}\psi(x_{i+1} - x_i) \\
&\geq -\psi'(\zeta)x_1 + I  \psi\prn*{\frac{x_{I+1} - x_1}{I}} \\
&\geq -\psi'(\zeta)x_1 + I \psi\prn*{\frac{x_1 - \alpha}{I}} \\
&\geq \inf_y -\psi'(\zeta)y + I \psi\prn*{\frac{y - \alpha}{I}}
\end{align}
The minimizing $y$ above satisfies
\begin{equation}
0 = -\psi'(\zeta) + \psi'\prn*{\frac{y - \alpha}{I}} 
\implies y = \alpha + I\zeta
\end{equation}
so
\begin{equation}
F(x) 
\geq -\psi'(\zeta)\prn*{\alpha + I\zeta} + I\psi\prn*{\zeta}
\end{equation}
Finally, we conclude that 
\begin{align}
F(x) - F^* 
&\geq I\prn*{\psi(\zeta) - \zeta\psi'(\zeta)} - \alpha\psi'(\zeta) - N\prn*{\psi(\zeta) - \zeta \psi'(\zeta)} \\
&= \prn*{N - I} \prn*{\zeta\psi'(\zeta) - \psi(\zeta)} - \alpha\psi'(\zeta) \\
&\geq \frac{N}{4\beta^2}\log\prn*{1+\frac{H\beta^2\zeta^2}{4}} - \frac{\alpha\sqrt{H}}{\beta} 
\end{align}
Where we used that 
\begin{align}
\psi(\zeta) &= \frac{\sqrt{H}\zeta}{2\beta}\arctan\prn*{\frac{\sqrt{H}\beta \zeta}{2}} - \frac{1}{2\beta^2}\log\prn*{1+\frac{H\beta^2\zeta^2}{4}} \\
\psi'(\zeta) &= \frac{\sqrt{H}}{2\beta}\arctan\prn*{\frac{\sqrt{H}\beta \zeta}{2}} \leq \frac{\pi\sqrt{H}}{4\beta} < \frac{\sqrt{H}}{\beta}
\end{align}
From here, we consider two cases, if $\beta^2 > \frac{4}{H\zeta^2}$, then
\begin{equation}
F(x) - F^* > \frac{N}{4\beta^2}\log\prn*{2} - \frac{\alpha\sqrt{H}}{\beta} > \frac{N}{6\beta^2} - \frac{\alpha\sqrt{H}}{\beta} \geq \frac{N}{12\beta^2}
\end{equation}
Otherwise, if $\beta^2 \leq \frac{4}{H\zeta^2}$ then we use that for $x \leq 1$, $\log(1+x) \geq \frac{x}{2}$ and conclude
\begin{equation}
F(x) - F^* > \frac{N}{4\beta^2}\cdot \frac{H\beta^2\zeta^2}{8} - \frac{\alpha\sqrt{H}}{\beta} = \frac{HB^2}{32N^2} - \frac{\alpha\sqrt{H}}{\beta} \geq \frac{HB^2}{64N^2}
\end{equation}
This completes the proof.
\end{proof}

Now, we define a stochastic zeroth- and first-order oracle for $F$. First, we specify the distribution over $z$:
\begin{equation}
z = \begin{cases} 
0 & \textrm{with probability } (1-p)(1-\delta) \\ 
1 & \textrm{with probability } p(1-\delta) \\
2 & \textrm{with probability } \delta
\end{cases} 
\end{equation}
Then, $f$ and $g$ are defined as
\begin{equation}\label{eq:foracle-homogeneous-convex-lower-bound}
\begin{aligned}
f(x;0) &= F(\brk*{x_1,\dots,x_{\prog{\alpha}(x)},0,\dots,0}) \\
f(x;1) &= F(\brk*{x_1,\dots,x_{\prog{\alpha}(x)+1},0,\dots,0}) \\
f(x;2) &= \frac{1}{\delta}F(x) - \frac{1-\delta}{\delta}\prn*{(1-p)f(x;0) + pf(x;1)}
\end{aligned}
\end{equation}
and
\begin{equation}\label{eq:oracle-homogeneous-convex-lower-bound}
\begin{aligned}
g(x;0) &= \sum_{i=1}^{\prog{\alpha}(x)}e_ie_i^\top \nabla F(\brk*{x_1,\dots,x_{\prog{\alpha}(x)},0,\dots,0}) \\
g(x;1) &= \frac{1}{p}\sum_{i=1}^{\prog{\alpha}(x) + 1}e_ie_i^\top\nabla F(\brk*{x_1,\dots,x_{\prog{\alpha}(x)+1},0,\dots,0}) - \frac{1-p}{p}g(x;0) \\
g(x;2) &= \frac{1}{\delta}\nabla F(x) - \frac{1-\delta}{\delta}\sum_{i=1}^{\prog{\alpha}(x) + 1}e_ie_i^\top\nabla F(\brk*{x_1,\dots,x_{\prog{\alpha}(x)+1},0,\dots,0})
\end{aligned}
\end{equation}
The following lemma relates the properties of $\psi$ to those of $F$ and $g$:
\begin{lemma}\label{lem:homogeneous-convex-generic-psi-construction}
For $F$ defined as in \eqref{eq:def-F} and the oracle $(f,g)$
\begin{enumerate}
\item $\E_z f(x;z) = F(x)$ and $\E_z g(x;z) = \nabla F(x)$
\item $\sup_x \E_z \prn*{f(x;z) - F(x)}^2 \leq \frac{12H\alpha^2}{\beta^2\delta} + \frac{3N^2H^2\alpha^4}{\delta}$
\item $\sup_x\E_z\nrm*{g(x;z) - \nabla F(x)}^2 \leq \frac{6H(1-p)}{\beta^2p} + 6NH^2\alpha^2\prn*{\frac{1}{p} + \frac{1}{\delta}}$
\item $(f,g)$ is an $(\alpha,p,\delta)$-robust-zero-chain.
\end{enumerate}
\end{lemma}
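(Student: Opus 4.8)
The plan is to dispatch the four claims in increasing order of effort: items (1) and (4) are pure bookkeeping, while items (2) and (3) both reduce to a coordinate-wise analysis of $F$ and $\nabla F$. Throughout write $n=\prog{\alpha}(x)$, $x^{(n)}=[x_1,\dots,x_n,0,\dots,0]$, $x^{(n+1)}=[x_1,\dots,x_{n+1},0,\dots,0]$, and let $P_j$ denote the operator that zeroes out every coordinate of index exceeding $j$, so that $f(x;0)=F(x^{(n)})$, $f(x;1)=F(x^{(n+1)})$, $g(x;0)=P_n\nabla F(x^{(n)})$, and $g(x;1)=\tfrac1p P_{n+1}\nabla F(x^{(n+1)})-\tfrac{1-p}p P_n\nabla F(x^{(n)})$. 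Item (1) is then a one-line computation: the definition of $f(\cdot;2)$ gives $\delta f(x;2)=F(x)-(1-\delta)\prn*{(1-p)f(x;0)+pf(x;1)}$, so $\E_z f(x;z)=F(x)$; and since $(1-p)g(x;0)+pg(x;1)=P_{n+1}\nabla F(x^{(n+1)})$ is exactly the vector subtracted with weight $1-\delta$ inside $g(x;2)$, the same cancellation yields $\E_z g(x;z)=\nabla F(x)$. For item (4), take $\mc{Z}_0=\crl{z=0}$ and $\mc{Z}_1=\crl{z=1}$; conditions 1--2 of Definition~\ref{def:robust-zero-chain-oracle} hold with equality, $g(x;0)$ and $g(x;1)$ are supported on the first $\prog{\alpha}(x)$ and first $\prog{\alpha}(x)+1$ coordinates respectively, and --- using that $\prog{\alpha}(x)=\max\prn*{\crl{j\le i:\abs{x_j}>\alpha}\cup\crl{0}}$ is itself a function of $x_1,\dots,x_i$ whenever $\prog{\alpha}(x)\le i$ --- the maps $x\mapsto\prn*{f(x;0),g(x;0)}$ and $x\mapsto\prn*{f(x;1),g(x;1)}$ depend on $x$ only through $x_1,\dots,x_i$ and $x_1,\dots,x_{i+1}$ respectively, which gives conditions 3--4.

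Items (2) and (3) both hinge on the observation that $x^{(n)}$ and $x^{(n+1)}$ agree with $x$ on every coordinate where $x$ is ``large,'' so each coordinate in which they differ from $x$, and likewise every coordinate of $x$ of index exceeding $\prog{\alpha}(x)$, has magnitude at most $\alpha$; we combine this with the elementary bounds $0\le\psi(t)\le\tfrac H8 t^2$ and $\abs{\psi'(t)}\le\min\crl*{\tfrac{\pi\sqrt H}{4\beta},\tfrac H4\abs{t}}$, which follow from $0\le\psi''\le H/4$ and the explicit form of $\psi'$. For item (2) I would first prove the pointwise bound $\abs{f(x;z)-F(x)}\le\tfrac{\pi\sqrt H\alpha}{4\beta}+\tfrac{NH\alpha^2}{2}$ for $z\in\crl{0,1}$ via the identity $F(x)-F(x^{(n)})=\psi(x_N)+\prn*{\psi(x_{n+1}-x_n)-\psi(-x_n)}+\sum_{i=n+1}^{N-1}\psi(x_{i+1}-x_i)$: the middle ``bridge'' term equals $\psi'(\xi)x_{n+1}$ by the mean value theorem and so is at most $\tfrac{\pi\sqrt H}{4\beta}\alpha$ in magnitude, while every remaining $\psi$-term has argument of magnitude at most $2\alpha$ and hence is at most $\tfrac H2\alpha^2$. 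Then $f(x;2)-F(x)=\tfrac{1-\delta}{\delta}\prn*{(1-p)(F(x)-f(x;0))+p(F(x)-f(x;1))}$ inherits this bound up to a factor $\tfrac1\delta$, so $\E_z\prn*{f(x;z)-F(x)}^2\le\prn*{1+\tfrac1\delta}\prn*{\tfrac{\pi\sqrt H\alpha}{4\beta}+\tfrac{NH\alpha^2}{2}}^2$, which after expansion lies below $\tfrac{12H\alpha^2}{\beta^2\delta}+\tfrac{3N^2H^2\alpha^4}{\delta}$.

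For item (3) I would use the variance decomposition: conditioned on $z\in\crl{0,1}$, $g(x;z)$ has mean $P_{n+1}\nabla F(x^{(n+1)})$ and trace-variance $\tfrac{1-p}{p}\nrm*{P_n\nabla F(x^{(n)})-P_{n+1}\nabla F(x^{(n+1)})}^2$, while the $z=2$ branch contributes $\tfrac{(1-\delta)^2}{\delta}\nrm*{P_{n+1}\nabla F(x^{(n+1)})-\nabla F(x)}^2$; collecting terms gives $\E_z\nrm*{g(x;z)-\nabla F(x)}^2\le\tfrac1\delta\nrm*{P_{n+1}\nabla F(x^{(n+1)})-\nabla F(x)}^2+\tfrac{1-p}{p}\nrm*{P_n\nabla F(x^{(n)})-P_{n+1}\nabla F(x^{(n+1)})}^2$. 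Examining $\nabla F$ coordinate-by-coordinate, $P_{n+1}\nabla F(x^{(n+1)})-\nabla F(x)$ is supported on the at most $N$ coordinates $n+1,\dots,N$, on each of which it is a combination of $\psi'$-values with arguments of magnitude at most $2\alpha$ and hence at most $H\alpha$ in absolute value, so its squared norm is at most $NH^2\alpha^2$; and $P_n\nabla F(x^{(n)})-P_{n+1}\nabla F(x^{(n+1)})$ is supported on just the two coordinates $n$ and $n+1$, each entry being a sum of at most two $\psi'$-values and thus at most $\tfrac{\pi\sqrt H}{2\beta}$, giving squared norm at most $\tfrac{\pi^2 H}{2\beta^2}\le\tfrac{6H}{\beta^2}$. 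Substituting and using $\tfrac{NH^2\alpha^2}{\delta}\le 6NH^2\alpha^2\prn*{\tfrac1p+\tfrac1\delta}$ gives the claimed bound.

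The routine-but-delicate part, and the only real obstacle, is this coordinate bookkeeping in items (2)--(3): one must track precisely which coordinates of $x$, $x^{(n)}$, $x^{(n+1)}$ may be large, apply the uniform bound $\abs{\psi'}\le\tfrac{\pi\sqrt H}{4\beta}$ exactly to those $\psi'$-terms whose arguments involve the (possibly large) boundary coordinate $x_{\prog{\alpha}(x)}$ and the sharper $\abs{\psi'(t)}\le\tfrac H4\abs{t}$ everywhere else, and separately handle the degenerate cases $\prog{\alpha}(x)=0$ and $\prog{\alpha}(x)\ge N-1$.
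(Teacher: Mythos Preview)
Your proposal is correct and follows essentially the same approach as the paper: items (1) and (4) are handled identically, and items (2)--(3) both rest on the same coordinate-by-coordinate analysis that separates the ``bridge'' term at index $\prog{\alpha}(x)$ (controlled via the uniform bound $\lvert\psi'\rvert\le\tfrac{\pi\sqrt{H}}{4\beta}$) from the ``tail'' terms on indices beyond $\prog{\alpha}(x)$ (controlled via $\lvert\psi'(t)\rvert\le\tfrac{H}{4}\lvert t\rvert$ and $0\le\psi(t)\le\tfrac{H}{8}t^2$). The one notable tactical difference is your variance decomposition in item (3): you condition on $\{z\in\{0,1\}\}$ versus $\{z=2\}$ to obtain a bound in terms of $\nrm{\nabla_0-\nabla_1}^2$ and $\nrm{\nabla_1-\nabla F(x)}^2$, whereas the paper expands directly into $\nrm{\nabla_0-\nabla F(x)}^2$ and $\nrm{\nabla_1-\nabla F(x)}^2$; your decomposition is slightly cleaner (the bridge and tail contributions are fully separated) and in fact yields a marginally sharper intermediate bound, but the final stated inequality is the same.
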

\begin{proof}
We will prove each property one by one.

\noindent
\textbf{1)} This is a simple calculation. For $f$, we have
\begin{align}
\E_z f(x;z) 
&= (1-p)(1-\delta)f(x;0) + p(1-\delta)f(x;1) + \delta f(x;2) \\
&= (1-p)(1-\delta)f(x;0) + p(1-\delta)f(x;1) \nonumber\\
&\quad+ \delta\prn*{\frac{1}{\delta}F(x) - \frac{1-\delta}{\delta}\prn*{(1-p)f(x;0) + pf(x;1)}} \\
&= F(x)
\end{align}
For the gradient, let
\begin{align}
\nabla_0 &= \sum_{i=1}^{\prog{\alpha}(x)}e_ie_i^\top \nabla F(\brk*{x_1,\dots,x_{\prog{\alpha}(x)},0,\dots,0}) \\
\nabla_1 &= \sum_{i=1}^{\prog{\alpha}(x)+1}e_ie_i^\top\nabla F(\brk*{x_1,\dots,x_{\prog{\alpha}(x)+1},0,\dots,0})
\end{align}
Then
\begin{align}
\E_z &g(x;z) \nonumber\\
&= (1-p)(1-\delta)g(x;0) + p(1-\delta)g(x;1) + \delta g(x;2) \\
&= (1-p)(1-\delta)\nabla_0 + p(1-\delta)\prn*{\frac{1}{p}\nabla_1 - \frac{1-p}{p}\nabla_0} + \delta\prn*{\frac{1}{\delta}\nabla F(x) - \frac{1-\delta}{\delta}\nabla_1} \\
&= \nabla F(x)
\end{align}  

\noindent\textbf{2)}
For any $x$
\begin{align}
&\E_z \prn*{f(x;z) - F(x)}^2\nonumber\\
&= (1-p)(1-\delta)(f(x;0) - F(x))^2 + p(1-\delta)(f(x;1) - F(x))^2 + \delta (f(x;2) - F(x))^2 \\
&= (1-p)(1-\delta)(f(x;0) - F(x))^2 + p(1-\delta)(f(x;1) - F(x))^2 \nonumber\\
&\qquad + \delta \prn*{\frac{1}{\delta}F(x) - \frac{1-\delta}{\delta}\prn*{(1-p)f(x;0) + pf(x;1)} - F(x)}^2 \\
&= (1-p)(1-\delta)(f(x;0) - F(x))^2 + p(1-\delta)(f(x;1) - F(x))^2 \nonumber\\
&\qquad + \frac{(1-\delta)^2}{\delta} \prn*{(1-p)f(x;0) + pf(x;1) - F(x)}^2 \\
&\leq \prn*{(1-p)(1-\delta) + \frac{2(1-\delta)^2(1-p)^2}{\delta}}(f(x;0) - F(x))^2 \nonumber\\
&\qquad+ \prn*{p(1-\delta) + \frac{2(1-\delta)^2p^2}{\delta}}(f(x;1) - F(x))^2 \\
&\leq \frac{3}{\delta}\prn*{(f(x;0) - F(x))^2 + (f(x;1) - F(x))^2}
\end{align}
We will address each term separately.
\begin{align}
(f(x;0) - F(x))^2
&= (F(\brk*{x_1,\dots,x_{\prog{\alpha}(x)},0,\dots,0}) - F(x))^2 \\
&= \prn*{\psi(-x_{\prog{\alpha}(x)}) - \sum_{i=\prog{\alpha}(x)}^{N-1} \psi(x_{i+1} - x_i) - \psi(x_N)}^2 \\
&\leq 2\prn*{\psi(-x_{\prog{\alpha}(x)}) - \psi(x_{\prog{\alpha}(x)+1} - x_{\prog{\alpha}(x)})}^2 \nonumber\\
&\qquad+  2\prn*{\sum_{i=\prog{\alpha}(x)+1}^{N-1} \psi(x_{i+1} - x_i) + \psi(x_N)}^2 \\
&\leq 2\prn*{\frac{\sqrt{H}}{\beta}\alpha}^2 + 2\prn*{\sum_{i=\prog{\alpha}(x)+1}^{N} \psi(2\alpha)}^2 \\
&\leq \frac{2H\alpha^2}{\beta^2} + \frac{N^2H^2\alpha^4}{2} 
\end{align}
Above, we used that $\abs{\psi'(x)} = \abs*{\frac{\sqrt{H}}{2\beta}\arctan\prn*{\frac{\sqrt{H}\beta x}{2}}} \leq \frac{\sqrt{H}}{\beta}$so $\psi$ is $\frac{\sqrt{H}}{\beta}$-Lipschitz, and also $\abs{\psi''(x)} = \frac{H}{4 + H\beta^2x^2} \leq \frac{H}{4}$ so $\psi$ is $\frac{H}{4}$-smooth. Similarly,
\begin{align}
(f(x;1) - F(x))^2
&= (F(\brk*{x_1,\dots,x_{\prog{\alpha}(x)+1},0,\dots,0}) - F(x))^2 \\
&= \prn*{\psi(-x_{\prog{\alpha}(x)+1}) - \sum_{i=\prog{\alpha}(x)+1}^{N-1} \psi(x_{i+1} - x_i) - \psi(x_N)}^2 \\
&\leq 2\prn*{\psi(-x_{\prog{\alpha}(x)+1}) - \psi(x_{\prog{\alpha}(x)+2} - x_{\prog{\alpha}(x)+1})}^2 \nonumber\\
&\qquad+  2\prn*{\sum_{i=\prog{\alpha}(x)+1}^{N-1} \psi(x_{i+1} - x_i) + \psi(x_N)}^2 \\
&\leq 2\prn*{\frac{\sqrt{H}}{\beta}\alpha}^2 + 2\prn*{\sum_{i=\prog{\alpha}(x)+1}^{N} \psi(2\alpha)}^2 \\
&\leq \frac{2H\alpha^2}{\beta^2} + \frac{N^2H^2\alpha^4}{2} 
\end{align}
Therefore,
\begin{equation}
\E_z \prn*{f(x;z) - F(x)}^2 \leq \frac{12H\alpha^2}{\beta^2\delta} + \frac{3N^2H^2\alpha^4}{\delta}
\end{equation}

\noindent
\textbf{3)} Using the same $\nabla_0$ and $\nabla_1$ as above, we first expand
\begin{align}
&\E_z \nrm*{g(x;z) - \nabla F(x)}^2 \nonumber\\
&\leq (1-p)\nrm*{\nabla_0 - \nabla F(x)}^2 + \frac{1}{p}\nrm*{\nabla_1 - (1-p)\nabla_0 - p\nabla F(x)}^2 + \frac{(1-\delta)^2}{\delta}\nrm*{\nabla_1 - \nabla F(x)}^2 \\
&\leq \frac{3(1-p)}{p}\nrm*{\nabla_0 - \nabla F(x)}^2 + \frac{3}{\delta}\nrm*{\nabla_1 - \nabla F(x)}^2 
\end{align}
Before proceeding, we recall that $\psi''(x) \leq \frac{H}{4}$ and $\abs{\psi'(x)} \leq \frac{\sqrt{H}}{\beta}$. Therefore, for $j = \prog{\alpha}(x)$,
\begin{align}
&\nrm*{\nabla_0 - \nabla F(x)}^2 \nonumber\\
&= \left\|-\psi'(\zeta)e_1 + \sum_{i=1}^{j-1}\psi'(x_{i+1} - x_i)(e_{i+1} - e_i) + \psi'(-x_j)( - e_j)\right.\nonumber\\
&\qquad\left.+ \psi'(\zeta)e_1 - \psi'(x_N)e_N - \sum_{i=1}^{N-1}\psi'(x_{i+1} - x_i)(e_{i+1} - e_i)\right\|^2 \\
&= \nrm*{-e_j\psi'(-x_j) - \psi'(x_N)e_N - \sum_{i=j}^{N-1}\psi'(x_{i+1} - x_i)(e_{i+1} - e_i)}^2 \\
&\leq 2\nrm*{-\psi'(-x_j)e_j - \psi'(x_{j+1} - x_j)(e_{j+1} - e_j)}^2 \nonumber\\
&\qquad+ 2\nrm*{\psi'(x_N)e_N + \sum_{i=j+1}^{N-1}\psi'(x_{i+1} - x_i)(e_{i+1} - e_i)}^2 \\
&\leq 2\prn*{\frac{H}{\beta^2} + \frac{H^2}{16}\alpha^2} + 32(N-j-1)\frac{H^2}{16}\alpha^2 \\
&\leq \frac{2H}{\beta^2} + 2NH^2\alpha^2
\end{align}
Similarly, 
\begin{align}
&\nrm*{\nabla_1 - \nabla F(x)}^2 \nonumber\\
&= \left\|-\psi'(\zeta)e_1 + \sum_{i=1}^{j}\psi'(x_{i+1} - x_i)(e_{i+1} - e_i) + \psi'(-x_{j+1})( - e_{j+1})\right.\nonumber\\
&\qquad\left.+ \psi'(\zeta)e_1 - \psi'(x_N)e_N - \sum_{i=1}^{N-1}\psi'(x_{i+1} - x_i)(e_{i+1} - e_i)\right\|^2 \\
&= \nrm*{-e_{j+1}\psi'(-x_{j+1}) - \psi'(x_N)e_N - \sum_{i=j+1}^{N-1}\psi'(x_{i+1} - x_i)(e_{i+1} - e_i)}^2 \\
&\leq 2\nrm*{-e_{j+1}\psi'(-x_{j+1}) - \psi'(x_{j+2} - x_{j+1})(e_{j+2} - e_{j+1})}^2 \nonumber\\
&\qquad+ 2\nrm*{\psi'(x_N)e_N + \sum_{i=j+2}^{N-1}\psi'(x_{i+1} - x_i)(e_{i+1} - e_i)}^2 \\
&\leq 10\frac{H^2}{16}\alpha^2 + 32(N-j-2)\frac{H^2}{16}\alpha^2 \\
&\leq 2NH^2\alpha^2
\end{align}
We conclude that
\begin{align}
\E_z \nrm*{g(x;z) - \nabla F(x)}^2
&\leq \frac{3(1-p)}{p}\prn*{\frac{2H}{\beta^2} + 2NH^2\alpha^2} + \frac{6NH^2\alpha^2}{\delta} \\
&\leq \frac{6H(1-p)}{\beta^2p} + 6NH^2\alpha^2\prn*{\frac{1}{p} + \frac{1}{\delta}}
\end{align}

\noindent
\textbf{4)} Comparing \eqref{eq:foracle-homogeneous-convex-lower-bound} and \eqref{eq:oracle-homogeneous-convex-lower-bound} to Definition \ref{def:robust-zero-chain-oracle}, it is easy to see that $(f,g)$ is an $(\alpha,p,\delta)$-robust-zero-chain with $\mc{Z}_0 = \crl{0}$ and $\mc{Z}_1 = \crl{1}$.
\end{proof}

\begin{lemma}\label{lem:statistical-term-lower-bound}
Let $H,B,\sigma^2 > 0$ and let $g$ be a stochastic gradient oracle with variance bounded by $\sigma^2$. Then for any algorithm that accesses the oracle $T$ times, there exists a function in one dimension such that the algorithm's output will have error at least 
\[
\E F(\hat{x}) - F^* \geq \frac{3}{8}\min\crl*{\frac{\sigma B}{\sqrt{T}}, HB^2}
\]
\end{lemma}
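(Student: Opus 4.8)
I would prove Lemma~\ref{lem:statistical-term-lower-bound} by the classical two-point (Le Cam) method in one dimension. Fix a scale $\eta\in(0,HB]$ to be chosen, let $h_\eta$ be the scalar Huber function equal to $\tfrac{H}{2}t^2$ for $\abs{t}\le\eta/H$ and to $\eta\abs{t}-\tfrac{\eta^2}{2H}$ otherwise, and put $F_{+}(x)=h_\eta(x-B)$ and $F_{-}(x)=h_\eta(x+B)$. Each $F_\nu$ is convex, $H$-smooth, has $F_\nu^*=0$, and has a minimizer ($+B$ or $-B$) of norm exactly $B$. I would first record two purely deterministic facts. Since $\eta\le HB$ forces $\eta^2/(2H)\le\eta B/2$, any $\hat x$ on the ``wrong side'' ($\hat x\le0$ when $\nu=+$, or $\hat x\ge0$ when $\nu=-$) has $F_\nu(\hat x)-F_\nu^*\ge\eta B-\eta^2/(2H)\ge\eta B/2$; and, more strongly, because the two minimizers are $2B$ apart, $\prn*{F_+(\hat x)-F_+^*}+\prn*{F_-(\hat x)-F_-^*}\ge 2\eta B-\eta^2/H$ for every $\hat x\in[-B,B]$. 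We may assume $\hat x\in[-B,B]$, since clipping the output to this interval never increases $F_+$ or $F_-$.

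For the oracle I would use an ``all-or-nothing'' stochastic gradient. Writing $\mu(x)=\tfrac12\prn*{F_+'(x)-F_-'(x)}\in[-\eta,\eta]$ and $\bar g(x)=\tfrac12\prn*{F_+'(x)+F_-'(x)}$, let $g_\nu(x;z)$ equal $\bar g(x)$ with probability $1-q(x)$ and equal $\bar g(x)\pm\mu(x)/q(x)$ (sign matching $\nu$) with probability $q(x)=\mu(x)^2/(\sigma^2+\mu(x)^2)$; this is unbiased for $\nabla F_\nu$ with variance exactly $\sigma^2-\mu(x)^2\le\sigma^2$ (a Gaussian oracle $F_\nu'(x)+\mathcal N(0,\sigma^2)$ also works but is weaker for what follows). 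Let $\nu$ be a uniform random sign; by Yao's principle it suffices to treat deterministic algorithms. Coupling the two executions so that they proceed identically until the first query whose oracle call returns a nonzero offset, and using that each query returns an offset with probability at most $\eta^2/\sigma^2$, the laws $P_+,P_-$ of the full transcript (queries, responses, and output $\hat x$) satisfy $\mathrm{TV}(P_+,P_-)\le T\eta^2/\sigma^2$. Transferring the $F_-$ term from $P_-$ to $P_+$ at a cost of $\mathrm{TV}$ times the oscillation of $F_-(\hat x)-F_-^*$ on $[-B,B]$ (which is at most $2\eta B$), the second deterministic fact gives
\[
\max_{\nu}\prn*{\E_\nu F_\nu(\hat x)-F_\nu^*}\;\ge\;\tfrac12\sum_\nu\prn*{\E_\nu F_\nu(\hat x)-F_\nu^*}\;\ge\;\eta B\prn*{1-\tfrac{T\eta^2}{\sigma^2}}-\tfrac{\eta^2}{2H}.
\]

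It then remains to choose $\eta$. Note first that it suffices to prove the claim when $\sigma/\sqrt T\le HB$, i.e.\ $\sigma B/\sqrt T\le HB^2$: for larger $\sigma$ the adversary may ignore part of its variance budget and invoke the construction for $\sigma':=HB\sqrt T\le\sigma$, and the claim at $\sigma'$ already reads $\E F(\hat x)-F^*\ge\tfrac38 HB^2=\tfrac38\min\crl*{\sigma B/\sqrt T,\,HB^2}$. In the regime $\sigma/\sqrt T\le HB$, taking $\eta=c\,\sigma/\sqrt T$ for an appropriate absolute constant $c\in(0,1)$ makes $T\eta^2/\sigma^2=c^2$ and $\eta^2/(2H)\le\tfrac{c^2}{2}\cdot\tfrac{\sigma B}{\sqrt T}$, so the displayed bound is at least $\prn*{c(1-c^2)-\tfrac{c^2}{2}}\cdot\tfrac{\sigma B}{\sqrt T}$. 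The main obstacle is precisely this final calibration: a naive Le Cam argument (Gaussian oracle plus Pinsker, or splitting by ``wrong side'' rather than using the simultaneous one-point inequality) loses several factors of $2$ and only yields a constant of order $\tfrac1{16}$, so obtaining the stated $\tfrac38$ requires the quadratically-smaller transcript total-variation bound afforded by the all-or-nothing oracle together with a careful optimization over $c$ and regime-by-regime accounting of the lower-order $\eta^2/H$ term.
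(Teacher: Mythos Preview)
Your two-point construction is sound and would prove the lemma with \emph{some} universal constant $c>0$, which is all the paper actually needs downstream (Theorem~\ref{thm:lower-bound} only claims ``a numerical constant~$c$''). The route, however, is genuinely different from the paper's. The paper uses the pair of one-dimensional \emph{quadratics} $F_\pm(x)=\tfrac{a}{2}x^2\pm bx$ with a \emph{Gaussian} oracle $g_\pm(x;z)=\nabla F_\pm(x)+z$, bounds the transcript total variation via Pinsker ($\mathrm{TV}\le b\sqrt{T}/\sigma$), and reduces to a testing problem via the ``wrong-side'' implication $F_+(\hat x)-F_+^*\le b^2/(2a)\Rightarrow \hat x\le 0$. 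Your proposal replaces all three ingredients: Huber losses centered at $\pm B$ instead of quadratics, an all-or-nothing oracle that yields the sharper quadratic bound $\mathrm{TV}\le T\eta^2/\sigma^2$ by an explicit coupling, and the pointwise simultaneous inequality $F_+(\hat x)+F_-(\hat x)\ge 2\eta B-\eta^2/H$ instead of a hypothesis test. Each substitution is legitimate and, in fact, the all-or-nothing oracle plus the simultaneous inequality give a strictly better constant than the paper's Gaussian-plus-Pinsker route.

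Two small issues remain. First, a typo: with $q(x)=\mu(x)^2/(\sigma^2+\mu(x)^2)$ the variance of your oracle is exactly $\sigma^2$, not $\sigma^2-\mu(x)^2$; either formula suffices for the argument. Second, and more substantively, your claim that ``careful optimization'' recovers the stated constant $\tfrac38$ does not go through. Writing $r=\sigma/(HB\sqrt{T})\in(0,1]$ and $\eta=c\sigma/\sqrt{T}$, your displayed bound becomes $\bigl[c(1-c^2)-\tfrac{r}{2}c^2\bigr]\cdot\sigma B/\sqrt{T}$, and at the boundary $r=1$ the bracket is maximized (at $c=(\sqrt{13}-1)/6$) at roughly $0.258<\tfrac38$. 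So in the regime $\sigma/\sqrt{T}\approx HB$ your construction cannot reach $\tfrac38$, and your reduction for the case $\sigma/\sqrt{T}>HB$ lands precisely on that boundary. The paper's own proof, incidentally, does not cleanly deliver $\tfrac38$ either (the passage from ``cannot succeed with probability $>3/4$'' to the expectation bound loses a factor), so the specific constant appears to be optimistic in the lemma's statement; fortunately it is immaterial for the main theorem.
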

\begin{proof}
Consider the following pair of objectives:
\begin{equation}
\begin{aligned}
F_+(x) &= \frac{a}{2}x^2 + bx \\
F_-(x) &= \frac{a}{2}x^2 - bx
\end{aligned}
\end{equation}
with a stochatic gradient oracles
\begin{equation}
\begin{aligned}
z &\sim \mc{N}(0,\sigma^2) \\
g_+(x;z) &= \nabla F_+(x) + z \\
g_-(x;z) &= \nabla F_-(x) + z
\end{aligned}
\end{equation}
First, we note that for any $x$, 
\begin{equation}
F_+(x) - \min_x F_+(x) \leq \frac{b^2}{2a} \implies x \leq 0 \implies F_-(x) - \min_x F_-(x) \geq \frac{b^2}{2a}
\end{equation}
and vice versa. Therefore, any algorithm that succeeds in optimizing both $F_+$ and $F_-$ to accuracy better than $\frac{b^2}{2a}$ with probability at least $\frac{3}{4}$ needs to determine which of the two functions it is optimizing with probability at least $\frac{3}{4}$. However, by the Pinsker inequality, the total variation distance between $T$ queries to $g_+$ and $g_-$ is at most
\begin{align}
\nrm{\P_+ - \P_-}_{\textrm{TV}} 
&\leq \sqrt{\frac{1}{2}\textrm{D}_{\textrm{KL}}\prn*{\P_+\middle\|\P_-}} \\
&\leq \sqrt{\frac{T}{2}\textrm{D}_{\textrm{KL}}\prn*{\mc{N}(2b,\sigma^2)\middle\|\mc{N}(0,\sigma^2)}} \\
&= \frac{b}{\sigma}\sqrt{T}
\end{align}
Therefore, if $b \leq \frac{3\sigma}{4\sqrt{T}}$, no algorithm can optimize to accuracy better $\frac{b^2}{2a}$ with probability greater than $\frac{3}{4}$. 
Finally, we note that $F_+$ and $F_-$ are $a$-smooth, and have minimizers $\mp \frac{b}{a}$. Therefore, we take $b = \min\crl*{aB, \frac{3\sigma}{4\sqrt{T}}}$ and $a = \min\crl*{H,\ \frac{3\sigma}{4B\sqrt{T}}}$ so that the objectives are $H$-smooth and have solutions of norm $B$ and with probability at least $\frac{1}{4}$
\begin{equation}
\max_{* \in \crl{+,-}} \E F_*(\hat{x}) - \min_x F_*(x) 
\geq \min\crl*{\frac{aB^2}{2}, \frac{9\sigma^2}{32aT}} 
\geq \min\crl*{\frac{HB^2}{2}, \frac{3\sigma B}{8\sqrt{T}}}
\end{equation}
This completes the proof.
\end{proof}

\mainlowerbound*
\begin{proof}
We set the parameters of the objective \eqref{eq:def-F} and the oracles \eqref{eq:foracle-homogeneous-convex-lower-bound} and \eqref{eq:oracle-homogeneous-convex-lower-bound} according to 
\begin{align}
p &= \min\crl*{1,\,\frac{\sqrt{HB}}{\sqrt{\sigma}K^{3/4}R^{3/4}}} \\
\delta &= \frac{1}{16MKR} \\
N &= \min\crl*{2KR, 16KRp + 24R (1 + \log M)} \\
\alpha &= \min\crl*{\frac{N}{12\beta\sqrt{H}},\,\frac{\sqrt{H}\beta B^2}{64N^2},\,\frac{\sigma\sqrt{p\delta}}{5H\sqrt{N}}} \\
\beta &= \frac{2N^{3/2}}{\sqrt{H}B} 
\end{align}

By Lemma \ref{lem:F-properties}, the objective \eqref{eq:def-F} is convex, $H$-smooth, and has a minimizer with norm less than $B$. In order to apply Lemma \ref{lem:intermittent-communication-progress}, we will also introduce a random rotation $U$, but since $U^\top U = I_{N\times N}$, this does not affect the convexity, smoothness, or norm of the minimizers of the objective, so we conclude that our construction satisfies the necessary conditions. 

Furthermore, by Lemma \ref{lem:homogeneous-convex-generic-psi-construction}, the stochastic zeroth- and first-order oracles defined in \eqref{eq:foracle-homogeneous-convex-lower-bound} and \eqref{eq:oracle-homogeneous-convex-lower-bound} are unbiased and, with our choice of $\alpha \leq \frac{\sigma\sqrt{p\delta}}{5H\sqrt{N}}$ and (as we will show) $p \geq \frac{12H}{12H + \beta^2\sigma^2}$, they have variance bounded by 
\begin{align}
\sup_x \E_z \prn*{f(x;z) - F(x)}^2 
&\leq \frac{12H\alpha^2}{\beta^2\delta} + \frac{3N^2H^2\alpha^4}{\delta} \\
&\leq \frac{12\sigma^2p}{25HN\beta^2} + \frac{3\sigma^4p^2\delta}{625H^2} =: \rho^2 \\
\sup_x\E_z\nrm*{g(x;z) - \nabla F(x)}^2 
&\leq \frac{6H(1-p)}{\beta^2p} + 6NH^2\alpha^2\prn*{\frac{1}{p} + \frac{1}{\delta}} 
\leq \sigma^2
\end{align}

Next, on the way to applying Lemma \ref{lem:intermittent-communication-progress}, we introduce a uniformly random rotation for $U \in \R^{d\times N}$, and consider $F(U^\top x)$ with oracle $(f_U,g_U)$. With our choice of $d$, we note that
\begin{equation}
d \geq N + \frac{2\gamma^2}{\alpha^2}\log(32MKRN)
\end{equation}
for
\begin{equation}
\gamma = 2B + \max\crl*{408B,\,\sqrt{\frac{32\rho^2}{\sigma^2}}}
\end{equation}
Therefore, by Lemma \ref{lem:intermittent-communication-progress}, for any algorithm whose queries are bounded in norm by $\gamma$, the algorithm's output $\hat{x}$ will satisfy 
\begin{equation}
\P\prn*{\prog{\alpha}(U^\top \hat{x}) \leq \min\crl*{KR, 8KRp + 12R (1 + \log M)}} \geq \frac{5}{8} - 2MKR\delta \geq \frac{1}{2}
\end{equation}
Therefore, with our choice of 
\begin{equation}
N = \min\crl*{2KR, 16KRp + 24R (1 + \log M)}
\end{equation}
we have that $\prog{\alpha}(U^\top \hat{x}) \leq \frac{N}{2}$ with probability at least $\frac{1}{2}$. It therefore follows from Lemma \ref{lem:F-properties} that with probability at least $\frac{1}{2}$
\begin{align}
F(U^\top \hat{x}) - F^* 
&\geq \begin{cases}
\frac{N}{12\beta^2} & \beta^2 > \frac{4N^3}{HB^2} \\
\frac{HB^2}{64N^2} & \beta^2 \leq \frac{4N^3}{HB^2} 
\end{cases} \\
&= \frac{HB^2}{64\min\crl*{2KR, 16KRp + 24R (1 + \log M)}^2} \\
&\geq \frac{HB^2}{512K^2R^2} + \frac{HB^2}{32768K^2R^2p^2 + 73728R^2 (1 + \log M)^2} \\
&\geq \frac{1}{73728}\prn*{\frac{HB^2}{K^2R^2} + \min\crl*{\frac{HB^2}{K^2R^2p^2}, \frac{HB^2}{R^2 (1 + \log M)^2}}}
\end{align}
From here, what remains is to show how small $p$ can be taken. Above, we claimed that our choice satisfies $p \geq \frac{12H}{12H + \beta^2\sigma^2}$, but this is a more complicated statement than it appears since $\beta$ is defined in terms of $N$, which is, in turn, defined in terms of $p$. We have
\begin{align}
\frac{12H}{12H + \beta^2\sigma^2}
&= \frac{12H}{12H + \frac{4\sigma^2N^3}{HB^2}} \\
&= \frac{4H^2B^2}{4H^2B^2 + 4\sigma^2N^3} \\
&= \frac{4H^2B^2}{4H^2B^2 + 4\sigma^2\min\crl*{2KR, 16KRp + 24R (1 + \log M)}^3} \\
&\leq \frac{4H^2B^2}{4H^2B^2 + 4\sigma^2(2KRp)^3}
\end{align}
Therefore, it suffices to set
\begin{align}
p \geq \frac{4H^2B^2}{4H^2B^2 + 4\sigma^2(2KRp)^3} \\
\iff 32\sigma^2K^3R^3p^4 + 4H^2B^2p \geq 4H^2B^2 \\
\impliedby p^4 \geq \min\crl*{1,\,\frac{H^2B^2}{8\sigma^2K^3R^3}}
\end{align}
so, our choice of $p$ is sound. Therefore, we can lower bound
\begin{align}
\E F(U^\top \hat{x}) - F^* 
&\geq \frac{1}{2}\cdot \frac{1}{73728}\prn*{\frac{HB^2}{K^2R^2} + \min\crl*{\frac{HB^2}{K^2R^2p^2}, \frac{HB^2}{R^2 (1 + \log M)^2}}} \\
&\geq \frac{1}{2}\cdot\frac{1}{73728}\prn*{\frac{HB^2}{K^2R^2} + \min\crl*{\frac{HB^2}{K^2R^2\frac{HB}{\sigma K^{3/2}R^{3/2}}}, \frac{HB^2}{R^2 (1 + \log M)^2}}} \\
&\geq \frac{1}{2}\cdot\frac{1}{73728}\prn*{\frac{HB^2}{K^2R^2} + \min\crl*{\frac{\sigma B}{\sqrt{KR}}, \frac{HB^2}{R^2 (1 + \log M)^2}}}
\end{align}
This lower bound applies to all algorithm's whose queries are bounded by $\gamma$. To extend the result to all randomized algorithms, we apply Lemma \ref{lem:large-norm}, which results in only a constant factor degredation in the lower bound.
Finally, we apply Lemma \ref{lem:statistical-term-lower-bound} to conclude that 
\begin{equation}
\E F(\hat{x}) - F^* \geq \frac{1}{4}\min\crl*{HB^2,\, \frac{\sigma B}{\sqrt{MKR}}}
\end{equation}
This completes the proof.
\end{proof}

\sclowerbound*
\begin{proof}
Suppose there were an algorithm which guaranteed convergence at a rate 
\begin{multline}\label{eq:speculative}
G(\hat{x}) - G^* \leq c\cdot\bigg(\frac{G(0) - G^*}{K^2R^2}\exp\prn*{-\sqrt{\frac{\lambda}{H}}KR} + \frac{\sigma^2}{\lambda MKR} \\
+ \min\crl*{\frac{G(0) - G^*}{R^2\log^2 M}\exp\prn*{-\sqrt{\frac{\lambda}{H}}R\log M},\, \frac{\sigma^2}{\lambda KR}}\bigg)
\end{multline}
for any $\lambda$-strongly convex function $G$. Then, we could use this algorithm to optimize a merely convex $F$ with $\nrm{x^*}\leq B$ by applying it to the $\lambda$-strongly convex $G(x) = F(x) + \frac{\lambda}{2}\nrm{x}^2$. Using $x^*_G$ to denote the minimizer of $G$, this would ensure (for a universal constant $c$ which may change from line to line)
\begin{align}
F(\hat{x}) - F^*
&\leq G(\hat{x}) - F^* \\
&= G(\hat{x}) - G(x^*) + \frac{\lambda}{2}\nrm*{x^*}^2 \\
&\leq G(\hat{x}) - G(x_G^*) + \frac{\lambda B^2}{2} \\
&\leq c\cdot\bigg(\frac{G(0) - G^*}{K^2R^2}\exp\prn*{-\sqrt{\frac{\lambda}{H}}KR} + \frac{\sigma^2}{\lambda MKR}  \\
&\qquad+ \min\crl*{\frac{G(0) - G^*}{R^2\log^2 M}\exp\prn*{-\sqrt{\frac{\lambda}{H}}R\log M},\, \frac{\sigma^2}{\lambda KR}}\bigg) + \lambda B^2 \\
&\leq c\cdot\bigg(\frac{HB^2}{K^2R^2}\exp\prn*{-\sqrt{\frac{\lambda}{H}}KR} + \frac{\sigma^2}{\lambda MKR}  \\
&\qquad+ \min\crl*{\frac{HB^2}{R^2\log^2 M}\exp\prn*{-\sqrt{\frac{\lambda}{H}}R\log M},\, \frac{\sigma^2}{\lambda KR}} + \lambda B^2\bigg)
\end{align}
Consequently, if we choose
\begin{equation}
\lambda = \max\crl*{\frac{H}{K^2R^2},\, \frac{\sigma}{B\sqrt{MKR}},\, \frac{H}{R^2\log^2M},\, \frac{\sigma}{B\sqrt{KR}}}
\end{equation}
then this approach would guarantee 
\begin{align}
F(\hat{x}) - F^*
&\leq c\cdot\bigg(\frac{HB^2}{K^2R^2} + \frac{\sigma B}{\sqrt{MKR}} 
+ \min\crl*{\frac{HB^2}{R^2\log^2 M},\, \frac{\sigma B}{\sqrt{KR}}} \\
&\quad+ B^2\max\crl*{\frac{H}{K^2R^2},\, \frac{\sigma}{B\sqrt{MKR}},\, \frac{H}{R^2\log^2M},\, \frac{\sigma}{B\sqrt{KR}}} \bigg) \\
&= c\cdot\prn*{\frac{HB^2}{K^2R^2} + \frac{\sigma B}{\sqrt{MKR}} 
+ \min\crl*{\frac{HB^2}{R^2\log^2 M},\, \frac{\sigma B}{\sqrt{KR}}}}
\end{align}
In light of the lower bound in Theorem \ref{thm:lower-bound}, we conclude that no algorithm can provide a guarantee that is more than a constant factor better than \eqref{eq:speculative}.
\end{proof}

\section{Proof of Theorem \ref{thm:bounded-third}}\label{app:thm5-proofs}
In this section, we extend Theorem \ref{thm:lower-bound} to the case where the objective is required to exhibit higher-order smoothness. Although we are confident that similar results as Theorem \ref{thm:bounded-third} would apply to arbitrary randomized algorithms, we prove the lower bound here just for zero-respecting algorithms \citep{carmon2017lower1}:
\begin{definition}[Distributed Zero-Respecting Intermittent Communication Algorithm]\label{def:zr}
We say that a parallel method is an intermittent communication algorithm if for each $m,k,r$, there exists a mapping $\mc{A}^m_{k,r}$ such that $x^m_{k,r}$, the $k^{\textrm{th}}$ query on the $m^{\textrm{th}}$ machine during the $r^{\textrm{th}}$ round of communication, is computed as
\[
x^m_{k,r} = \mc{A}^m_{k,r}\prn*{\brk*{x^{m'}_{k',r'}, g\prn*{x^{m'}_{k',r'}}}_{m'\in[M],k'\in[K],r'< r}, \brk*{x^{m\phantom{'}}_{k',r}, g\prn*{x^{m\phantom{'}}_{k',r}}}_{k' < k}, \xi}
\]
where $\xi$ is a string of random bits that the algorithm may use for randomization. In addition, for a vector $x$, we define $\textrm{support}(x) := \crl*{j\,:\, x_j \neq 0}$, and we say that an intermittent communication algorithm is distributed zero-respecting
\[
\textrm{support}(x^m_{k,r}) \subseteq \bigcup_{m'\in[M],k'\in[K],r'<r} \textrm{support}\prn*{g(x^{m'}_{k',r'})} \cup \bigcup_{k'<k} \textrm{support}\prn*{g(x^{m\phantom{'}}_{k',r})}
\]
\end{definition}

We construct a hard instance for the lower bound using the scalar functions $\psi:\R\to\R$:
\begin{equation}
\psi(x) = \frac{\sqrt{H}x}{2\beta}\arctan\prn*{\frac{\sqrt{H}\beta x}{2}} - \frac{1}{2\beta^2}\log\prn*{1+\frac{H\beta^2x^2}{4}}
\end{equation}
where $H$ is the parameter of smoothness, and $\beta > 0$ is another parameter that controls the third derivative of $\psi$ which we will set later. The hard instance is then
\begin{equation}\label{eq:def-F5}
F(x) = -\psi'(\zeta) x_1 + \psi(x_N) + \sum_{i=1}^{N-1} \psi(x_{i+1} - x_i)
\end{equation}
where $\zeta$ and $N$ are additional parameters that will be chosen later. Lemma \ref{lem:F-properties5} below summarizes the relevant properties of $F$, whose proof relies on the following bounds on $\psi'''$:
\begin{lemma}\label{lem:technical-1}
For any $H,\beta \geq 0$,
\begin{align*}
\abs{\psi'''(x)} &\leq \frac{H^{3/2}\beta}{12} \\
\abs{\psi'''(x)} &\leq 2\beta\psi''(x)^{3/2} \\
\abs{\psi'''(x)} &\leq \frac{\sqrt{H}\beta}{2}\psi''(x)
\end{align*}
\end{lemma}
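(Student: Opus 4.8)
The plan is to reduce all three inequalities to elementary scalar facts after a single change of variables. First I would recall the formulas for the derivatives of $\psi$, which are already computed in the proof of Lemma~\ref{lem:F-properties}: namely $\psi''(x) = \frac{H}{4+H\beta^2x^2}$ and $\psi'''(x) = \frac{-2H^2\beta^2 x}{(4+H\beta^2x^2)^2}$. (One can also rederive these by differentiating $\psi'(x)=\frac{\sqrt H}{2\beta}\arctan(\frac{\sqrt H\beta x}{2})$ twice.) Then I would substitute $u := \frac{\sqrt H\beta x}{2}$, so that $4+H\beta^2x^2 = 4(1+u^2)$ and $\beta^2 x = \frac{2u\beta}{\sqrt H}$, giving the clean expressions
\[
\psi''(x) = \frac{H}{4(1+u^2)}, \qquad \abs{\psi'''(x)} = \frac{H^{3/2}\beta}{4}\cdot\frac{\abs{u}}{(1+u^2)^2}.
\]
With these in hand, each of the three bounds becomes a statement about the scalar function $u\mapsto \frac{\abs{u}}{(1+u^2)^2}$.

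For the first bound, I would invoke the calculus computation already carried out in the proof of Lemma~\ref{lem:F-properties}: the maximum of $\frac{\abs u}{(1+u^2)^2}$ is attained at $u = 1/\sqrt 3$ and equals $\frac{3\sqrt3}{16}$. Hence $\abs{\psi'''(x)} \le \frac{H^{3/2}\beta}{4}\cdot\frac{3\sqrt3}{16} = \frac{3\sqrt3\,H^{3/2}\beta}{64} < \frac{H^{3/2}\beta}{12}$, where the last inequality is the numerical check $3\sqrt3/64 \approx 0.0812 < 0.0833 \approx 1/12$. For the third bound, note $\frac{\sqrt H\beta}{2}\psi''(x) = \frac{H^{3/2}\beta}{8(1+u^2)}$, so the claimed inequality is equivalent to $\frac{\abs u}{(1+u^2)^2}\le\frac{1}{2(1+u^2)}$, i.e.\ to $2\abs u\le 1+u^2$, which is just $(\abs u-1)^2\ge 0$. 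For the second bound, $2\beta\,\psi''(x)^{3/2} = \frac{\beta H^{3/2}}{4(1+u^2)^{3/2}}$, so the claimed inequality is equivalent to $\frac{\abs u}{(1+u^2)^2}\le\frac{1}{(1+u^2)^{3/2}}$, i.e.\ to $\abs u\le (1+u^2)^{1/2}$, which follows from $u^2\le 1+u^2$.

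There is no real obstacle here: the only step requiring any work is the supremum computation behind the first bound, and that has already been performed in Lemma~\ref{lem:F-properties}, so I would simply cite it. The remaining two bounds are one-line algebraic manipulations reducing to $(\abs u-1)^2\ge0$ and $u^2\ge0$ respectively. The mild care needed is just keeping track of the constants through the substitution $u=\frac{\sqrt H\beta x}{2}$, so that the factors of $H$, $\beta$, and $4$ land correctly.
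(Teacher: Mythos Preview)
Your proof is correct and follows essentially the same approach as the paper: the same substitution (the paper rescales to $\frac{x}{(1+x^2)^2}$ and $\frac{x}{1+x^2}$ rather than naming $u$ explicitly) and the same elementary inequalities, with your $(\lvert u\rvert-1)^2\ge 0$ replacing the paper's calculus maximization of $\frac{x}{1+x^2}$. One organizational caveat: the proof of Lemma~\ref{lem:F-properties} itself cites Lemma~\ref{lem:technical-1}, so invoking that proof here is formally circular---the paper avoids this by redoing the supremum computation for $\frac{\lvert u\rvert}{(1+u^2)^2}$ from scratch inside the proof of Lemma~\ref{lem:technical-1}, and you should do the same (it is only a few lines).
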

\begin{proof}
The third derivative of $\psi$ is
\begin{equation}
\psi'''(x) = \frac{-2H^2\beta^2 x}{(4 + H\beta^2 x^2)^2}
\end{equation}
For the first claim, we first maximize the simpler function $x\mapsto \frac{x}{(1 + x^2)^2}$. We note that
\begin{gather}
\frac{d}{dx} \frac{x}{(1 + x^2)^2} = \frac{1 - 3x^2}{(1+x^2)^3} \\
\frac{d^2}{dx^2} \frac{x}{(1 + x^2)^2} = \frac{12x(x^2-1)}{(1+x^2)^4}
\end{gather}
Therefore, the derivative is zero at $\pm 1/\sqrt{3}$ and the second derivative is negative only for $+ 1/\sqrt{3}$, furthermore, $\lim_{x\to\pm\infty} \frac{x}{(1 + x^2)^2} = 0$. Therefore, we conclude that
\begin{equation}
\max_{x\in\R} \frac{x}{(1 + x^2)^2} = \max_{x\in\R} \frac{\abs{x}}{(1 + x^2)^2}
= \frac{\sqrt{\frac{1}{3}}}{(1 + \sqrt{\frac{1}{3}}^2)^2} 
= \frac{3\sqrt{3}}{16}
\end{equation}
By rescaling, we conclude that
\begin{equation}
\max_{x\in\R} \abs*{\psi'''(x)}
= \max_{x\in\R} \frac{2H^2\beta^2\abs{x}}{(4 + H\beta^2x^2)^2}
= \frac{H^{3/2}\beta}{4}\max_{x\in\R} \frac{\abs*{\frac{\sqrt{H}\beta x}{2}}}{\prn*{1 + \prn*{\frac{\sqrt{H}\beta x}{2}}^2}^2}
= \frac{3\sqrt{3}H^{3/2}\beta}{64}
< \frac{H^{3/2}\beta}{12}
\end{equation}
This establishes the first claim. For the second claim, we observe that
\begin{equation}
\abs*{\psi'''(x)} 
= \frac{2\sqrt{H}\beta^2\abs{x}}{\sqrt{4 + H\beta^2 x^2}}\psi''(x)^{3/2}  
\leq \frac{2\sqrt{H}\beta^2\abs{x}}{\sqrt{H\beta^2 x^2}}\psi''(x)^{3/2}
= 2\beta\psi''(x)^{3/2}
\end{equation}
Finally, for the third claim, we start by noting
\begin{equation}
\abs*{\psi'''(x)} 
= \frac{2H\beta^2\abs{x}}{4 + H\beta^2 x^2}\psi''(x)  
\end{equation}
We now consider the function $x \mapsto \frac{x}{1+x^2}$, for which
\begin{align}
\frac{d}{dx}\frac{x}{1+x^2} &= \frac{1-x^2}{(1+x^2)^2} \\
\frac{d^2}{dx^2}\frac{x}{1+x^2} &= \frac{2x(x^2-3)}{(1+x^2)^3}
\end{align}
We conclude that
\begin{equation}
\max_{x\in\R} \frac{\abs{x}}{1+x^2} = \frac{1}{1 + 1^2} = \frac{1}{2}
\end{equation}
and therefore,
\begin{equation}
\max_{x\in\R} \frac{2H\beta^2\abs{x}}{4 + H\beta^2 x^2} 
= \sqrt{H}\beta \max_{x\in\R} \frac{\abs*{\frac{\sqrt{H}\beta x}{2}}}{1 + \prn*{\frac{\sqrt{H}\beta x}{2}}^2} = \frac{\sqrt{H}\beta}{2}
\end{equation}
This completes the proof.
\end{proof}

\begin{lemma}\label{lem:F-properties5}
For any $H \geq 0$, $\beta > 0$, $\zeta > 0$, and $N \geq 2$, $F$ is convex, $H$-smooth, $\beta$-self-concordant, $\frac{\sqrt{H}\beta}{2}$-quasi-self-concordant, and $\nrm{\nabla^3 F(x)} \leq \frac{4H^{3/2}\beta}{3}$.
\end{lemma}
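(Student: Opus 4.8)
The plan is to piggyback on the computations already carried out for $F$ in the proof of Lemma~\ref{lem:F-properties}. Convexity and $H$-smoothness are obtained verbatim: since $0\le\psi''(x)=\tfrac{H}{4+H\beta^2x^2}\le\tfrac H4$, the objective \eqref{eq:def-F5} is a sum of convex functions and hence convex, and from the Hessian decomposition $\nabla^2F(x)=\psi''(x_N)e_Ne_N^\top+\sum_{i=1}^{N-1}\psi''(x_{i+1}-x_i)(e_{i+1}-e_i)(e_{i+1}-e_i)^\top$ one reads off $\nabla^2F(x)\psdleq H\cdot I$ exactly as before. The linear term $-\psi'(\zeta)x_1$ contributes nothing to $\nabla^2F$ or $\nabla^3F$, which is why $\zeta$ plays no role in the conclusions. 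The object that does the work for the three remaining claims is the third-derivative tensor
\[
\nabla^3F(x)=\psi'''(x_N)\,e_N^{\otimes 3}+\sum_{i=1}^{N-1}\psi'''(x_{i+1}-x_i)\,(e_{i+1}-e_i)^{\otimes 3}.
\]
Writing $a_1=e_N$, $a_{i+1}=e_{i+1}-e_i$, and $s_j(x)$ for the corresponding scalar argument ($x_N$ or $x_{i+1}-x_i$), a direction $v$ gives $\phi(t):=F(x+tv)$ with $\phi''(t)=\sum_j\psi''(s_j(x+tv))\inner{a_j}{v}^2\ge 0$ and $\phi'''(t)=\sum_j\psi'''(s_j(x+tv))\inner{a_j}{v}^3$.

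Each of the three higher-order properties then follows by substituting the matching bound from Lemma~\ref{lem:technical-1} into this sum and applying one elementary inequality. For \emph{self-concordance}, apply $\abs{\psi'''}\le 2\beta(\psi'')^{3/2}$ termwise and then the inequality $\sum_j b_j^{3/2}\le\big(\sum_j b_j\big)^{3/2}$ for nonnegative $b_j$ (i.e.\ $\ell^{3/2}\le\ell^1$) with $b_j=\psi''(s_j(x+tv))\inner{a_j}{v}^2$; this yields $\abs{\phi'''(t)}\le 2\beta\,\phi''(t)^{3/2}$, i.e.\ $\beta$-self-concordance, and the step is clean because both sides scale like $c^3$ under $v\mapsto cv$. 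For \emph{quasi-self-concordance}, apply $\abs{\psi'''}\le\tfrac{\sqrt H\beta}{2}\psi''$ termwise and control $\abs{\inner{a_j}{v}}$ by $\nrm{v}$ (each $a_j$ has norm at most $\sqrt2$, and $\sum_i(v_{i+1}-v_i)^2\le 4\nrm{v}^2$ via the path-graph Laplacian), obtaining $\abs{\phi'''(t)}\le \tfrac{\sqrt H\beta}{2}\,\phi''(t)$ for unit-norm $v$. For the \emph{operator-norm bound}, reuse the H\"older estimate from the proof of Lemma~\ref{lem:F-properties}, $\abs{\nabla^3F(x)[u,u,w]}\le\sup_x\abs{\psi'''(x)}\big(u_N^2+\sum_i(u_{i+1}-u_i)^2\big)\nrm{w}_\infty\le 4\sup_x\abs{\psi'''(x)}\nrm{u}^2\nrm{w}$, and combine with $\sup_x\abs{\psi'''(x)}\le\tfrac{H^{3/2}\beta}{12}$ from Lemma~\ref{lem:technical-1} to get $\nrm{\nabla^3F(x)}\le\tfrac{4H^{3/2}\beta}{3}$ with room to spare.

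There is no deep obstacle here: once Lemma~\ref{lem:technical-1} is in hand the lemma is essentially bookkeeping, since the genuine analytic work (computing $\psi'''$ and its three envelopes) has already been done. The only place requiring a little care is the passage from the pointwise bounds on $\psi'''$ to statements about the full directional third derivative, and in particular making the constants in the self-concordance ($2Q=2\beta$) and quasi-self-concordance ($Q=\tfrac{\sqrt H\beta}{2}$) definitions come out as claimed: for self-concordance the $\ell^{3/2}\le\ell^1$ trick does this transparently, while for quasi-self-concordance one must track how the difference vectors $e_{i+1}-e_i$ interact with the normalization of $v$ so that no spurious factor creeps into the constant.
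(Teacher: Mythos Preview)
Your proposal is correct and follows essentially the same route as the paper's own proof: convexity and $H$-smoothness verbatim, then the three bounds of Lemma~\ref{lem:technical-1} applied termwise to $\nabla^3 F(x)=\sum_j\psi'''(s_j)a_j^{\otimes 3}$, with the $\ell^{3/2}\!\le\!\ell^1$ inequality for self-concordance and a H\"older step for quasi-self-concordance. The only cosmetic difference is in the operator-norm bound: the paper works with the cubic form $\nabla^3 F(x)[u,u,u]$ directly, using $|a-b|^3\le 8(|a|^3+|b|^3)$ and $\nrm{u}_3\le\nrm{u}_2$, rather than reusing the mixed-form H\"older estimate from Lemma~\ref{lem:F-properties}; either route gives the stated $\tfrac{4H^{3/2}\beta}{3}$ (yours in fact with a factor of $4$ to spare). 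Your closing caveat about the quasi-self-concordance constant is apt: the paper bounds $\max\{|u_N|,\max_i|u_{i+1}-u_i|\}$ by $\nrm{u}$ via the same H\"older move, so whatever factor you obtain there will match theirs.
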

\begin{proof}
First, we note that $0 \leq \psi''(x) = \frac{H}{4 + H\beta^2 x^2} \leq \frac{H}{4}$. Therefore, $F$ is the sum of convex functions and is thus convex itself. We now compute the Hessian of $F$:
\begin{equation}
\nabla^2 F(x) = \psi''(x_N)e_Ne_N^\top + \sum_{i=1}^{N-1}\psi''(x_{i+1} - x_i)(e_{i+1}-e_i)(e_{i+1}-e_i)^\top
\end{equation}
Therefore, for any $u \in \R$,
\begin{align}
u^\top \nabla^2 F(x) u 
&\leq \psi''(x_N) u_N^2 + \sum_{i=1}^{N-1}\psi''(x_{i+1} - x_i)(u_{i+1}-u_i)^2 \\
&\leq \frac{H}{4}\brk*{u_N^2 + \sum_{i=1}^{N-1}2u_{i+1}^2 + 2u_i^2} \\
&\leq H\nrm{u}^2
\end{align}
We conclude that $\nabla^2 F(x) \preceq H\cdot I$ and thus $F$ is $H$-smooth.

Next, we compute the tensor of 3rd derivatives of $F$:
\begin{equation}
\nabla^3 F(x) = \psi'''(x_N)e_N^{\otimes 3} + \sum_{i=1}^{N-1}\psi'''(x_{i+1} - x_i)(e_{i+1}-e_i)^{\otimes 3}
\end{equation}
where
\begin{equation}
\psi'''(x) = \frac{-2H^2\beta^2 x}{(4 + H\beta^2 x^2)^2}
\end{equation}
Therefore, for any $u\in\R$,
\begin{equation}
\abs*{\nabla^3 F(x)[u,u,u]} \leq \abs*{\psi'''(x_N)u_N^3} + \sum_{i=1}^{N-1}\abs*{\psi'''(x_{i+1} - x_i)(u_{i+1}-u_i)^3}
\end{equation}
We can bound this in several different ways using Lemma \ref{lem:technical-1}:
\begin{align}
\abs{\psi'''(x)} &\leq \frac{H^{3/2}\beta}{12} \\
\abs{\psi'''(x)} &\leq 2\beta\psi''(x)^{3/2} \\
\abs{\psi'''(x)} &\leq \frac{\sqrt{H}\beta}{2}\psi''(x)
\end{align}
Therefore,
\begin{align}
\abs*{\nabla^3 F(x)[u,u,u]} 
&\leq \abs*{\psi'''(x_N)u_N^3} + \sum_{i=1}^{N-1}\abs*{\psi'''(x_{i+1} - x_i)(u_{i+1}-u_i)^3} \\
&\leq \frac{H^{3/2}\beta}{12}\brk*{\abs*{u_N}^3 + 8\sum_{i=1}^{N-1}\abs*{u_{i+1}}^3 + \abs{u_i}^3} \\
&\leq \frac{4H^{3/2}\beta}{3}\nrm*{u}^3
\end{align}
Above, we used that $\abs{a-b}^3 \leq (\abs{a}+\abs{b})^3 \leq 8(\abs{a}^3+\abs{b}^3)$. 
We conclude that $\nrm{\nabla^3 F(x)} \leq \frac{4H^{3/2}\beta}{3}$.

Similarly,
\begin{align}
\abs{\nabla^3 F(x)[u,u,u]} 
&\leq \abs{\psi'''(x_N)}\abs{u_N}^3 + \sum_{i=1}^{N-1}\abs{\psi'''(x_{i+1} - x_i)}\abs{u_{i+1}-u_i}^3 \\
&\leq 2\beta\brk*{\psi''(x_N)^{3/2} \prn{u_N^2}^{3/2} + \sum_{i=1}^{N-1}\psi''(x_{i+1} - x_i)^{3/2}\prn{(u_{i+1}-u_i)^2}^{3/2}} \\
&\leq 2\beta\brk*{\psi''(x_N) u_N^2 + \sum_{i=1}^{N-1}\psi''(x_{i+1} - x_i)(u_{i+1}-u_i)^2}^{3/2} \\
&= 2\beta \inner{\nabla^2 F(x)u}{u}^{3/2}
\end{align}
For the final inequality, we used that $\abs{a}^{3/2} + \abs{b}^{3/2} \leq (\abs{a}+\abs{b})^{3/2}$. We conclude that $F$ is $\beta$-self-concordant. 

Finally,
\begin{align}
\abs{\nabla^3 F(x)[u,u,u]} 
&\leq \abs{\psi'''(x_N)}\abs{u_N}^3 + \sum_{i=1}^{N-1}\abs{\psi'''(x_{i+1} - x_i)}\abs{u_{i+1}-u_i}^3 \\
&\leq \frac{\sqrt{H}\beta}{2}\brk*{\psi''(x_N) \abs{u_N}^3 + \sum_{i=1}^{N-1}\psi''(x_{i+1} - x_i)\abs{u_{i+1}-u_i}^3} \\
&\leq \frac{\sqrt{H}\beta}{2}\brk*{\psi''(x_N) \abs{u_N}^2 + \sum_{i=1}^{N-1}\psi''(x_{i+1} - x_i)\abs{u_{i+1}-u_i}^2} \\
&\qquad\qquad\qquad\qquad\qquad\qquad\qquad\qquad\cdot\max\crl*{\abs{u_N},\,\max_{1\leq i \leq N-1} \abs{u_{i+1}-u_i}} \nonumber\\
&\leq \frac{\sqrt{H}\beta}{2}\nrm{u}\nabla^2 F(x)[u,u]
\end{align}
For the second to last line, we applied the H\"older inequality $\sum_i \abs{a_ib_i} \leq \nrm{a}_1 \nrm{b}_\infty$. We conclude that $F$ is $\frac{\sqrt{H}\beta}{2}$-quasi-self-concordant.
\end{proof}

We will now proceed to construct a stochastic gradient oracle for $F$. To do so, we define $\prg(x)$ to be the highest index of a non-zero coordinate of $x$:
\begin{equation}
    \prg(x) = \prog{0}(x) = \max\crl*{j\,:\, x_j \neq 0}
\end{equation}
With this in hand, we define $F^-$ to be equal to the objective with the $\prg(x)^{\textrm{th}}$ term removed:
\begin{equation}
F^-(x) = \psi'(-\zeta) x_1 + \psi(x_N) + \sum_{i=1}^{\prg(x)-1} \psi(x_{i+1} - x_i) + \sum_{i=\prg(x)+1}^{N-1} \psi(x_{i+1} - x_i)
\end{equation}
The stochastic gradient oracle for $F$ is then given by 
\begin{equation}\label{eq:def-g}
g(x) = \begin{cases}
\nabla F^-(x) & \textrm{with probability } 1-p \\
\nabla F(x) + \frac{1-p}{p}\prn*{\nabla F(x) - \nabla F^-(x)} & \textrm{with probability } p 
\end{cases}
\end{equation}
The following Lemma shows that $g$ is a suitable stochastic gradient oracle for $F$:
\begin{lemma}\label{lem:stochastic-gradient-oracle}
For any $H,\beta,\sigma,\zeta,N$, if $p \geq \frac{\pi^2 H}{\pi^2 H + 8\sigma^2\beta^2}$ then for any $x$
\begin{align*}
\E g(x) &= \nabla F(x) \\
\E\nrm*{g(x) - \nabla F(x)}^2 &\leq \sigma^2
\end{align*}
\end{lemma}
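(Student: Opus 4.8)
The plan is to prove both claims by direct computation, exploiting that the only randomness in $g(x)$ is a coin flip deciding between returning $\nabla F^-(x)$ and returning a rescaled ``correction'' of $\nabla F(x)$. Throughout, write $v(x) \defeq \nabla F(x) - \nabla F^-(x)$, which is the gradient of the single term in which $F$ and $F^-$ differ.

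First I would establish unbiasedness. From \eqref{eq:def-g},
\[
\E g(x) = (1-p)\,\nabla F^-(x) + p\prn*{\nabla F(x) + \tfrac{1-p}{p}\,v(x)} = (1-p)\,\nabla F^-(x) + p\,\nabla F(x) + (1-p)\,v(x),
\]
and substituting $v(x) = \nabla F(x) - \nabla F^-(x)$ makes the right-hand side collapse to $\nabla F(x)$. For the variance, note that $g(x) - \nabla F(x)$ equals $-v(x)$ with probability $1-p$ and $\tfrac{1-p}{p}\,v(x)$ with probability $p$, hence
\[
\E\nrm*{g(x) - \nabla F(x)}^2 = (1-p)\nrm{v(x)}^2 + p\cdot\frac{(1-p)^2}{p^2}\nrm{v(x)}^2 = \frac{1-p}{p}\,\nrm{v(x)}^2 .
\]

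It then remains to bound $\nrm{v(x)}$ uniformly in $x$. Since $F$ and $F^-$ differ only in the term indexed by $j \defeq \prg(x)$, we have $v(x) = \psi'(x_{j+1}-x_j)(e_{j+1}-e_j)$ when $j \le N-1$, while $v(x) = 0$ in the degenerate cases $x = 0$ or $j = N$, where $F^- = F$. Using the uniform Lipschitz bound $\abs{\psi'(t)} \le \tfrac{\pi\sqrt{H}}{4\beta}$ (recorded in the proof sketch of Theorem~\ref{thm:lower-bound} and following from $\psi'(t) = \tfrac{\sqrt H}{2\beta}\arctan(\tfrac{\sqrt H \beta t}{2})$) together with $\nrm{e_{j+1}-e_j}^2 = 2$ gives $\nrm{v(x)}^2 \le \tfrac{\pi^2 H}{8\beta^2}$ for every $x$. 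Combining with the previous display, $\E\nrm{g(x) - \nabla F(x)}^2 \le \tfrac{1-p}{p}\cdot\tfrac{\pi^2 H}{8\beta^2}$, and a one-line rearrangement shows this is at most $\sigma^2$ exactly when $\tfrac1p \le 1 + \tfrac{8\sigma^2\beta^2}{\pi^2 H} = \tfrac{\pi^2 H + 8\sigma^2\beta^2}{\pi^2 H}$, i.e. when $p \ge \tfrac{\pi^2 H}{\pi^2 H + 8\sigma^2\beta^2}$, which is the hypothesis.

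There is no genuine obstacle in this lemma: everything reduces to elementary algebra once $v(x)$ is identified. The only point requiring mild care is pinning down $v(x) = \nabla F(x) - \nabla F^-(x)$ as a single $(e_{j+1}-e_j)$-direction vector and checking the boundary cases $\prg(x) \in \crl{0, N}$, after which the variance bound is immediate from the boundedness of $\psi'$.
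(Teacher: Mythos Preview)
Your proposal is correct and essentially identical to the paper's own proof: both compute unbiasedness directly, identify $\nabla F(x)-\nabla F^-(x)$ as the single term $\psi'(x_{j+1}-x_j)(e_{j+1}-e_j)$, and bound the variance by $\tfrac{1-p}{p}\cdot\tfrac{\pi^2 H}{8\beta^2}$ via $\sup_t|\psi'(t)|\le \tfrac{\pi\sqrt H}{4\beta}$. Your explicit handling of the boundary cases $\prg(x)\in\{0,N\}$ is a small addition the paper leaves implicit.
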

\begin{proof}
First, we compute the expectation of $g(x)$:
\begin{equation}
\E g(x) = (1-p) \nabla F^-(x) + p\prn*{\nabla F(x) + \frac{1-p}{p}\prn*{\nabla F(x) - \nabla F^-(x)}} = \nabla F(x)
\end{equation}
Second, the variance can be bounded by
\begin{align}
\E\nrm*{g(x) - \nabla F(x)}^2
&= (1-p)\nrm*{\nabla F^-(x) - \nabla F(x)}^2 + p\nrm*{\frac{1-p}{p}\prn*{\nabla F(x) - \nabla F^-(x)}}^2 \\
&= \frac{1-p}{p}\nrm*{\psi'(x_{\prg(x)+1}-x_{\prg(x)})(e_{\prg(x)+1}-e_{\prg(x)})}^2 \\
&\leq \frac{2(1-p)}{p}\sup_{x\in\R} \prn*{\psi'(x)}^2 \\
&= \frac{2(1-p)}{p}\cdot \frac{\pi^2 H}{16\beta^2}
\end{align}
Therefore, taking $p \geq \frac{\pi^2 H}{\pi^2 H + 8\sigma^2\beta^2}$ ensures the variance is bounded by $\sigma^2$.
\end{proof}

In order to prove the lower bound, we will show that with constant probability, all of the iterates generated by any distributed zero-respecting intermittent communication algorithm will have progress $\prg(x) \leq N/2$, and we will proceed to show that this implies high suboptimality. The next Lemma upper bounds the progress of the algorithm's iterates:
\begin{lemma}\label{lem:progress}
For any $H,\beta,\zeta,\sigma,K,R > 0$ and $N,M \geq 2$, let $p = \max\crl*{\frac{2}{K},\frac{\pi^2 H}{\pi^2 H + 8\sigma^2 \beta^2}}$. Then with probability at least $\frac{1}{2}$, all of the oracle queries made by any distributed zero-respecting intermittent communication algorithm will have progress at most
\[
\max_{m,k,r}\prg(x^m_{k,r}) \leq \min\crl*{RK,\,\max\crl*{48R\log M,\, \frac{4\pi^2 H KR}{\pi^2 H + 8\sigma^2\beta^2}}}
\]
\end{lemma}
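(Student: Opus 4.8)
The plan is to exploit the ``zero-chain'' structure of the oracle \eqref{eq:def-g} together with the support-containment property defining a distributed zero-respecting algorithm (Definition \ref{def:zr}) in order to reduce $\max_{m,k,r}\prg(x^m_{k,r})$ to a simple additive stochastic process, and then to bound that process by a Chernoff/Markov argument of the same flavor as Lemma \ref{lem:sum-max-binomials}. First, a direct computation of supports shows that $\nabla F^-(x)$ is supported on coordinates $\{1,\dots,\prg(x)\}$ while $\nabla F(x)$ and $\nabla F(x)-\nabla F^-(x)$ are supported on $\{1,\dots,\prg(x)+1\}$; hence for any single oracle call $\prg(g(x))\le\prg(x)+\chi$, where $\chi\in\{0,1\}$ is the Bernoulli$(p)$ coin selecting the branch in \eqref{eq:def-g}, and $\chi=0$ forces $\prg(g(x))\le\prg(x)$. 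Since this coin is independent of the query point, the $MKR$ coins $\chi^m_{k,r}$ are i.i.d.\ Bernoulli$(p)$ and independent of the algorithm. (If $p\ge 1$ --- possible only when $K=1$ --- the oracle always reveals and $A_R\le R\le 48R\log M$; so assume $p<1$.)

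Next I would propagate this through the zero-respecting recursion. Set $A_r=\max\{\prg(g(x^m_{k,r'})):r'\le r,\ m\le M,\ k\le K\}$ and $A_0=0$. Because $\mathrm{support}(x^m_{k,r})$ is contained in the union of the supports of gradients returned in rounds $r'<r$ on all machines and of gradients returned at queries $k'<k$ on machine $m$ in round $r$, an easy induction on $k$ gives $\prg(x^m_{k,r})\le A_{r-1}+\sum_{k'<k}\chi^m_{k',r}$ and hence $\prg(g(x^m_{k,r}))\le A_{r-1}+\sum_{k'\le k}\chi^m_{k',r}$; maximizing over $k$ and $m$ yields $A_r=A_{r-1}+\max_{1\le m\le M}\sum_{k=1}^K\chi^m_{k,r}$. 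Therefore
\[
\max_{m,k,r}\prg(x^m_{k,r})\ \le\ A_R\ =\ \sum_{r=1}^R\ \max_{1\le m\le M}\sum_{k=1}^K\chi^m_{k,r}\ \le\ KR ,
\]
where each inner sum is an independent $\mathrm{Binomial}(K,p)$ variable --- precisely the quantity bounded in Lemma \ref{lem:sum-max-binomials}, here with no conditioning event since \eqref{eq:def-g} has no $\delta$-branch.

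Finally I would bound $A_R$ in probability. An MGF computation gives $\E[\max_{1\le m\le M}\mathrm{Binomial}_m(K,p)]\le \theta^{-1}\log M + Kp(1+\theta)$ for every $\theta\in(0,1]$, which yields both the crude bound $2Kp+3\log M+3$ (as in Lemma \ref{lem:sum-max-binomials}) and, optimizing $\theta=\sqrt{\log M/(Kp)}$, the sharper bound $Kp+2\sqrt{Kp\log M}$ whenever $Kp\ge\log M$. Markov's inequality then gives $A_R\le 2\E[A_R]$ with probability at least $\tfrac12$, and since $A_R\le KR$ always it suffices to show $2\E[A_R]\le\max\{48R\log M,\ 4KRq\}$ with $q=\pi^2H/(\pi^2H+8\sigma^2\beta^2)$. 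Using $Kp\ge 2$ and $M\ge 2$ I would split into two cases: if $Kp\le 4\log M$ (this includes $p=2/K$, where $Kp=2$, and $p=q$ with $Kq\le 4\log M$), the crude bound gives $\E[A_R]\le R(2Kp+3\log M+3)\le 16R\log M$, hence $2\E[A_R]\le 48R\log M$; if instead $Kp>4\log M$, then necessarily $p=q$ and the sharper bound gives $\E[A_R]\le R(Kq+2\sqrt{Kq\log M})\le 2KRq$, hence $2\E[A_R]\le 4KRq$. Combining the two cases with the trivial bound $A_R\le KR$ yields the lemma.

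The main obstacle is this last step: the coarse estimate $\E[\max_m\mathrm{Binomial}(K,q)]\lesssim 2Kq$ is not tight enough to survive the factor-$2$ loss from Markov and land inside $4KRq$, so one genuinely needs the Bernstein-type refinement $Kq+2\sqrt{Kq\log M}$ in the regime $Kq\gg\log M$, together with a careful case analysis (on whether $p=2/K$ or $p=q$, and on $Kq$ versus $\log M$) to check that the two terms $48R\log M$ and $4KRq$ cover every regime of $(K,q,M)$ with the stated constants while staying below $KR$. The support computations for $\nabla F$ and $\nabla F^-$ and the recursion for $A_r$ are routine by comparison.
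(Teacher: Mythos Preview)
Your approach is structurally identical to the paper's: compute the supports of $\nabla F$ and $\nabla F^-$, propagate through the zero-respecting recursion to reduce $\max_{m,k,r}\prg(x^m_{k,r})$ to $A_R=\sum_{r=1}^R\max_{m}\sum_{k}\chi^m_{k,r}$ with i.i.d.\ $\chi^m_{k,r}\sim\mathrm{Bernoulli}(p)$, bound the expectation of one summand, and finish with Markov together with the deterministic bound $A_R\le KR$. The only substantive difference is how you control $\E[\max_{m}\mathrm{Binomial}(K,p)]$: the paper integrates the Chernoff tail (with $\epsilon=2+\tfrac{2\log M}{Kp}$) to obtain the single estimate $4Kp+2\log M$ and then asserts this is at most $\max\{24\log M,\,2Kq\}$, whereas you use the MGF bound $\theta^{-1}\log M+Kp(1+\theta)$ and split into the two regimes $Kp\le 4\log M$ and $Kp>4\log M$.

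Your extra care is justified. The paper's displayed chain $4Kp+2\log M=\max\{8,Kq\}+2\log M\le\max\{24\log M,2Kq\}$ drops a factor of $4$ in the first equality (in fact $4Kp=\max\{8,4Kq\}$), and with the correct $4Kq$ the final inequality fails whenever $Kq\gg\log M$. Your Bernstein-type refinement $Kp+2\sqrt{Kp\log M}$ in the regime $Kp>4\log M$ (where necessarily $p=q$) is exactly what recovers the constant $4KRq$ in the statement after the factor-of-two loss from Markov. So the ``main obstacle'' you flag is real, and your resolution of it is correct; the paper's written argument, read literally, does not close this gap. (Two cosmetic remarks: your recursion should read $A_r\le A_{r-1}+\max_m\sum_k\chi^m_{k,r}$ rather than equality, and $2\cdot 16=32$, not $48$; neither affects the conclusion.)
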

\begin{proof}
To begin, fix any vector $x$ and let $j = \prg(x)$. Then since $\psi'(0) = 0$, 
\begin{align}
\nabla F^-(x) 
&= \psi'(-\zeta)e_1 + \psi'(x_N)e_N + \sum_{i\neq j} \psi'(x_{i+1}-x_i)(e_{i+1}-e_i) \\
&= \psi'(-\zeta)e_1 + \sum_{i=1}^{j-1} \psi'(x_{i+1}-x_i)(e_{i+1}-e_i) \\
&\in \textrm{span}\crl*{e_1,\dots,e_j}
\end{align}
Therefore, $\prg(\nabla F^-(x)) \leq \prg(x)$, so 
\begin{equation}
\P\brk*{\prg(g(x)) > \prg(x)} = p
\end{equation}
By a similar argument, is also easy to confirm that $\prg(g(x)) \leq \prg(x) + 1$. 

By the definition of a distributed zero-respecting algorithm, the $k^{\textrm{th}}$ oracle query on the $m^{\textrm{th}}$ machine in the $r^{\textrm{th}}$ round of communication has progress no greater than the highest progress of any stochastic gradient that is available, i.e.~any stochastic gradients computed by any machine in rounds $1,\dots,r-1$, and the first $k-1$ gradients computed on machine $m$ in round $r$. As shown above, each stochastic gradient oracle query allows the algorithm to increase its progress by at most one, and only with probability $p$. 

Therefore, the maximum amount of progress that can be made on the $m^{\textrm{th}}$ machine during the $r^{\textrm{th}}$ round of communication is upper bounded by a $\textrm{Binomial}(K,p)$ random variable, and the total progress made by all the machines during the $r^{\textrm{th}}$ round is upper bounded by the maximum of $M$ independent $\textrm{Binomial}(K,p)$ random variables. Let $n_r^m \sim \textrm{Binomial}(K,p)$ denote the amount of progress made by the $m^{\textrm{th}}$ machine during the $r^{\textrm{th}}$, then for any $n$
\begin{equation}\label{eq:progress-eq1}
\P\brk*{\max_{m,k,r} \prg(x^m_{k,r}) > n}
\leq \P\brk*{\sum_{r=1}^R \max_{1\leq m \leq M} n_r^m > n} 
\end{equation}
To start, by the union bound and then the Chernoff bound, for each $r$ and any $\epsilon > 0$
\begin{equation}
\P\brk*{\max_{1\leq m \leq M} n_r^m \geq (1+\epsilon)Kp} 
\leq M\P\brk*{n_r^1 \geq (1+\epsilon)Kp} 
\leq M\exp\prn*{-\frac{\epsilon^2 Kp}{2 + \epsilon}}
\end{equation}
For any random variable $X \in [0,K]$, $\E X = \int_0^K \P\brk*{X \geq x} dx$. Therefore, for each $r$ and any $\epsilon > 0$
\begin{align}
\E\brk*{\max_{1\leq m \leq M} n_r^m}
&= \int_0^{(1+\epsilon)Kp}\P\brk*{\max_{1\leq m \leq M} n_r^m \geq x} dx + \int_{(1+\epsilon)Kp}^{K}\P\brk*{\max_{1\leq m \leq M} n_r^m \geq x} dx \\
&\leq (1+\epsilon)Kp + \int_{\epsilon}^{\frac{1-p}{p}}\P\brk*{\max_{1\leq m \leq M} n_r^m \geq (1+c)Kp} dc \\
&\leq (1+\epsilon)Kp + M\int_{\epsilon}^{\infty}\exp\prn*{-\frac{c^2 Kp}{2 + c}} dc \\
&\leq (1+\epsilon)Kp + M\int_{\epsilon}^{\infty}\exp\prn*{-\frac{c \epsilon Kp}{2 + \epsilon}} dc \\
&= (1+\epsilon)Kp + \frac{M(2+\epsilon)}{\epsilon Kp}\exp\prn*{-\frac{\epsilon^2Kp}{2+\epsilon}}
\end{align}
We apply this result with $\epsilon = 2 + \frac{2\log M}{Kp}$ so, recalling that $p \geq 2/K$ and $M \geq 2$,
\begin{align}
\E\brk*{\max_{1\leq m \leq M} n_r^m}
&\leq (1+\epsilon)Kp + \frac{M(2+\epsilon)}{\epsilon Kp} \exp\prn*{-\frac{\epsilon^2Kp}{2+\epsilon}} \\
&\leq 4Kp + 2\log M \\
&= \max\crl*{8,\, \frac{\pi^2 H K}{\pi^2 H + 8\sigma^2\beta^2}} + 2\log M \\
&\leq \max\crl*{24\log M,\, \frac{2\pi^2 H K}{\pi^2 H + 8\sigma^2\beta^2}}
\end{align}
Therefore, in light of \eqref{eq:progress-eq1} we use Markov's inequality to conclude
\begin{align}
\P\brk*{\max_{m,k,r} \prg(x^m_{k,r}) > 2R\E\brk*{\max_{1\leq m \leq M} n_r^m}}
\leq \P\brk*{\sum_{r=1}^R \max_{1\leq m \leq M} n_r^m > 2R\E\brk*{\max_{1\leq m \leq M} n_r^m}} \leq \frac{1}{2}
\end{align}
We conclude by substituting for $\E\brk*{\max_{1\leq m \leq M} n_r^m}$ and noting that $n^m_r \leq K$ always.
\end{proof}

The final piece of the proof is to show that if $\prg(x) \leq N/2$ then $F(x) - F^*$ is large:
\begin{lemma}\label{lem:suboptimality}
For any $H,\beta,B > 0$ and $N \geq 2$, set $\zeta^2 = \frac{B^2}{N^3}$. Then, $\nrm{x^*}\leq B$ and for any $x$ such that $\prg(x) \leq \frac{N}{2}$, 
\[
F(x) - F^* \geq \begin{cases}
\frac{N}{6\beta^2} & \beta^2 > \frac{4N^3}{HB^2} \\
\frac{HB^2}{48N^2} & \beta^2 \leq \frac{4N^3}{HB^2}
\end{cases}
\]
\end{lemma}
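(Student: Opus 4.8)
The plan is to mirror the suboptimality half of Lemma~\ref{lem:F-properties}, specialized to the case where progress is measured by $\prg=\prog{0}$ rather than $\prog{\alpha}$, so that the $\alpha$-dependent error terms disappear and the constants become cleaner.

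First I would identify the minimizer. Writing $\nabla F(x^*)=0$ coordinatewise and using that $\psi'(t)=\tfrac{\sqrt H}{2\beta}\arctan\!\bigl(\tfrac{\sqrt H\beta t}{2}\bigr)$ is odd and strictly increasing (hence invertible on its range), the optimality conditions force $x^*_i-x^*_{i+1}=\zeta$ for $i<N$ and $x^*_N=\zeta$, i.e.\ $x^*=\zeta\sum_{i=1}^N(N-i+1)e_i$. Then $\nrm{x^*}^2=\zeta^2\sum_{i=1}^N(N-i+1)^2\le\zeta^2N^3=B^2$ by the choice $\zeta^2=B^2/N^3$, and $F^*=F(x^*)=N\bigl(\psi(\zeta)-\zeta\psi'(\zeta)\bigr)$; note for later that $\psi(\zeta)-\zeta\psi'(\zeta)=-\tfrac{1}{2\beta^2}\log\!\bigl(1+\tfrac{H\beta^2\zeta^2}{4}\bigr)<0$.

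The core step is to lower bound $F(x)$ when $I:=\prg(x)\le N/2$. Since then $x_j=0$ for all $j>I$, we have $\psi(x_N)=\psi(0)=0$ and every term $\psi(x_{i+1}-x_i)$ with $i\ge I+1$ vanishes, leaving $F(x)=-\psi'(\zeta)x_1+\sum_{i=1}^{I-1}\psi(x_{i+1}-x_i)+\psi(-x_I)$. These last $I$ terms form an average of $\psi$ evaluated at $I$ arguments that telescope to $-x_1$, so Jensen's inequality and convexity of $\psi$ give $\sum_{i=1}^{I-1}\psi(x_{i+1}-x_i)+\psi(-x_I)\ge I\psi(-x_1/I)$; then minimizing the convex scalar function $y\mapsto-\psi'(\zeta)y+I\psi(-y/I)$ over $y=x_1$ (optimum at $y=I\zeta$, again using invertibility of $\psi'$) yields $F(x)\ge I\bigl(\psi(\zeta)-\zeta\psi'(\zeta)\bigr)$. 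Subtracting $F^*$, using $I\le N/2$ together with the sign of $\psi(\zeta)-\zeta\psi'(\zeta)$, and plugging in $\zeta^2=B^2/N^3$ gives
\[
F(x)-F^*\ge (N-I)\bigl(\zeta\psi'(\zeta)-\psi(\zeta)\bigr)\ge\frac{N}{4\beta^2}\log\!\Bigl(1+\frac{HB^2\beta^2}{4N^3}\Bigr).
\]

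Finally I would split on $\beta$. If $\beta^2>4N^3/(HB^2)$ then the argument of the logarithm exceeds $2$, so the bound is at least $\tfrac{N\log 2}{4\beta^2}\ge\tfrac{N}{6\beta^2}$ (using $\log2\ge\tfrac23$); otherwise the argument lies in $[0,1]$, so $\log(1+t)\ge t/2$ there gives a bound of at least $\tfrac{N}{4\beta^2}\cdot\tfrac{HB^2\beta^2}{8N^3}=\tfrac{HB^2}{32N^2}\ge\tfrac{HB^2}{48N^2}$, which is the claim. I do not expect a genuine obstacle here; the only place that needs care is the bookkeeping in the core step --- verifying that forcing the coordinates above $I$ to zero collapses the telescoping chain to exactly $I$ active $\psi$-terms plus the boundary term $\psi(-x_I)$, so that the Jensen arguments sum to $-x_1$ --- along with the two elementary numeric inequalities $\log 2\ge\tfrac23$ and $\log(1+t)\ge t/2$ on $[0,1]$ that pin down the stated constants.
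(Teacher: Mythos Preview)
Your proposal is correct and follows essentially the same route as the paper's proof: identify $x^*$ from the first-order conditions, compute $F^*=N(\psi(\zeta)-\zeta\psi'(\zeta))$, observe that restricting to $\prg(x)\le I$ collapses $F$ to the same chain of length $I$, lower bound its minimum by $I(\psi(\zeta)-\zeta\psi'(\zeta))$, and then split on $\beta$. The only cosmetic difference is that the paper obtains the lower bound on the truncated problem by saying ``by the same argument as above'' (i.e.\ recomputing the minimizer of the length-$I$ chain directly), whereas you make the Jensen-plus-one-dimensional-minimization step explicit as in Lemma~\ref{lem:F-properties}; both yield the identical bound $F(x)-F^*\ge\tfrac{N}{4\beta^2}\log\bigl(1+\tfrac{HB^2\beta^2}{4N^3}\bigr)$ and the same case analysis.
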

\begin{proof}
The first-order optimality condition $\nabla F(x^*) = 0$ indicates
\begin{equation}
\begin{aligned}
[\nabla F(x^*)]_1 = 0 &= \psi'(-\zeta) - \psi'(x^*_2 - x^*_1) \\
[\nabla F(x^*)]_i = 0 &= \psi'(x^*_i - x^*_{i-1}) - \psi'(x^*_{i+1} - x^*_i) \qquad 1 < i < N \\
[\nabla F(x^*)]_N = 0 &= \psi'(x^*_N - x^*_{N-1}) + \psi'(x^*_N)
\end{aligned}
\end{equation}
So, $x^*_i - x^*_{i+1} = \zeta$ for $i < N$, and $x^*_N = \zeta$, therefore,
\begin{equation}
x^* = \zeta\sum_{i=1}^N (N-i+1)e_i
\end{equation}
The minimizer has norm
\begin{equation}
\nrm*{x^*}^2
= \zeta^2\sum_{i=1}^N (N-i+1)^2 
= \frac{\zeta^2}{6}\prn*{2N^3 + 3N^2 + N}
\end{equation}
We therefore choose $\zeta^2 = \frac{B^2}{N^3}$ so that $\nrm{x^*} \leq B$. In this case,
\begin{align}
\min_{x:\nrm{x}\leq B} F(x) 
&= F(x^*) \\
&= \psi'(-\zeta) x^*_1 + \psi(x^*_N) + \sum_{i=1}^{N-1}\psi\prn*{x^*_{i+1} - x^*_i} \\
&= N\zeta\psi'(-\zeta)  + N\psi\prn*{\zeta} \\
&= -N\zeta\frac{\sqrt{H}}{2\beta}\arctan\prn*{\frac{\sqrt{H}\beta \zeta}{2}} \\
&\qquad +  N\brk*{\frac{\sqrt{H}\zeta}{2\beta}\arctan\prn*{\frac{\sqrt{H}\beta \zeta}{2}} - \frac{1}{2\beta^2}\log\prn*{1+\frac{H\beta^2\zeta^2}{4}}}\nonumber \\
&= -\frac{N}{2\beta^2}\log\prn*{1+\frac{HB^2\beta^2}{4N^3}}
\end{align}
Now, consider some $x$ such that $\prg(x) = n \leq \frac{N}{2}$, and observe that
\begin{align}
F(x)
&= \psi'(-\zeta) x_1 + \psi(x_N) + \sum_{i=1}^{N-1}\psi\prn*{x_{i+1} - x_i} \\
&= \psi'(-\zeta) x_1 + \psi(x_n) + \sum_{i=1}^{n-1}\psi\prn*{x_{i+1} - x_i}
\end{align}
Therefore, by the same argument as above, 
\begin{equation}
F(x) \geq -\frac{n}{2\beta^2}\log\prn*{1+\frac{HB^2\beta^2}{4N^3}}
\end{equation}
and we conclude that
\begin{equation}
F(x) - F^* \geq \frac{N}{4\beta^2}\log\prn*{1+\frac{HB^2\beta^2}{4N^3}}
\end{equation}
From here, we consider two cases, if $\beta^2 > \frac{4N^3}{HB^2}$, then
\begin{equation}
F(x) - F^* > \frac{N}{4\beta^2}\log\prn*{2} > \frac{N}{6\beta^2}
\end{equation}
Otherwise, if $\beta^2 \leq \frac{4N^3}{HB^2}$ then we use that for $x \leq 1$, $\log(1+x) \geq \frac{x}{2}$ and conclude
\begin{equation}
F(x) - F^* > \frac{N}{4\beta^2} > \frac{N}{6\beta^2}\cdot \frac{HB^2\beta^2}{8N^3} = \frac{HB^2}{48N^2}
\end{equation}
This completes the proof.
\end{proof}

We are now ready to prove a lower bound in terms of $\beta$, which Theorem \ref{thm:bounded-third} instantiates for different constraints on the objective:
\begin{lemma}\label{lem:beta-lower-bound}
For any $H,B,\sigma,K,R,\beta > 0$ and any $M \geq 2$, there exists a convex, $H$-smooth objective $F$ with $\nrm{x^*}\leq B$ and a stochastic gradient oracle $g$ with $\E\nrm{g(x) - \nabla F(x)}^2 \leq \sigma^2$ for all $x$ such that with probability at least $\frac{1}{2}$, all of the oracle queries, $\crl{x^m_{k,r}}$, made by any distributed zero-respecting intermittent communication algorithm have suboptimality
\[
\min_{m,k,r}F(x^m_{k,r}) - F^* 
\geq  c\cdot\brk*{\frac{HB^2}{K^2R^2} + \min\crl*{\frac{\sigma B}{\sqrt{MKR}},HB^2} + \min\crl*{\frac{HB^2}{R^2 \log^2 M},\, \frac{\beta^4 \sigma^4 B^2}{H K^2R^2},\, \frac{\sigma B}{\sqrt{KR}}}}
\]
Furthermore, the objective $F$ is simultaneously $\beta$-self-concordant, $\frac{\sqrt{H}\beta}{2}$-quasi-self-concordant, and has $\sup_{x,u} \abs{\nabla^3 F(x)[u,u,u]} \leq \frac{4H^{3/2}\beta}{3}\nrm{u}^3$.
\end{lemma}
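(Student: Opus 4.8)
The plan is to superimpose two hard instances. The first, built from the zero-chain construction \eqref{eq:def-F5}, accounts for the $HB^2/(K^2R^2)$ term and the final $\min$ term; the second, a high-dimensional quadratic, accounts for the statistical term $\min\{\sigma B/\sqrt{MKR},\,HB^2\}$. I would arrange each so that \emph{every} query is bad for it with probability at least $3/4$, take the hard instance to be their direct sum, and conclude by a union bound.

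For the chain part, I would set $\tilde\beta=\min\{\beta,\beta^*\}$, where $\beta^*$ is the value chosen exactly as in the proof of Theorem~\ref{thm:lower-bound} to balance progress against smoothness --- concretely, $p^*=\min\{1,\sqrt{HB}/(\sqrt\sigma K^{3/4}R^{3/4})\}$, $N^*\asymp\min\{KR,\,KRp^*+R\log M\}$, $\beta^*=2(N^*)^{3/2}/(\sqrt H B)$ --- then instantiate \eqref{eq:def-F5} with parameter $\tilde\beta$, take $p=\max\{2/K,\,\pi^2H/(\pi^2H+4\sigma^2\tilde\beta^2)\}$ with the oracle \eqref{eq:def-g} at half the variance budget, set $N$ equal to (the integer ceiling of) the progress bound of Lemma~\ref{lem:progress}, and put $\zeta^2=B^2/(2N^3)$. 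Lemma~\ref{lem:F-properties5} then gives convexity, $H$-smoothness, $\tilde\beta$-self-concordance, $\tfrac{\sqrt H\tilde\beta}{2}$-quasi-self-concordance, and the $\tfrac{4H^{3/2}\tilde\beta}{3}$ third-derivative bound; since $\tilde\beta\le\beta$ and each of these conditions only weakens as its constant grows, the stated bounds (with $\beta$) follow. Lemma~\ref{lem:suboptimality} gives $\|x^*\|\le B/\sqrt2$ and Lemma~\ref{lem:stochastic-gradient-oracle} gives an unbiased oracle of variance $\le\sigma^2/2$. By Lemma~\ref{lem:progress}, strengthening the final Markov step to $\tfrac14$ (which doubles $N$), with probability $\ge\tfrac34$ every query has progress $\le N/2$, and then Lemma~\ref{lem:suboptimality} yields $\min_{m,k,r}F(x^m_{k,r})-F^*\gtrsim\min\{N/\tilde\beta^2,\,HB^2/N^2\}$ on that event. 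It remains to check algebraically that this dominates $\tfrac{HB^2}{K^2R^2}+\min\{\tfrac{HB^2}{R^2\log^2M},\,\tfrac{\beta^4\sigma^4B^2}{HK^2R^2},\,\tfrac{\sigma B}{\sqrt{KR}}\}$: after substituting the value of $N$ (hence $q=\pi^2H/(\pi^2H+4\sigma^2\tilde\beta^2)$), the $HB^2/N^2$ branch already yields $\tfrac{HB^2}{K^2R^2}$ together with $\min\{\tfrac{HB^2}{R^2\log^2M},\tfrac{\sigma^4\tilde\beta^4B^2}{HK^2R^2}\}$ via $1/q^2\gtrsim 1+\sigma^4\tilde\beta^4/H^2$, while the truncation $\tilde\beta=\min\{\beta,\beta^*\}$ is exactly what recovers the $\sigma B/\sqrt{KR}$ piece and keeps the bound from collapsing when $\beta$ is very large (the regime in which the bare chain is essentially quadratic and useless). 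Splitting on $\tilde\beta=\beta$ versus $\tilde\beta=\beta^*$ reduces this to a finite, if tedious, case check.

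For the statistical part, I would take $F_2(y)=\tfrac a2\|y\|^2+b\langle v,y\rangle$ on $\mathbb R^{d'}$ with $d'\asymp\log(MKR)$, $v$ uniform on the sphere, $a=\Theta(\min\{H,\sigma/(B\sqrt{MKR})\})$, and $b$ small enough that $MKR\,b^2/\sigma^2$ is a small constant (equivalently $b=\Theta(\min\{aB,\sigma/\sqrt{MKR}\})$), with oracle $g_2(y)=\nabla F_2(y)+z$, $z\sim\mathcal N(0,\tfrac{\sigma^2}{2d'}I)$. This is convex, $a$-smooth (hence $H$-smooth), quadratic (so $0$-self-concordant, harmless for the direct sum), has $\|y^*\|\le B/\sqrt2$, and its oracle has variance $\sigma^2/2$. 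Any iterate of $F_2$-suboptimality below $\epsilon:=b^2/(8a)=\Theta(\min\{\sigma B/\sqrt{MKR},HB^2\})$ must lie within $\ell_2$-distance $\tfrac b{2a}$ of $-\tfrac ba v$, i.e.\ must pin down the direction of $v$ to fixed accuracy; but the information about $v$ in the $MKR$ Gaussian observations (signal-to-noise $b^2/\sigma^2$ per coordinate) is $O\big(d'\log(1+MKR\,b^2/\sigma^2)\big)=o(d')$ whereas that accuracy requires $\Omega(d')$ bits, so by a standard Fano argument --- together with the fact that $MKR$ fixed directions cover only an $e^{-\Omega(d')}$ fraction of the sphere --- with probability $\ge\tfrac34$ no query reaches accuracy $\epsilon$, i.e.\ all iterates have $F_2$-suboptimality $\ge\epsilon$.

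Finally I would let $F=F_1\oplus F_2$ with oracle $g=(g_1,g_2)$ using independent randomness. Then $F$ is convex and $H$-smooth, $\|x^*\|\le\sqrt{B^2/2+B^2/2}=B$, $g$ is unbiased with variance $\le\sigma^2$, and, since along any line $f'''=f_1'''$ while $f''\ge f_1''$, the self-concordance, quasi-self-concordance, and $\nabla^3$ bounds for $F$ reduce to those of $F_1$ and so hold with constant $\tilde\beta\le\beta$. Because $F$ and $g$ decouple across the two coordinate blocks, the first-block components of the queries of any distributed zero-respecting algorithm form a distributed zero-respecting interaction with $g_1$, so Part~1 applies with good-event probability $\ge\tfrac34$; Part~2's argument conditions on the first-block transcript and likewise gives probability $\ge\tfrac34$. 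On the intersection (probability $\ge\tfrac12$), every query satisfies $F(x^m_{k,r})-F^*=(F_1-F_1^*)+(F_2-F_2^*)\ge\epsilon_1+\epsilon_2$, which is a constant times the asserted bound. I expect the two genuinely nontrivial points to be the parameter case analysis in Part~1 (particularly the $\sigma B/\sqrt{KR}$ term and the role of $\tilde\beta=\min\{\beta,\beta^*\}$) and making the ``every iterate is bad with constant probability'' statistical lower bound of Part~2 fully rigorous.
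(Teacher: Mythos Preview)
Your approach is correct and, for the chain part, essentially coincides with the paper's: the paper also instantiates \eqref{eq:def-F5} with the given $\beta$, invokes Lemmas~\ref{lem:F-properties5}, \ref{lem:stochastic-gradient-oracle}, \ref{lem:progress}, \ref{lem:suboptimality} in exactly the order you outline, and handles the ``$\beta$ too large'' regime by passing to a smaller $\beta'<\beta$ satisfying $\beta'^2=4N(\beta')^3/(HB^2)$ --- which is precisely your $\tilde\beta=\min\{\beta,\beta^*\}$ mechanism, just parametrized differently. The case analysis you flag as ``tedious'' is the same one the paper does explicitly via \eqref{eq:lower-bound-thm-eq3} and \eqref{eq:lower-bound-thm-eq4}.

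The genuine difference is the statistical term. The paper does \emph{not} take a direct sum; it simply cites the separate one-dimensional instance of Lemma~\ref{lem:statistical-term-lower-bound} and combines by the usual ``max of two lower bounds $\gtrsim$ their sum'' move. Your direct-sum construction with a random spherical direction and a Fano argument is more work, but it buys something real: the lemma is stated in the strong form ``with probability $\geq 1/2$, \emph{every} oracle query has suboptimality $\geq\cdots$,'' and the paper's one-dimensional quadratic does not literally deliver that (an algorithm may query both a positive and a negative point, so for either sign of the linear term some iterate is good). Your high-dimensional version, with $d'\asymp\log(MKR)$ and the covering/Fano bound, is the natural way to get the uniform-over-iterates statement cleanly. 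So: same skeleton for the main term, and a more careful --- and more honest --- treatment of the statistical add-on on your side.
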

\begin{proof}
By Lemma \ref{lem:F-properties5}, $F$ as defined in \eqref{eq:def-F5} is $H$-smooth, convex, $\beta$-self-concordant, $\frac{\sqrt{H}\beta}{2}$-quasi-self-concordant, and has $\sup_{x,u} \abs{\nabla^3 F(x)[u,u,u]} \leq \frac{4H^{3/2}\beta}{3}\nrm{u}^3$. Furthermore, by Lemma \ref{lem:stochastic-gradient-oracle}, $g$ as defined in \eqref{eq:def-g} is unbiased and has variance bounded by $\sigma^2$ for the choice of $p$ used in Lemma \ref{lem:progress}. Finally, when we choose $\zeta^2 = \frac{B^2}{N^3}$, the minimizer of $F$ has norm $\nrm{x^*} \leq B$ by Lemma \ref{lem:suboptimality}. Therefore, the objective $F$ and stochastic gradient oracle $g$ are suitable for the lower bound.

By Lemma \ref{lem:progress}, with probability at least $\frac{1}{2}$, all of the iterates of any distributed zero-respecting intermittent communication algorithm will have progress at most
\begin{equation}
\max_{m,k,r}\prg(x^m_{k,r}) \leq \min\crl*{RK,\,\max\crl*{48 R \log M,\, \frac{4\pi^2 HKR}{\pi^2 H + 8\sigma^2 \beta^2}}}
\end{equation}
For the rest of the proof, we condition on this event and set
\begin{equation}
N = \min\crl*{2KR,\,\max\crl*{96 R \log M,\, \frac{8\pi^2 HKR}{\pi^2 H + 8\sigma^2 \beta^2}}}
\end{equation}
so that $\max_{m,k,r}\prg(x^m_{k,r}) \leq \frac{N}{2}$. By Lemma \ref{lem:suboptimality}, this means that 
\begin{equation}
\min_{m,k,r}F(x^m_{k,r}) - F^* \geq \begin{cases}
\frac{N}{6\beta^2} & \beta^2 > \frac{4N^3}{HB^2} \\
\frac{HB^2}{48N^2} & \beta^2 \leq \frac{4N^3}{HB^2}
\end{cases}
\end{equation}
Thus, if $\beta^2 \leq \frac{4N^3}{HB^2}$, then
\begin{equation}
\min_{m,k,r}F(x^m_{k,r}) - F^* 
\geq \frac{HB^2}{48N^2} 
= \frac{HB^2}{48\min\crl*{4K^2R^2,\,\max\crl*{9216 R^2 \log^2 M,\, \frac{64\pi^4 H^2K^2R^2}{(\pi^2 H + 8\sigma^2 \beta^2)^2}}}} \label{eq:lower-bound-thm-eq1}
\end{equation}
Therefore, there is a universal constant $c$ such that
\begin{equation}\label{eq:lower-bound-thm-eq3}
\min_{m,k,r}F(x^m_{k,r}) - F^* 
\geq c\cdot\prn*{\frac{HB^2}{K^2R^2} + \min\crl*{\frac{HB^2}{R^2 \log^2 M},\, \frac{\sigma^4 \beta^4 B^2}{HK^2R^2}}}
\end{equation}

On the other hand, if $\beta^2 > \frac{4N^3}{HB^2}$, since $N = N(\beta)$ is a non-increasing function of $\beta$, we can always instantiate the lower bound in terms of $\beta' < \beta$ such that $\beta'^2 = \frac{4N(\beta')^3}{HB^2}$. With this choice, in light of \eqref{eq:lower-bound-thm-eq1}, there is a universal constant $c$ such that
\begin{equation}
\min_{m,k,r}F(x^m_{k,r}) - F^* 
\geq  c\cdot\brk*{\frac{HB^2}{K^2R^2} + \min\crl*{\frac{HB^2}{R^2 \log^2 M},\, \frac{(\pi^2 H + 8\sigma^2 {\beta'}^2)^2B^2}{HK^2R^2}}}\label{eq:lower-bound-thm-eq2}
\end{equation}
Furthermore, with our choice of $\beta'$,
\begin{align}
{\beta'}^2 
&= \frac{4N(\beta')^3}{HB^2} \\
&= \frac{4}{HB^2}\min\crl*{8K^3R^3,\,\max\crl*{96^3 R^3 \log^3 M,\, \frac{8^3\pi^6 H^3K^3R^3}{(\pi^2 H + 8\sigma^2 {\beta'}^2)^3}}} \\
&\geq \frac{4K^3R^3}{HB^2}\min\crl*{8,\,\frac{8^3\pi^6 H^3}{(\pi^2 H + 8\sigma^2 {\beta'}^2)^3}} \\
&\geq \frac{32K^3R^3}{HB^2}\frac{\pi^6 H^3}{(\pi^2 H + 8\sigma^2 {\beta'}^2)^3}
\end{align}
Therefore,
\begin{align}
\frac{1}{8\sigma^2}(\pi^2 H + 8\sigma^2 {\beta'}^2)^4 &\geq {\beta'}^2(\pi^2 H + 8\sigma^2 {\beta'}^2)^3 \geq \frac{32\pi^6H^2K^3R^3}{B^2} \\
\implies (\pi^2 H + 8\sigma^2 {\beta'}^2)^2 &\geq \frac{16\pi^3 H\sigma K^{3/2}R^{3/2}}{B}
\end{align}
In light of \eqref{eq:lower-bound-thm-eq2}, we conclude that
\begin{equation}
\min_{m,k,r}F(x^m_{k,r}) - F^* 
\geq  c\cdot\brk*{\frac{HB^2}{K^2R^2} + \min\crl*{\frac{HB^2}{R^2 \log^2 M},\, \frac{\sigma B}{\sqrt{KR}}}}\label{eq:lower-bound-thm-eq4}
\end{equation}
Combining Lemma \ref{lem:statistical-term-lower-bound} with \eqref{eq:lower-bound-thm-eq3} and \eqref{eq:lower-bound-thm-eq4} completes the proof. We note that the lower bound in Lemma \ref{lem:statistical-term-lower-bound} is achieved by a quadratic hard instance, which is $0$-self-concordant, $0$-quasi-self-concordant, and has $0$-Lipschitz Hessian.
\end{proof}

\bdedthird*
\begin{proof}
By Lemma \ref{lem:beta-lower-bound}, the objective $F$ and stochastic gradient oracle $g$ satisfy the necessary conditions with $F$ having $\nrm{\nabla^3 F(x)} \leq \frac{4H^{3/2}\beta}{3}$. In terms of $\beta$ and the other problem parameters, the lower bound is then
\begin{multline}
\min_{m,k,r}F(x^m_{k,r}) - F^* 
\geq  c\cdot\brk*{\frac{HB^2}{K^2R^2} + \min\crl*{\frac{\sigma B}{\sqrt{MKR}},HB^2} + \min\crl*{\frac{HB^2}{R^2 \log^2 M},\, \frac{\beta^4 \sigma^4 B^2}{H K^2R^2},\, \frac{\sigma B}{\sqrt{KR}}}}
\end{multline}
with probability $\frac{1}{2}$.
Below, we will use the fact that we can instantiate the lower bound in terms of any $H' \leq H$ without invalidating the result since an $H'$-smooth objective is also $H$-smooth. This allows us to maximize the lower bound over $H' \leq H$ to achieve a tighter result. Choosing $\beta = \frac{3Q}{4H^{3/2}}$ ensures that $\nrm{\nabla^3 F(x)} \leq Q$, i.e.~$\nabla^2 F$ is $Q$-Lipschitz, so
\begin{multline}
\min_{m,k,r}F(x^m_{k,r}) - F^* 
\geq  c\cdot\brk*{\frac{HB^2}{K^2R^2} + \min\crl*{\frac{\sigma B}{\sqrt{MKR}},HB^2} + \min\crl*{\frac{HB^2}{R^2 \log^2 M},\, \frac{Q^4 \sigma^4 B^2}{H^7 K^2R^2},\, \frac{\sigma B}{\sqrt{KR}}}}
\end{multline}
In the event that the $\frac{Q^4 \sigma^4 B^2}{H^7 K^2R^2}$ term is the minimizer, we can take $H' = \frac{\sqrt{Q\sigma}\log^{1/4} M}{K^{1/4}} \leq H$ to conclude
\begin{multline}
\min_{m,k,r}F(x^m_{k,r}) - F^* \\
\geq  c\cdot\brk*{\frac{HB^2}{K^2R^2} + \min\crl*{\frac{\sigma B}{\sqrt{MKR}},HB^2} + \min\crl*{\frac{HB^2}{R^2 \log^2 M},\, \frac{\sqrt{Q \sigma} B^2}{K^{1/4}R^2\log^{7/4} M},\, \frac{\sigma B}{\sqrt{KR}}}}
\end{multline}
This completes the proof.
\end{proof}

\begin{theorem}
For any $H,B,\sigma,Q,K,R > 0$ and any $M \geq 2$, there exists a convex, $H$-smooth objective $F$ with $\nrm{x^*}\leq B$ and a stochastic gradient oracle $g$ with $\E\nrm{g(x) - \nabla F(x)}^2 \leq \sigma^2$ for all $x$, such with probability at least $\frac{1}{2}$ all of the oracle queries $\crl{x^m_{k,r}}$ made by any distributed-zero-respecting intermittent communication algorithm will have suboptimality lower bounded as follows: \\
When the objective, $F$, is required to be $Q$-self-concordant
\[
\min_{m,k,r}F(x^m_{k,r}) - F^*
\geq  c\cdot\brk*{\frac{HB^2}{K^2R^2} + \min\crl*{\frac{\sigma B}{\sqrt{MKR}},HB^2} + \min\crl*{\frac{HB^2}{R^2 \log^2 M},\, \frac{Q^2 \sigma^2 B^2}{KR^2\log M},\, \frac{\sigma B}{\sqrt{KR}}}}
\]
When the objective, $F$, is required to be $Q$-quasi-self-concordant
\[
\min_{m,k,r}F(x^m_{k,r}) - F^*
\geq  c\cdot\brk*{\frac{HB^2}{K^2R^2} + \min\crl*{\frac{\sigma B}{\sqrt{MKR}},HB^2} + \min\crl*{\frac{HB^2}{R^2 \log^2 M},\, \frac{Q \sigma B^2}{\sqrt{K}R^2\log^{3/2} M},\, \frac{\sigma B}{\sqrt{KR}}}}
\]
\end{theorem}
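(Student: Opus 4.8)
The plan is to obtain both parts of the theorem as direct corollaries of Lemma~\ref{lem:beta-lower-bound}, mirroring how Theorem~\ref{thm:bounded-third} was deduced from it. Lemma~\ref{lem:beta-lower-bound} already supplies a convex, $H$-smooth instance that is simultaneously $\beta$-self-concordant and $\tfrac{\sqrt{H}\beta}{2}$-quasi-self-concordant and forces suboptimality $c\,[\,\tfrac{HB^2}{K^2R^2} + \min\{\tfrac{\sigma B}{\sqrt{MKR}},HB^2\} + \min\{\tfrac{HB^2}{R^2\log^2 M},\tfrac{\beta^4\sigma^4B^2}{HK^2R^2},\tfrac{\sigma B}{\sqrt{KR}}\}\,]$ with probability at least $\tfrac12$ against any distributed zero-respecting algorithm. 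The only levers to pull are (i) the choice of $\beta$, which I tie to the target third-order constraint using the exact constants of Lemma~\ref{lem:F-properties5}, and (ii) the fact that an $H'$-smooth instance is also $H$-smooth, so the lemma may be invoked with any $H'\le H$ and then the best such $H'$ chosen.

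For the $Q$-self-concordant case, Lemma~\ref{lem:F-properties5} gives self-concordance constant exactly $\beta$, so I take $\beta = Q$ (regardless of $H'$); the $\beta$-dependent term of Lemma~\ref{lem:beta-lower-bound} becomes $\tfrac{Q^4\sigma^4B^2}{H'K^2R^2}$, and balancing it against $\tfrac{H'B^2}{R^2\log^2 M}$ selects $H' = \tfrac{Q^2\sigma^2\log M}{K}$, at which both terms equal $\tfrac{Q^2\sigma^2B^2}{KR^2\log M}$ — note this common value is the geometric mean of the two competing terms, hence independent of $H$. For the $Q$-quasi-self-concordant case the construction is $\tfrac{\sqrt{H'}\beta}{2}$-quasi-self-concordant, so I take $\beta = \tfrac{2Q}{\sqrt{H'}}$; the $\beta$-dependent term becomes $\propto \tfrac{Q^4\sigma^4B^2}{(H')^3K^2R^2}$, balancing against $\tfrac{H'B^2}{R^2\log^2M}$ selects $H'\propto \tfrac{Q\sigma\sqrt{\log M}}{\sqrt K}$, and both terms then equal, up to constants, $\tfrac{Q\sigma B^2}{\sqrt K R^2\log^{3/2}M}$, again $H$-independent. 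When the prescribed $H'$ is at most $H$ I use it directly; when it exceeds $H$ I fall back to $H'=H$, and since the common value is $H$-independent while $\tfrac{HB^2}{R^2\log^2 M}$ shrinks with $H$, in that regime $\tfrac{HB^2}{R^2\log^2 M}$ drops below the \emph{nice} term, which is therefore inactive in the theorem's inner minimum; the unreduced instance then already delivers $\min\{\tfrac{HB^2}{R^2\log^2M},\tfrac{\sigma B}{\sqrt{KR}}\}$, which equals that minimum.

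The step I expect to be the main obstacle is the bookkeeping around the smoothness-reduction trick: passing from $H$ to $H'<H$ also shrinks the leading term $\tfrac{H'B^2}{K^2R^2}$ and the statistical cap $\min\{\tfrac{\sigma B}{\sqrt{MKR}},H'B^2\}$, so the reduced instance does not by itself reproduce the first two terms of the stated bound at the original $H$. This is handled as in the proof of Lemma~\ref{lem:beta-lower-bound}: each of the three additive terms is witnessed, up to a universal constant, by its own member of the family — the unreduced $\psi$-instance of Lemma~\ref{lem:beta-lower-bound} for $\tfrac{HB^2}{K^2R^2}$, the quadratic of Lemma~\ref{lem:statistical-term-lower-bound} (which is $0$-self-concordant and $0$-quasi-self-concordant, hence admissible) for $\min\{\tfrac{\sigma B}{\sqrt{MKR}},HB^2\}$, and the suitably reduced $\psi$-instance for the third term — and the worst-case guarantee over this finite family is at least a third of the sum of the three. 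It remains to verify that the hypothesis $p\ge \pi^2H/(\pi^2H+8\sigma^2\beta^2)$ of Lemma~\ref{lem:stochastic-gradient-oracle} and the progress bound of Lemma~\ref{lem:progress} stay consistent with the value of $N$ chosen for each $\beta$, but this is identical to the corresponding verification inside the proof of Lemma~\ref{lem:beta-lower-bound} and introduces no new difficulty.
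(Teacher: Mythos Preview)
Your proposal is correct and follows essentially the same approach as the paper: invoke Lemma~\ref{lem:beta-lower-bound}, set $\beta=Q$ (respectively $\beta=2Q/\sqrt{H'}$) to hit the target self-concordance (respectively quasi-self-concordance) constant, and then reduce the smoothness parameter to the balancing value $H'=Q^2\sigma^2\log M/K$ (respectively $H'=Q\sigma\sqrt{\log M}/\sqrt{K}$). You are in fact more careful than the paper about the bookkeeping---the paper silently writes the first two additive terms with the original $H$ after passing to $H'$, whereas you explicitly justify this by combining the $H$-instance, the $H'$-instance, and the quadratic of Lemma~\ref{lem:statistical-term-lower-bound} and taking the worst case---but this is a refinement of the same argument rather than a different route.
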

\begin{proof}
By Lemma \ref{lem:beta-lower-bound}, the objective $F$ and stochastic gradient oracle $g$ satisfy the necessary conditions with $F$ being $\beta$-self concordant and $\frac{\sqrt{H}\beta}{2}$-quasi-self-concordant. In terms of $\beta$ and the other problem parameters, the lower bound is then
\begin{multline}
\min_{m,k,r}F(x^m_{k,r}) - F^* 
\geq  c\cdot\brk*{\frac{HB^2}{K^2R^2} + \min\crl*{\frac{\sigma B}{\sqrt{MKR}},HB^2} + \min\crl*{\frac{HB^2}{R^2 \log^2 M},\, \frac{\beta^4 \sigma^4 B^2}{H K^2R^2},\, \frac{\sigma B}{\sqrt{KR}}}}
\end{multline}
with probability $\frac{1}{2}$.
Below, we will use the fact that we can instantiate the lower bound in terms of any $H' \leq H$ without invalidating the result since an $H'$-smooth objective is also $H$-smooth. This allows us to maximize the lower bound over $H' \leq H$ to achieve a tighter result. We proceed by considering the two cases separately.

\paragraph{Self-Concordance:} Choosing $\beta = Q$ ensures that $F$ is $Q$-self-concordant, so 
\begin{multline}
\min_{m,k,r}F(x^m_{k,r}) - F^* 
\geq  c\cdot\brk*{\frac{HB^2}{K^2R^2} + \min\crl*{\frac{\sigma B}{\sqrt{MKR}},HB^2} + \min\crl*{\frac{HB^2}{R^2 \log^2 M},\, \frac{Q^4 \sigma^4 B^2}{H K^2R^2},\, \frac{\sigma B}{\sqrt{KR}}}}
\end{multline}
In the event that the $\frac{Q^4 \sigma^4 B^2}{H K^2R^2}$ term is the minimizer, we can take $H' = \frac{Q^2\sigma^2 \log M}{K} \leq H$ to conclude that 
\begin{multline}
\min_{m,k,r}F(x^m_{k,r}) - F^* 
\geq  c\cdot\brk*{\frac{HB^2}{K^2R^2} + \min\crl*{\frac{\sigma B}{\sqrt{MKR}},HB^2} + \min\crl*{\frac{HB^2}{R^2 \log^2 M},\, \frac{Q^2 \sigma^2 B^2}{KR^2\log M},\, \frac{\sigma B}{\sqrt{KR}}}}
\end{multline}

\paragraph{Quasi-self-Concordance:} Choosing $\beta = \frac{2Q}{\sqrt{H}}$ ensures that $F$ is $Q$-quasi-self-concordant, so
\begin{multline}
\min_{m,k,r}F(x^m_{k,r}) - F^* 
\geq  c\cdot\brk*{\frac{HB^2}{K^2R^2} + \min\crl*{\frac{\sigma B}{\sqrt{MKR}},HB^2} + \min\crl*{\frac{HB^2}{R^2 \log^2 M},\, \frac{Q^4 \sigma^4 B^2}{H^3 K^2R^2},\, \frac{\sigma B}{\sqrt{KR}}}}
\end{multline}
In the event that the $\frac{Q^4 \sigma^4 B^2}{H^3 K^2R^2}$ term is the minimizer, we can take $H' = \frac{Q \sigma \log^{1/2} M}{\sqrt{K}} \leq H$ to conclude that
\begin{multline}
\min_{m,k,r}F(x^m_{k,r}) - F^* 
\geq  c\cdot\brk*{\frac{HB^2}{K^2R^2} + \min\crl*{\frac{\sigma B}{\sqrt{MKR}},HB^2} + \min\crl*{\frac{HB^2}{R^2 \log^2 M},\, \frac{Q \sigma B^2}{\sqrt{K}R^2\log^{3/2} M},\, \frac{\sigma B}{\sqrt{KR}}}}
\end{multline}
This completes the proof.
\end{proof}

\end{document}